\documentclass{article}
\usepackage[utf8]{inputenc}
\usepackage{bbm}
\usepackage{xcolor}
\usepackage{comment}
\usepackage{algorithm}
\usepackage{amsmath}
\usepackage{amsthm}
\usepackage{amssymb}
\usepackage{algpseudocode}
\usepackage{hyperref}
\usepackage{xcolor}
\usepackage{graphicx}
\usepackage{subcaption}
\usepackage[font=scriptsize]{caption}
\usepackage{color}
\hypersetup{
    colorlinks,
    linkcolor={blue!50!black},
    citecolor={red!50!black},
    urlcolor={green!50!black}
}

\algnewcommand\algorithmicforeach{\textbf{for each}}
\algdef{S}[FOR]{ForEach}[1]{\algorithmicforeach\ #1\ \algorithmicdo}

\newcommand{\ldot}[2]{\left\langle #1, #2\right\rangle}
\newcommand{\bX}{\mathbf{X}}

\newcommand{\calX}{\mathcal{X}}

\newcommand{\calD}{\mathcal{D}}
\newcommand{\calB}{\mathcal{B}}
\newcommand{\calA}{\mathcal{A}}

\newcommand{\calZ}{\mathcal{Z}}
\newcommand{\calL}{\mathcal{L}}

\newcommand{\calS}{\mathcal{S}}
\newcommand{\calF}{\mathcal{F}}
\newcommand{\cF}{\mathcal{F}}
\newcommand{\calG}{\mathbb{G}}
\newcommand{\calP}{\mathbb{P}}
\newcommand{\htheta}{\hat{\theta}}

\newcommand{\normal}{\mathsf{N}}
\DeclareMathOperator{\vol}{vol}
\DeclareMathOperator{\supp}{supp}
\DeclareMathOperator{\Var}{Var}

\newcommand{\E}{\mathbf{E}}
\newcommand{\reals}{\mathbb{R}}

\newtheorem{definition}{Definition}
\newtheorem{assumption}{Assumption}
\newtheorem{corollary}{Corollary}
\newtheorem{theorem}{Theorem}
\newtheorem{proposition}{Proposition}
\newtheorem{example}{Example}
\newtheorem{lemma}{Lemma}
\newtheorem{remark}{Remark}

\newcommand{\bb}[1]{\left[#1\right]}
\newcommand{\bp}[1]{\left(#1\right)}
\newcommand{\bc}[1]{\left\{#1\right\}}

\usepackage{enumitem}
\setlist[enumerate]{itemsep=-0.5mm}

\usepackage{natbib}



\usepackage[preprint]{neurips_2019}



\usepackage[utf8]{inputenc} 
\usepackage[T1]{fontenc}    
\usepackage{hyperref}       
\usepackage{url}            
\usepackage{booktabs}       
\usepackage{amsfonts}       
\usepackage{nicefrac}       
\usepackage{microtype}      

\title{Non-Parametric Inference Adaptive to Intrinsic Dimension}

%

\author{%
  Khashayar Khosravi
  \\
  Stanford University \\
  \texttt{khosravi@stanford.edu}
  \And
  Greg Lewis \\
  Microsoft Research \\
  \texttt{glewis@microsoft.com}
  \And
  Vasilis Syrgkanis \\
  Microsoft Research \\
  \texttt{vasy@microsoft.com}
}

\begin{document}

\maketitle

\begin{abstract}
We consider non-parametric estimation and inference of conditional moment models in high dimensions. We show that even when the dimension $D$ of the conditioning variable is larger than the sample size $n$, estimation and inference is feasible as long as the distribution of the conditioning variable has small intrinsic dimension $d$, as measured by locally low doubling measures. Our estimation is based on a sub-sampled ensemble of the $k$-nearest neighbors ($k$-NN) $Z$-estimator. We show that if the intrinsic dimension of the covariate distribution is equal to $d$, then the finite sample estimation error of our estimator is of order $n^{-1/(d+2)}$ and our estimate is $n^{1/(d+2)}$-asymptotically normal, irrespective of $D$. The sub-sampling size required for achieving these results depends on the unknown intrinsic dimension $d$. We propose an adaptive data-driven approach for choosing this parameter and prove that it achieves the desired rates. We discuss extensions and applications to heterogeneous treatment effect estimation.
\end{abstract}

\section{Introduction}

Many non-parametric estimation problems in econometrics and causal inference can be formulated as finding a parameter vector $\theta(x)\in \reals^p$ that is a solution to a set of conditional moment equations:
\begin{equation}
\textstyle{\E[\psi(Z;\theta(x)) | X=x] = 0\,,}
\end{equation}
when given $n$ i.i.d. samples $(Z_1, \ldots, Z_n)$ from the distribution of $Z$, where $\psi: \calZ \times \reals^p \rightarrow \reals^p$ is a known vector valued moment function, $\calZ$ is an arbitrary data space, $X \in \calX \subset \reals^D$ is the feature vector that is included $Z$. Examples include non-parametric regression\footnote{$Z=(X,Y)$, where $Y\in \reals^p$ is the dependent variable, and $\psi(Z;\theta(x))=Y-\theta(x)$.}, quantile regression\footnote{$Z=(X,Y)$ and $\psi(Z;\theta(x))=1\{Y\leq \theta(x)\} - \alpha$, for some $\alpha\in [0,1]$ that denotes the target quantile.}, heterogeneous treatment effect estimation\footnote{$Z=(X, T, Y)$, where $T\in\reals^p$ is a vector of treatments, and $\psi(Z;\theta(x))= (Y - \ldot{\theta(x)}{T})\,T$.}, instrumental variable regression\footnote{$Z=(X,T,W,Y)$, where $T\in \reals$ is a treatment, $W\in \reals$ an instrument and $\psi(Z;\theta(x))= (Y - \theta(x)\,T)\,W$.}, local maximum likelihood estimation\footnote{Where the distribution of $Z$ admits a known density $f(z;\theta(x))$ and $\psi(Z;\theta(x)) = \nabla_{\theta} \log(f(Z; \theta(x))$.} and estimation of structural econometric models (see e.g., \cite{Reiss2007} and examples in \cite{Chernozhukov2016locally,Chernozhukov2018plugin}). The study of such conditional moment restriction problems has a long history in econometrics (see e.g., \cite{Newey1993,Ai2003,Chen2009,Chernozhukov2015}). However, the majority of the literature assumes that the conditioning variable $X$ is low dimensional, i.e. $D$ is a constant as the sample size $n$ grows (see e.g., \cite{athey2019generalized}). High dimensional variants have primarily been analyzed under parametric assumptions on $\theta(x)$, such as sparse linear forms (see e.g., \cite{chernozhukov2018double}).
There are some papers that address the fully non-parametric setup (see e.g., \cite{lafferty2008,dasgupta2008random,kpotufe2011k,Biau2012,scornet2015}) but those are focused on the estimation problem, and do not address inference (i.e., constructing asymptotically valid confidence intervals).


The goal of this work is to address estimation and inference in conditional moment models with a high-dimensional conditioning variable. As is obvious without any further structural assumptions on the problem, the exponential in dimension rates of approximately $n^{1/D}$ (see e.g.,  \cite{Stone1982}) cannot be avoided. Thereby estimation is in-feasible even if $D$ grows very slowly with $n$. Our work, follows a long line of work in machine learning \citep{dasgupta2008random,kpotufe2011k,kpotufe2013adaptivity}, which is founded on the observation that in many practical applications, even though the variable $X$ is high-dimensional (e.g. an image), one typically expects that the coordinates of $X$ are highly correlated. The latter intuition is formally captured by assuming that the distribution of $X$ has a small doubling measure around the target point $x$. 

We refer to the latter notion of dimension, as the intrinsic dimension of the problem. Such a notion has been studied in the statistical machine learning literature, so as to establish fast estimation rates in high-dimensional kernel regression settings \citep{dasgupta2008random,kpotufe2011k,kpotufe2013adaptivity,Xue2018,Chen2018,Kim2018,jiang2017rates}. However, these works solely address the problem of estimation and do not characterize the asymptotic distribution of the estimates, so as to enable inference, hypothesis testing and confidence interval construction. Moreover, they only address the regression setting and not the general conditional moment problem and consequently do not extend to quantile regression, instrumental variable regression or treatment effect estimation. 

From the econometrics side, pioneering works of \cite{wager2017estimation,athey2019generalized} address estimation and inference of conditional moment models with all the aforementioned desiderata that are required for the application of such methodologies to social sciences, albeit in the low dimensional regime. In particular, \cite{wager2017estimation} consider regression and heterogeneous treatment effect estimation with a scalar $\theta(x)$ and prove $n^{1/D}$-asymptotic normality of a sub-sampled random forest based estimator and \cite{athey2019generalized} extend it to the general conditional moment settings.

These results have been extended and improved in multiple directions, such as improved estimation rates through local linear smoothing \cite{friedberg2018local}, robustness to nuisance parameter estimation error \cite{oprescu2018orthogonal} and improved bias analysis via sub-sampled nearest neighbor estimation \cite{fan2018dnn}. However, they all require low dimensional setting and the rate provided by the theoretical analysis is roughly $n^{-1/D}$, i.e. to get a confidence interval of length $\epsilon$ or an estimation error of $\epsilon$, one would need to collect $O(\epsilon^{-D})$ samples which is prohibitive in most target applications of machine learning based econometrics. 

Hence, there is a strong need to provide theoretical results that justify the success of machine learning estimators for doing inference, via their adaptivity to some low dimensional hidden structure in the data. \emph{Our work makes a first step in this direction and provides estimation and asymptotic normality results for the general conditional moment problem, where the rate of estimation and the asymptotic variance depend only on the intrinsic dimension, independent of the explicit dimension of the conditioning variable.}

Our analysis proceeds in four parts. First, we extend the results by \cite{wager2017estimation,athey2019generalized} on the asymptotic normality of sub-sampled kernel estimators to the high-dimensional, low intrinsic dimension regime and to vector valued parameters $\theta(x)$. Concretely, when given a sample $S=(Z_1, \ldots, Z_n)$, our estimator is based on the approach proposed in \cite{athey2019generalized} of solving a locally weighted empirical version of the conditional moment restriction
\begin{equation}
\hat{\theta}(x) \text{ solves}: \sum_{i=1}^n K(x, X_i, S)\, \psi(Z_i; \theta) = 0\,,
\end{equation}
where $K(x, X_i, S)$ captures proximity of $X_i$ to the target point $x$. The approach dates back to early work in statistics on local maximum likelihood estimation \citep{fan1998local,newey1994kernel,stone1977consistent,tibshirani1987local}. As in \cite{athey2019generalized}, we consider weights $K(x, X_i, S)$ that take the form of an average over $B$ base weights: $K(x, X_i, S) = \frac{1}{B}\sum_{b=1}^B K(x, X_i, S_b)\,1\{i\in S_b\}$, where each $K(x, X_i, S_b)$ is calculated based on a randomly drawn sub-sample $S_b$ of size $s<n$ from the original sample. We will typically refer to the function $K$ as the \emph{kernel}. In \cite{wager2017estimation,athey2019generalized} $K(x, X_i, S_b)$ is calculated by building a tree on the sub-sample, while in \cite{fan2018dnn} it is calculated based on the $1$-NN rule on the sub-sample.

Our main results are general estimation rate and asymptotic normality theorems for the estimator $\hat{\theta}(x)$ (see Theorems~\ref{thm:mse_rate} and \ref{thm:normality}), which
are stated in terms of two high-level assumptions, specifically an upper bound $\epsilon(s)$ on the rate at which the kernel ``shrinks'' and a lower bound $\eta(s)$ on the ``incrementality'' of the kernel.
Notably, the explicit dimension of the conditioning variable $D$ does not enter the theorem, so it suffices in what follows to show that $\epsilon(s)$ and $\eta(s)$ depend only on $d$ rather than $D$.

The shrinkage rate $\epsilon(s)$ is defined as the $\ell_2$-distance between the target point $x$ and the furthest point on which the kernel places positive weight $X_i$, when trained on a data set of $s$ samples, i.e.,
\begin{equation}
\epsilon(s) = \E\bb{\sup \{\|X_i-x\|_2: i\in S_b, K(x, X_i, S_b)>0, |S_b|=s\}}\,.
\end{equation}
The shrinkage rate of the kernel controls the bias of the estimate (small $\epsilon(s)$ implies low bias).
The sub-sampling size $s$ is a lever to trade off bias and variance; larger $s$ achieves smaller bias, since $\epsilon(s)$ is smaller, but increases the variance, since for any fixed $x$ the weights $K(x, X_i, S_b)$ will tend to concentrate on the same data points, rather than averaging over observations.  
Both estimation and asymptotic normality results require the bias to be controlled through the shrinkage rate.  

Incrementality of a kernel describes how much information is revealed about the weight of a sample $i$ solely by knowledge of $X_i$, and is captured by the second moment of the conditional expected weight
\begin{equation} 
\label{eqn:incrementality}
\eta(s)=\E\bb{\E\bb{K(x, X_i, S_b) | X_i}^2}\,.
\end{equation}
The incrementality assumption is used in the asymptotic normality proof to argue that the weights have sufficiently high variance that all data points have some influence on the estimate.
From the technical side, we use the H\'{a}jek projection to analyze our $U$-statistic estimator. Incrementality ensures that there is sufficiently weak dependence in the weights across a sequence of sub-samples and hence the central limit theorem applies. 
As discussed, the sub-sampling size $s$ can be used to control the variance of the weights, and so incrementality and shrinkage are related.
We make this precise, proving that incrementality can be lower bounded as a function of kernel shrinkage, so that having a sufficiently low shrinkage rate enables both estimation and inference.
These general results could be of independent interest beyond the scope of this work.

For the second part of our analysis, we specialize to the case where the base kernel is the $k$-NN kernel, for some constant $k$.  
We prove that both shrinkage and incrementality depend only on the intrinsic dimension $d$, rather than the explicit dimension $D$.
In particular, we show that $\epsilon(s) = O(s^{-1/d})$ and $\eta(s) = \Theta(1/s)$. These lead to our main theorem \emph{that the sub-sampled $k$-NN estimate achieves an estimation rate of order $n^{1/(d+2)}$ and is also $n^{1/(d+2)}$-asymptotically normal.} 

In the third part, we provide a closed form characterization of the asymptotic variance of the sub-sampled $k$-NN estimate, as a function of the conditional variance of the moments, which is defined as $\sigma^2(x) = \Var\bp{\psi(Z;\theta) \mid X=x}$. For example, for the $1$-NN kernel, the asymptotic variance is given by
\begin{equation*}
\Var(\hat{\theta}(x)) = \frac{\sigma^2(x) s^2}{n(2s-1)}\,.
\end{equation*} 
This strengthens prior results of \cite{fan2018dnn} and \cite{wager2017estimation}, which only proved the existence of an asymptotic variance without providing an explicit form (and thereby relied on bootstrap approaches for the construction of confidence intervals). Our tight characterization enables an easy construction of plugin normal-based intervals that only require a preliminary estimate of $\sigma(x)$. Our Monte Carlo study shows that such intervals provide very good finite sample coverage in a high dimensional regression setup (see Figure \ref{fig:distribution})\footnote{See Appendix \ref{app:sim} for detailed explanation of our simulations.}.

Finally in the last part, we discuss an adaptive data-driven approach for picking the sub-sample size $s$ so as to achieve estimation or asymptotic normality with rates that only depend on the unknown intrinsic dimension. This allows us to achieve near-optimal rates while adapting to the unknown intrinsic dimension of data (see Propositions \ref{thm:est-adapt} and \ref{thm:normal-adapt}). Figure~\ref{fig:coverage1d} depicts the performance of our adaptive approach compared to two benchmarks, one constructed based on theory for intrinsic dimension $d$ which may be unknown, and the other one constructed na\"ively based on the known but sub-optimal extrinsic dimension $D$. As it can be observed from this figure, setting $s$ based on intrinsic dimension $d$ allows us to build more accurate and smaller confidence intervals, which is crucial for drawing inference in the high-dimensional finite sample regime. Our adaptive approach uses samples to pick $s$ very close to the value suggested by our theory and therefore leads to a compelling finite sample coverage\footnote{A preliminary implementation of our code is available via \href{http://github.com/khashayarkhv/np_inference_intrinsic}{http://github.com/khashayarkhv/np\_inference\_intrinsic}.}. 

Our results shed some light on the importance of using adaptive machine learning based estimators, such as nearest neighbor based estimates, when performing estimation and inference in high-dimensional econometric problems. Such estimators address the curse of dimensionality by adapting to a priori unknown latent structure in the data. Moreover, coupled with the powerful sub-sampling based averaging approach, such estimators can maintain their adaptivity, while also satisfying asymptotic normality and thereby enabling asymptotically valid inference; a property that is crucial for embracing such approaches in econometrics and causal inference.

\begin{figure}[h]
    \centering
        \includegraphics[height = 14cm]{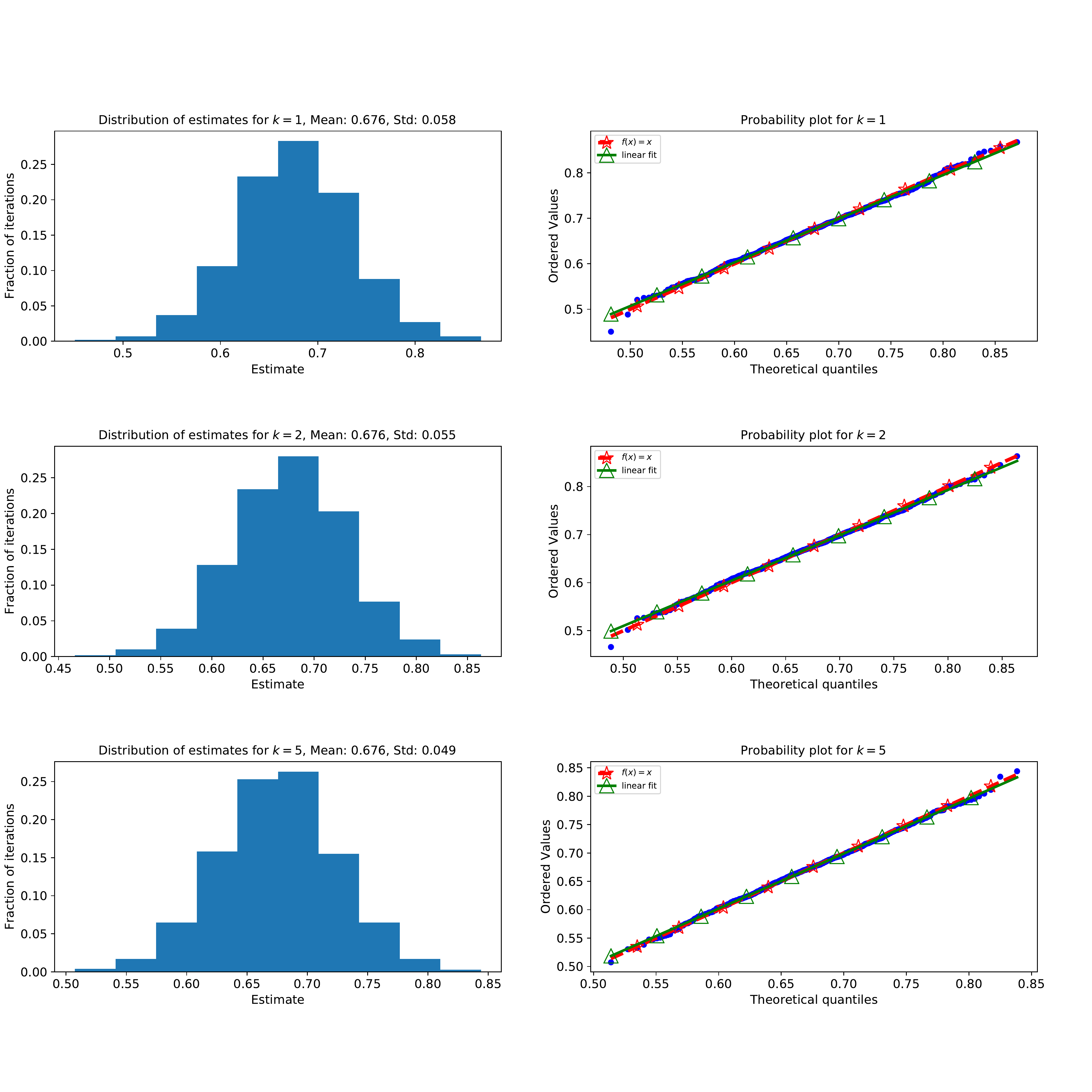}
        \caption{Left: distribution of estimates over $1000$ Monte Carlo runs for $k=1,2,5$. Right: the quantile-quantile plot when comparing to the theoretical asymptotic normal distribution of estimates stemming from our characterization, whose means are $0.676, 0.676,$ and $0.676$ for $k=1,2,5$, respectively. Standard deviations are $0.058, 0.055,$ and $0.049$ for $k=1,2,5$ respectively. $n=20000$, $D=20$, $d=2$, $\E[Y|X] = \frac{1}{1+\exp\{-3 X[0]\}}$, $\sigma=1$. Test point: $x[0]\approx 0.245$, $\E[Y|X=x]\approx 0.676$.}
        \label{fig:distribution}
\end{figure}

\begin{figure}
\centering
\includegraphics[height=14cm]{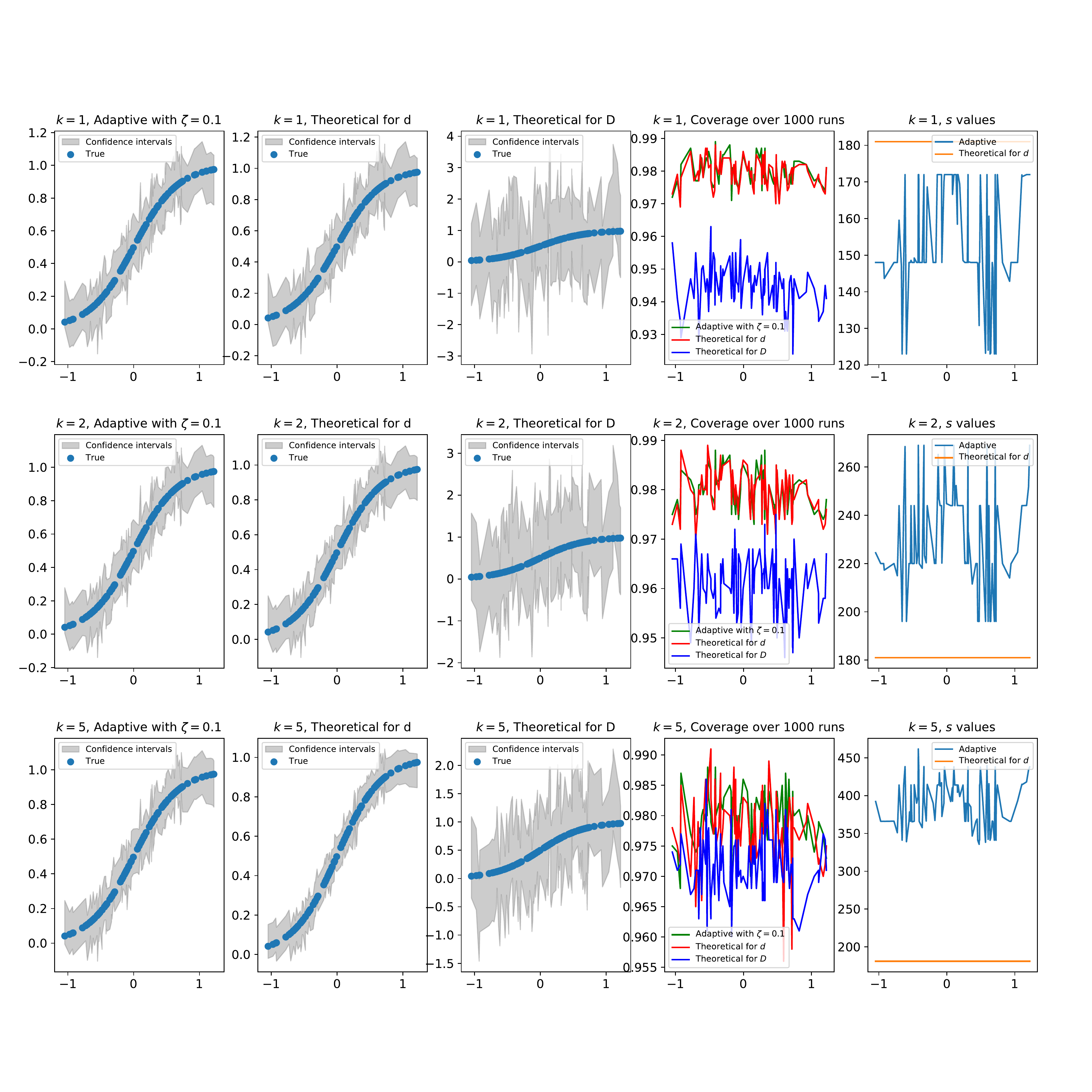}
\caption{Confidence interval and true values for $100$ randomly sampled test points on a single run for $k=1,2,5$ and when (1) left: $s=s_{\zeta}$ is chosen adaptively using Proposition \ref{thm:normal-adapt} with $\zeta=0.1$, (2) second from the left: $s=n^{1.05 d/(d+2)}$, and (3) middle: $s=n^{1.05 D/(D+2)}$. Second from the right: coverage over $1000$ runs for three different methods described. Right: average value of $s_{\zeta}$ chosen adaptively using Proposition \ref{thm:normal-adapt} for $\zeta = 0.1$ for different test points compared to the theoretical value $s=n^{1.05 d/(d+2)}$. Here $n=20000$, $D=20$, $d=2$, $\E[Y|X] = \frac{1}{1+\exp\{-3 X[0]\}}$, $\sigma=1$. Nominal coverage: $0.98$.}\label{fig:coverage1d}
\end{figure}

\paragraph{Structure of the paper.} The rest of the paper is organized as follows. In \S \ref{sec:prelim}, we provide preliminary definitions, in \S \ref{ssec:gen} and \S \ref{ssec:DNN} we explain our algorithms, in \S \ref{ssec:intrinsic-dimension} we explain doubling dimension (see Appendix \ref{ssec:examples} for examples). In \S \ref{sec:assumptions} we state our assumptions, in \S \ref{sec:subsamp-kernel} we provide general estimation and inference results for kernels that satisfy shrinkage and incrementality conditions, and in \S \ref{sec:main-thm} we apply such results to the $k$-NN kernel and prove estimation and inference rates for such kernels that only depend on intrinsic dimension. We defer a discussion on the extension to heterogeneous treatment effect estimation and also technical proofs to Appendices.

\section{Preliminaries}\label{sec:prelim}

Suppose we have a data set $M$ of $n$ observations $Z_1, Z_2, \ldots, Z_n$ drawn independently from some distribution $\calD$ over the observation domain $\calZ$. We focus on the case that $Z_i = (X_i, Y_i)$, where $X_i$ is the vector of covariates and $Y_i$ is the outcome. In Appendix \ref{sec:nuisance}, we briefly discuss how our results can be extended to the setting where nuisance parameters and treatments are included in the model.

Suppose that the covariate space $\calX \subset \reals^D$ is contained in a ball with unknown diameter $\Delta_\calX$. Denote the marginal distribution of $X$ by $\mu$ and the empirical distribution of $X$ on $n$ sample points by $\mu_n$. Let $B(x,r) = \bc{z \in \reals^D: \|x-z\|_2 < r}$ be the $\ell_2$-ball centered at $x$ with radius $r$ and denote the standard basis for $\reals^p$ by $\bc{e_1,e_2,\ldots,e_p}$.

Let $\psi: \calZ \times \reals^p \to \reals^p$ be a score function that maps observation $Z$ and parameter $\theta \in \reals^p$ to a $p$-dimensional score $\psi(Z; \theta)$. For $x \in \calX$ and $\theta \in \reals^p$ define the expected score as
$m(x; \theta) = \E[\psi(Z; \theta) \mid X=x]$. The goal is to estimate the quantity $\theta(x)$ via local moment condition, i.e.
\begin{equation*}
\theta(x) \text{~solves:~} m(x; \theta) = \E[\psi(Z; \theta)\mid X=x] = 0.
\end{equation*}

\subsection{Sub-Sampled Kernel Estimation}\label{ssec:gen}

\paragraph{Base Kernel Learner.} Our learner $\calL_k$ takes a data set $S$ containing $m$ observations as input and a realization of internal randomness $\omega$, and outputs a kernel weighting function $K_{\omega}:\calX \times \calX \times \calZ^m \to [0,1]$. In particular, given any target feature $x$ and the set $S$, the weight of each observation $Z_i$ in $S$ with feature vector $X_i$ is $K_{\omega}(x,X_i,S)$. Define the weighted score on a set $S$ with internal randomness $\omega$ as
$\Psi_S(x; \theta) = \sum_{i\in S} K_{\omega}(x,X_i,S) \psi(Z_i; \theta)$. When it is clear from context we will omit $\omega$ from our notation for succinctness and essentially treat $K$ as a random function. For the rest of the paper, we are going to use notations $\alpha_{S,\omega}(X_i) = K_{\omega}(x,X_i,S)$ interchangeably. 

\paragraph{Averaging over $B$ sub-samples of size $s$.} Suppose that we consider $B$ random and independent draws from all $\binom{n}{s}$ possible subsets of size $s$ and internal randomness variables $\omega$ and look at their average. Index these draws by $b=1,2,\ldots,B$ where $S_b$ contains samples in $b$th draw and $\omega_b$ is the corresponding draw of internal randomness. We can define the weighted score as 
\begin{equation}
\label{eqn:incomp-u-stat}
\Psi(x; \theta) = \frac{1}{B} \sum_{b=1}^B \Psi_{S_b,\omega_b}(x; \theta) = \frac{1}{B} \sum_{b=1}^B \sum_{i\in S_b} \alpha_{S_b,\omega_b}(X_i) \psi(Z_i; \theta) \,.
\end{equation}

\paragraph{Estimating $\theta(x)$.} We estimate $\theta(x)$ as a vanishing point of $\Psi(x; \theta)$. Letting $\htheta$ be this point, then
$\Psi(x; \htheta) = \frac{1}{B} \sum_{b=1}^B \sum_{i=1}^n \alpha_{S_b,\omega_b}(X_i) \psi(Z_i; \htheta) = 0.$ This procedure is explained in Algorithm \ref{alg:gen}.

\subsection{Sub-Sampled $k$-NN Estimation}
\label{ssec:DNN}

We specially focus on the case that the weights are distributed across the $k$-NN of $x$. In other words, given a data set $S$, the weights are given according to $K_{\omega}(x,X_i,S) = \,1 \bc{X_i \in H_k(x,S)}/k$, where $H_k(x,S)$ are $k$-NN of $x$ in the set $S$. The pseudo-code for this can be found in Algorithm \ref{alg:DNN}.

\paragraph{Complete $U$-statistic.} The expression in Equation \eqref{eqn:incomp-u-stat} is an incomplete $U$-statistic. Complete $U$-statistic is obtained if we allow each subset of size $s$ from $n$ samples to be included in the model exactly once. In other words, this is achieved if $B=\binom{n}{s}$, all subsets $S_1, S_2, \ldots, S_B$ are distinct, and we also take expectation over the internal randomness $\omega$. Denoting this by $\Psi_0(x;\theta)$, we have
\begin{equation}\label{eqn:comp-u-stat}
   \Psi_0(x;\theta) = \binom{n}{s}^{-1} \sum_{S \in [n]: |S|=s}  \E_{\omega}\bb{\sum_{i\in S} \alpha_{S,\omega}(X_i) \psi(Z_i; \theta)}\,.
\end{equation}
Note in the case of $k$-NN estimator we can also represent $\Psi_0$ in terms of order statistics, i.e., $\Psi_0$ is an $L$-statistics (see e.g., \cite{serfling2009approximation}). By sorting samples in $\bX = \bc{X_1,X_2,\ldots,X_n}$ based on their distance with $x$ as $\|X_{(1)}-x\| \leq \|X_{(2)}-x\| \leq \cdots \leq \|X_{(n)}-x\|$, we can write $\Psi_0(x; \theta) = \sum_{i=1}^n \alpha(X_{(i)})\, \psi(Z_{(i)};\theta)$ where the weights are given by
\[
\alpha(X_{(i)}) = 
\begin{cases}
    \frac{1}{k} \binom{n}{s}^{-1} \, \binom{n-i}{s-1} & \text{~if~} i \leq k \\ 
    \frac{1}{k} \binom{n}{s}^{-1} \, \sum_{j=0}^{k-1} \binom{i-1}{j} \binom{n-i}{s-1-j} &\text{~if~} i \geq k+1 \,.
\end{cases}
\]
\vspace{-1mm}
\begin{minipage}{0.48\textwidth}
\begin{algorithm}[H]
    \centering
    \caption{Sub-Sampled Kernel Estimation}\label{alg:gen}
    \begin{algorithmic}[1]
        \State {\bf Input.} Data $\bc{Z_i=(X_i, Y_i)}_{i=1}^n$, moment $\psi$, kernel $K$, sub-sampling size $s$, number of iterations $B$
        \State {\bf Initialize.} $\alpha(X_i) = 0, 1 \leq i \leq n$
        \For{$b \gets 1, B$}
        \State \textbf{Sub-sampling.} Draw set $S_b$ by sampling $s$ points from $Z_1,Z_2,\ldots,Z_n$ without replacement.
        \State \textbf{Weight Updates.} $\alpha(X_i) \gets \alpha(X_i) + K_{\omega_b}(x,X_i,S_b)$
        \EndFor
        \State \textbf{Weight Normalization.}  $\alpha(X_i) \gets \alpha(X_i)/B$
        \State \textbf{Estimation.} Denote $\hat{\theta}$ as a solution of $\Psi(x;\theta)=\sum_{i=1}^n \alpha(X_i) \psi(Z_i;\theta)=0$
    \end{algorithmic}
\end{algorithm}
\end{minipage}
\hfill
\begin{minipage}{0.48\textwidth}
\begin{algorithm}[H]
    \centering
    \caption{Sub-Sampled $k$-NN Estimation}\label{alg:DNN}
    \begin{algorithmic}[1]
    
        \State {\bf Input.} Data $\bc{Z_i=(X_i, Y_i)}_{i=1}^n$, moment $\psi$, sub-sampling size $s$, number of iterations $B$, number of neighbors $k$
        \State {\bf Initialize.} $\alpha(X_i) \gets 0, 1 \leq i \leq n$
        \For{$b \gets 1,B$}
        \State \textbf{Sub-sampling.} Draw set $S_b$ by sampling $s$ points from $Z_1,Z_2,\ldots,Z_n$ without replacement
        \State \textbf{Weight Updates.} $\alpha(X_i) \gets \alpha(X_i) + \,1 \bc{X_i \in H_k(x,S_b)}/k $
        \EndFor
        \State \textbf{Weight Normalization.}  $\alpha(X_i) \gets \alpha(X_i)/B$
        \State \textbf{Estimation.} Denote $\hat{\theta}$ as a solution of $\Psi(x;\theta)=\sum_{i=1}^n \alpha(X_i) \psi(Z_i;\theta)=0$
    \end{algorithmic}
\end{algorithm}
\end{minipage}

\subsection{Local intrinsic dimension}\label{ssec:intrinsic-dimension}
We are interested in settings that the distribution of $X$ has some low dimensional structure on a ball around the target point $x$. The following notions are adapted from \cite{kpotufe2011k}, which we present here for completeness.

\begin{definition}\label{def:doubling-dim}
The marginal $\mu$ is called {\bf doubling measure} if there exists a constant $C_{db}>0$ such that for any $x \in \calX$ and any $r>0$ we have $\mu(B(x,r)) \leq C_{db} \mu(B(x,r/2))$.
\end{definition}
An equivalent definition of this notion is that, the measure $\mu$ is doubling measure if there exist $C, d>0$ such that for any $x \in \calX, r>0$, and $\theta \in (0,1)$ we have $\mu(B(x,r)) \leq C \theta^{-d} \mu(B(x,\theta r))$.

One example is given by Lebesgue measure on the Euclidean space $\reals^d$, where for any $r>0, \theta \in(0,1)$ we have $\vol(B(x,\theta r)) = \vol(B(x,r)) \theta^d$. Building upon this, let $\calX \in \reals^D$ be a subset of $d$-dimensional hyperplane and suppose that for any ball $B(x,r)$ in $\calX$ we have $\vol(B(x,r) \cap \calX)=\Theta(r^d)$. If $\mu$ is almost uniform, then we also have $\mu(B(x,\theta r))/\mu(B(x,r)) = \Theta(\theta^d)$.

Unfortunately, this global notion of doubling measure is restrictive and most probability measures are globally complex. Rather, once restricted to local neighborhoods, the measure becomes lower dimensional and intrinsically less complex. The following definition captures this intuition better.

\begin{definition}\label{def:hom-measure}
Fix $x \in \calX$ and $r > 0$. The marginal $\mu$ is $(C,d)$-{\bf homogeneous on} $B(x,r)$ if for any $\theta \in (0,1)$ we have $\mu(B(x, r)) \leq C \theta^{-d} \mu(B(x, \theta r))$. 
\end{definition}

Intuitively, this definition requires the marginal $\mu$ to have a local support that is intrinsically $d$-dimensional. This definition covers low-dimensional manifolds, mixture distributions, $d$-sparse data, and also any combination of these examples. These examples are explained in Appendix \ref{ssec:examples}.

\section{Assumptions}\label{sec:assumptions}
For non-parametric estimators the bias is connected to the kernel shrinkage, as noted by \cite{athey2019generalized, wager2017estimation, oprescu2018orthogonal}.
\begin{definition}[Kernel Shrinkage in Expectation]\label{def:ker-shrink}
The kernel weighting function output by learner $\calL_k$ when it is given $s$ i.i.d. observations drawn from distribution $\calD$ satisfies 
\begin{equation*}
\E \bb{\sup \bc{\|x - X_i\|_2 : K(x,X_i,S)>0}} = \epsilon(s)\,.
\end{equation*}
\end{definition}

\begin{definition}[Kernel Shrinkage with High Probability]\label{def:ker-shrink-whp}
The kernel weighting function output by learner $\calL_k$ when it is given $s$ i.i.d. observations drawn from distribution $\calD$ w.p. $1-\delta$ over the draws of the $s$ samples satisfies
\begin{equation*}
\sup \bc{\|x - X_i\|_2: K(x,X_i,S)>0} \leq \epsilon(s,\delta)\,.
\end{equation*}
\end{definition}

As shown in \cite{wager2017estimation}, for trees that satisfy some regularity condition, $\epsilon(s) \leq s^{-c/D}$ for a constant $c$. We are interested in shrinkage rates that scale as $s^{-c/d}$, where $d$ is the local intrinsic dimension of $\mu$ on $B(x,r)$. Similar to \cite{oprescu2018orthogonal, athey2019generalized}, we rely on the following assumptions on the moment and score functions. 

\begin{assumption}\label{ass:mse}
\begin{enumerate}
\item[]
\item The moment $m(x; \theta)$ corresponds to the gradient w.r.t. $\theta$ of a $\lambda$-strongly convex loss $L(x; \theta)$. This also means that the Jacobian $M_0= \nabla_\theta m(x; \theta(x))$ has minimum eigenvalue at least $\lambda$.
\item For any fixed parameters $\theta$, $m(x; \theta)$ is a $L_m$-Lipschitz function in $x$ for some constant $L_m$. 
\item There exists a bound $\psi_{\max}$  such that for any observation $z$ and any $\theta$, $\|\psi(z; \theta)\|_{\infty}\leq \psi_{\max}$.
\item The bracketing number $N_{[]}({\cal F}, \epsilon, L_2)$ of the function class: ${ \cal F}=\{\psi(\cdot; \theta): \theta \in \Theta\}$, satisfies $\log(N_{[]}({\cal F}, \epsilon, L_2))= O(1 /\epsilon)$.
\end{enumerate}
\end{assumption}

\begin{assumption}\label{ass:normality}
  \begin{enumerate}
    \item[]
     \item For any coordinate $j$ of the moment vector $m$, the Hessian $H_j(x; \theta)=\nabla^2_{\theta\theta} m_j(x;\theta)$ has eigenvalues bounded above by a constant $L_H$ for all $\theta$.
     \item Maximum eigenvalue of $M_0$ is upper bounded by $L_J$.
     \item Second moment of $\psi(x;\theta)$ defined as $\Var\bp{\psi(Z;\theta) \mid X=x}$ is $L_{mm}$-Lipschitz in $x$, i.e.,
     \begin{equation*}
       \|\Var\bp{\psi(Z;\theta) \mid X=x} - \Var\bp{\psi(Z;\theta) \mid X=x'}\|_F  \leq L_{mm} \|x-x'\|_2\,.
     \end{equation*}
     \item Variogram is Lipschitz:
    $\sup_{x\in \calX} \|\Var(\psi(Z; \theta) - \psi(Z; \theta') \mid X=x)\|_F \leq L_{\psi} \|\theta - \theta'\|_2$.
  \end{enumerate}
\end{assumption}

The condition on variogram always holds for a $\psi$ that is Lipschitz in $\theta$. This larger class of functions $\psi$ allows estimation in more general settings such as $\alpha$-quantile regression that involves a $\psi$ which is non-Lipschitz in $\theta$.
Similar to \cite{athey2016recursive, athey2019generalized}, we require kernel $K$ to be \emph{honest} and \emph{symmetric}.

\begin{assumption}\label{ass:honest}
The kernel $K$, built using samples $\bc{Z_1,Z_2,\ldots,Z_s}$, is {\bf honest} if the weight of sample $i$ given by $K(x,X_i,\bc{Z_j}_{j=1}^s)$ is independent of $Y_j$ conditional on $X_j$ for any $j \in [s]$.
\end{assumption}

\begin{assumption}\label{ass:sym}
The kernel $K$, built using samples $\bc{Z_1, Z_2, \ldots, Z_s}$, is {\bf symmetric} if for any permutation $\pi:[s] \to [s]$, the distribution of $K(x,X_i,\bc{Z_j}_{j=1}^s)$ and $K(x, X_{\pi(i)}, \bc{Z_{\pi(j)}}_{j=1}^s)$ are equal. In other words, the kernel weighting distribution remains unchanged under permutations.
\end{assumption}

For a deterministic kernel $K$, the above condition implies that $K(x,X_i,\{Z_j \}_{j=1}^s) = K(x, X_i, \{Z_{\pi(j)}\}_{j=1}^s)$, for any $i \in [s]$. In the next section, we provide general estimation and inference results for a general kernel based on the its shrinkage and incrementality rates. 

\section{Guarantees for sub-sampled kernel estimators}\label{sec:subsamp-kernel}
Our first result establishes estimation rates, both in expectation and high probability, for kernels based on their shrinkage rates. The proof of this theorem is deferred to Appendix \ref{app:mse_rate}.  
\begin{theorem}[Finite Sample Estimation Rate]\label{thm:mse_rate}
Let Assumptions \ref{ass:mse} and \ref{ass:honest} hold. Suppose that Algorithm \ref{alg:gen} is executed with $B \geq n/s$. If the base kernel $K$ satisfies kernel shrinkage in expectation, with rate $\epsilon(s)$, then w.p. $1-\delta$
\begin{equation}
    \|\htheta - \theta(x)\|_2 \leq \frac{2}{\lambda}\left(L_m \epsilon(s) + O\left(\psi_{\max}\sqrt{\frac{p\, s}{n} \left(\log\log(n /s) + \log(p/\delta)\right)}\right)\right)\,.
\end{equation}
Moreover,
\begin{equation}
     \sqrt{\E\bb{\|\htheta - \theta(x)\|_2^2}} \leq \frac{2}{\lambda}\left(L_m \epsilon(s) + O\left(\psi_{\max}\sqrt{\frac{p\, s}{n} \log\log(p\, n /s)}\right)\right)\,.
\end{equation}
\end{theorem}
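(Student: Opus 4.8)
The plan is to decompose the estimation error into a bias term, controlled by kernel shrinkage, and a variance/stochastic term, controlled by a uniform concentration argument over $\theta$. Let me write $\hat m(x;\theta) = \sum_{i=1}^n \alpha(X_i)\,\psi(Z_i;\theta)$ for the aggregated weighted score (so $\hat\theta$ solves $\hat m(x;\hat\theta)=0$) and recall that the weights sum to one. I would first fix a ``population-at-the-weights'' object $\bar m(\theta) = \sum_{i=1}^n \alpha(X_i)\, m(X_i;\theta) = \sum_{i=1}^n \alpha(X_i)\,\E[\psi(Z_i;\theta)\mid X_i]$, where honesty (Assumption~\ref{ass:honest}) is what makes $\alpha(X_i)$ independent of $Y_i$ given $X_i$, so the conditional expectation passes through cleanly. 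The strategy is: (i) show $\bar m(\theta(x))$ is small — this is the bias and it is $O(L_m\,\epsilon(s))$ by Lipschitzness of $m$ in $x$ together with the shrinkage bound, since every $X_i$ receiving positive weight lies within distance $\epsilon(s)$ of $x$ in expectation; (ii) show $\sup_\theta \|\hat m(x;\theta) - \bar m(\theta)\|$ is small with high probability — this is the empirical-process term; (iii) combine with $\lambda$-strong convexity to invert.

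For step (ii), the key point is that $\hat m(x;\theta) - \bar m(\theta) = \sum_i \alpha(X_i)(\psi(Z_i;\theta) - m(X_i;\theta))$ is, conditionally on all the $X_i$'s and on the subsampling/internal randomness (hence on the weights $\alpha(X_i)$), a sum of independent mean-zero terms bounded by $\psi_{\max}$ in sup-norm, with $\sum_i \alpha(X_i) = 1$ and — crucially — $\sum_i \alpha(X_i)^2 \leq s/n \cdot (\text{const})$ because each subsample of size $s$ contributes weight at most $1$ spread over at least one point and $B \geq n/s$ subsamples touch roughly the right number of points; more precisely each base weight $K(x,X_i,S_b)\le 1$ and only $s$ points get weight per draw, so after normalization $\alpha(X_i)\le \tfrac{1}{B}\cdot(\text{number of draws containing }i)\cdot 1$, and one bounds $\max_i \alpha(X_i) = O(s/n)$ and hence $\sum_i\alpha(X_i)^2 \le \max_i\alpha(X_i)\cdot\sum_i\alpha(X_i) = O(s/n)$. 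A Bernstein/Hoeffding bound for a fixed $\theta$ then gives fluctuations of order $\psi_{\max}\sqrt{(s/n)\log(1/\delta)}$ per coordinate; to make it uniform over $\theta\in\Theta$ I would use the bracketing-entropy hypothesis $\log N_{[]}(\mathcal F,\epsilon,L_2) = O(1/\epsilon)$ via a chaining / Dudley-type argument (or a maximal inequality for the weighted empirical process), which contributes the $\sqrt{p\,s/n}$ scaling with the right logarithmic factors; the $\log\log(n/s)$ is the cost of a peeling/union argument over the random number of distinct points hit (equivalently over the realized weight vector), handled by a stratification over dyadic scales. The expectation version follows by integrating the tail, which turns $\log(p/\delta)$ into $\log\log(p\,n/s)$.

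Finally, for step (iii), combine: $\hat m(x;\theta(x))$ differs from $0 = \hat m(x;\hat\theta)$, and by $\lambda$-strong convexity of $L(x;\cdot)$ (so $m(x;\cdot)$ is $\lambda$-strongly monotone, and the analogous monotonicity transfers to $\bar m$ since it is a convex combination of strongly convex gradients — each $m(X_i;\cdot)$ inherits strong convexity of the loss in $\theta$ uniformly in the base point, which is implicit in the setup), one gets $\lambda\|\hat\theta - \theta(x)\|_2 \le \|\bar m(\hat\theta) - \bar m(\theta(x))\|_2 \le \|\hat m(x;\hat\theta) - \bar m(\hat\theta)\| + \|\bar m(\theta(x))\| + \|\hat m(x;\theta(x)) - \bar m(\theta(x))\| + \|\hat m(x;\theta(x))\|$; here $\hat m(x;\hat\theta)=0$, the bias terms are $O(L_m\epsilon(s))$, and the process terms are bounded uniformly by step (ii). Rearranging gives the factor $2/\lambda$ and the stated bound.

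\textbf{Main obstacle.} The delicate part is step (ii): getting the uniform-in-$\theta$ control with the correct $\sqrt{p\,s/n}$ rate rather than a cruder bound, while simultaneously handling the randomness of the weights $\alpha(X_i)$ (which are themselves data-dependent through the subsamples and the nearest-neighbor structure). One must argue carefully that conditioning on $\{X_i\}$ and the subsampling randomness leaves a genuine independent-sum structure in the $Y_i$'s (this is exactly where honesty is indispensable), bound $\sum_i\alpha(X_i)^2$ sharply, and then run chaining against the bracketing entropy while also peeling over the dyadic range of $\|X_{(s)}-x\|$ to extract the $\log\log$ term; coordinating the subsample-averaging ($B\ge n/s$ ensures enough averaging that the incomplete $U$-statistic behaves like the complete one) with the empirical-process machinery is where the real work lies.
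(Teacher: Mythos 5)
Your overall architecture (bias via shrinkage plus Lipschitzness, a uniform-in-$\theta$ stochastic term, inversion via strong convexity) matches the paper's, but your choice of centering creates a genuine gap. You center at $\bar m(\theta)=\sum_i\alpha(X_i)m(X_i;\theta)$, which is a \emph{random} quantity (it depends on the realized $X_i$'s and the subsample draws), and you bound it by $L_m\epsilon(s)$ on the grounds that the weighted points lie within distance $\epsilon(s)$ of $x$ ``in expectation.'' Theorem~\ref{thm:mse_rate} only assumes kernel shrinkage in expectation (Definition~\ref{def:ker-shrink}), so this does not give the claimed high-probability bound: Markov's inequality would cost a factor $1/\delta$ on the $\epsilon(s)$ term, and for the MSE statement you would need a second moment of the shrinkage radius, which is not assumed (a crude bound only yields $\E[(\tfrac1B\sum_b R_b)^2]\le \Delta_\calX\,\epsilon(s)$, i.e. $\sqrt{\Delta_\calX\epsilon(s)}\gg\epsilon(s)$). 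The paper avoids this entirely by centering at the deterministic object $\mu_0(x;\theta)=\E[\Psi_0(x;\theta)]$: the ``kernel error'' $m(x;\theta)-\mu_0(x;\theta)$ is then deterministic and bounded by $L_m\epsilon(s)$ (via Lemma~\ref{lem:mu-eq}), while \emph{all} sample randomness — including the fluctuation of which points land near $x$ — is pushed into the sampling error $\Psi_0-\mu_0$, a $U$-process with effective sample size $n/s$, controlled by the bracketing/Bernstein machinery of Lemma~\ref{lem:stoch_eq_general} and Corollary~\ref{cor:bracketing} (plus a separate, easy sub-sampling error term $\Psi-\Psi_0$ using $B\geq n/s$).

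Relatedly, your stochastic step (ii) conditions on the $X_i$'s and the subsampling randomness and exploits only the $Y$-noise (this is where honesty enters, correctly), with the weight bound $\sum_i\alpha(X_i)^2=O(s/n)$. Even granting that bound (it holds only up to logarithmic factors when $B$ is as small as $n/s$), this argument does not control the fluctuations coming from the $X$'s and the subsample draws themselves — precisely the part your random bias term hides — so the ``peeling over the realized weight vector'' and the claim that integrating the tail converts $\log(p/\delta)$ into $\log\log(pn/s)$ are not substantiated; in the paper the $\log\log(n/s)$ arises from optimizing the truncation level $\rho$ in the entropy integral of the $U$-process bound, not from a peeling over weights. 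A smaller issue: your inversion step uses strong monotonicity of $\bar m(\cdot)$, which requires $\lambda$-strong convexity of $L(X_i;\cdot)$ at the neighboring points $X_i$, a strengthening of Assumption~\ref{ass:mse}; the paper's Lemmas~\ref{lem:mse_lem1}--\ref{lem:mse_lem2} need strong convexity only at the target $x$, since they bound $\|m(x;\hat\theta)\|_2\le\sup_\theta\|m(x;\theta)-\Psi(x;\theta)\|_2$ directly. Your route could likely be repaired (e.g., by concentrating the realized average shrinkage radius via Hoeffding's inequality for $U$-statistics, as the paper does later in Lemma~\ref{lem:eps-G}), but as written the high-probability and MSE bounds on the bias term do not follow from the stated assumptions.
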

The next result establishes asymptotic normality of sub-sampled kernel estimators. In particular, it provides coordinate-wise asymptotic normality of our estimate $\htheta$ around its true underlying value $\theta(x)$. The proof of this theorem is deferred to Appendix \ref{app:normality}.
\begin{theorem}[Asymptotic Normality]\label{thm:normality}
Let Assumptions \ref{ass:mse}, \ref{ass:normality}, \ref{ass:honest}, and \ref{ass:sym} hold. Suppose that Algorithm \ref{alg:gen} is executed with $B \geq (n/s)^{5/4}$ and the base kernel $K$ satisfies kernel shrinkage, with rate $\epsilon(s, \delta)$ in probability and $\epsilon(s)$ in expectation. Let $\eta(s)$ be the incrementality of kernel $K$ defined in Equation \eqref{eqn:incrementality} and $s$ grow at a rate such that $s\rightarrow \infty$, $n \eta(s) \rightarrow \infty$, and $\epsilon(s,\eta(s)^2) \rightarrow 0$. Consider any fixed coefficient $\beta\in \reals^p$ with $\|\beta\|\leq 1$ and define the variance as
$$\sigma_{n,\beta}^2(x) = \frac{s^2}{n}\Var\bb{\E\bb{\sum_{i=1}^s K(x, X_i, \{Z_j\}_{j=1}^s) \ldot{\beta}{M_0^{-1} \psi(Z_i; \theta(x))}\mid Z_1}}.$$
Then it holds that $\sigma_{n,\beta}(x) = \Omega\left(s \sqrt{\eta(s)/n}\right)$. Moreover, suppose that
\begin{equation}
\label{eqn:rate-condition-normality}
\max \left(\epsilon(s), \epsilon(s)^{1/4} \bp{\frac{s}{n} \,\log\log(n/s) }^{1/2}, \bp{\frac{s}{n} \, \log\log(n/s)}^{5/8} \right) = o(\sigma_{n,\beta}(x))\,.
\end{equation}
Then,
\[
\frac{\ldot{\beta}{\hat{\theta} - \theta(x)}}{\sigma_{n,\beta}(x)} \rightarrow_d \normal(0, 1)\,.
\]
\end{theorem}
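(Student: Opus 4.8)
The plan is a standard $Z$-estimation linearization followed by a H\'{a}jek-projection CLT for the resulting $U$-statistic, with every error term matched against the rate condition \eqref{eqn:rate-condition-normality}. First I would expand $0=\Psi(x;\htheta)$ around $\theta(x)$. Write $\Psi(x;\theta)=[\Psi(x;\theta)-\bar\Psi(x;\theta)]+\bar\Psi(x;\theta)$, where $\bar\Psi(x;\theta)$ is the conditional mean of $\Psi$ given all the covariates and the sub-sampling randomness (by honesty the weights freeze and $\E[\psi(Z_i;\theta)\mid X_i]=m(X_i;\theta)$). The ``bias term'' $\bar\Psi(x;\theta(x))$ has norm $O_p(L_m\,\epsilon(s))$ since $m(x;\theta(x))=0$, $m$ is $L_m$-Lipschitz, and the kernel weights are supported within the shrinkage radius of $x$. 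Taylor-expanding $\bar\Psi$ in $\theta$ via the $\theta$-Hessian bound (Assumption~\ref{ass:normality}) gives $\bar\Psi(x;\htheta)=\bar\Psi(x;\theta(x))+\hat M(\htheta-\theta(x))+O(L_H\|\htheta-\theta(x)\|_2^2)$ with $\hat M$ a weight-averaged Jacobian; combining Assumption~\ref{ass:mse}(2) with the $\theta$-Hessian bound shows $x'\mapsto\nabla_\theta m(x';\theta(x))$ is continuous at $x$ with a $\sqrt{\,\cdot\,}$-modulus, so $\|\hat M-M_0\|=O_p(\sqrt{\epsilon(s)})=o_p(1)$. The stochastic-equicontinuity term $[\Psi-\bar\Psi](\htheta)-[\Psi-\bar\Psi](\theta(x))$ is controlled by chaining against the bracketing entropy $\log N_{[]}=O(1/\epsilon)$ together with the Lipschitz variogram (Assumption~\ref{ass:normality}(4)), which bound the $L_2$-modulus of $\theta\mapsto\psi(\cdot;\theta)$ over a radius-$r$ ball by $O(\sqrt r)$, producing an equicontinuity bound of order $\|\htheta-\theta(x)\|_2^{1/4}\sqrt{(s/n)\log\log(n/s)}$. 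Plugging the consistency rate $\|\htheta-\theta(x)\|_2=O_p(\epsilon(s)+\sqrt{(s/n)\log\log(n/s)})$ from Theorem~\ref{thm:mse_rate}, the bias, the quadratic Taylor term and the equicontinuity term generate exactly the three quantities in \eqref{eqn:rate-condition-normality}, so that $\ldot{\beta}{\htheta-\theta(x)}=-\ldot{\beta}{M_0^{-1}\Psi(x;\theta(x))}+o_p(\sigma_{n,\beta}(x))$ once we know $\ldot{\beta}{M_0^{-1}\Psi(x;\theta(x))}=O_p(\sigma_{n,\beta}(x))$.

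Next I would pass from the incomplete $U$-statistic $\Psi(x;\theta(x))$ (over $B$ sub-samples) to the complete $U$-statistic $\Psi_0(x;\theta(x))$ of \eqref{eqn:comp-u-stat}: conditionally on the data the Monte-Carlo error has variance at most $\Var(h_S)/B\le\psi_{\max}^2/B$, and $B\ge(n/s)^{5/4}$ makes it $O_p(\psi_{\max}(s/n)^{5/8})=o_p(\sigma_{n,\beta}(x))$ via \eqref{eqn:rate-condition-normality}; I also subtract $\E[\Psi_0(x;\theta(x))]=O(L_m\epsilon(s))$. I would then exploit honesty to note that, conditionally on $\bX=(X_1,\dots,X_n)$, the quantity $\ldot{\beta}{M_0^{-1}\Psi_0(x;\theta(x))}=\sum_{i=1}^n W_i(\bX)\,\xi_i$ is \emph{linear} in $\xi_i:=\ldot{\beta}{M_0^{-1}\psi(Z_i;\theta(x))}$, with deterministic weights $W_i(\bX)=\binom ns^{-1}\sum_{S\ni i}\E_\omega[\alpha_{S,\omega}(X_i)]$ satisfying $\sum_i W_i\le 1$ and $\max_i W_i\le s/n$. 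This is the representation that drives the variance computations below.

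Now I would take the H\'{a}jek projection of $\ldot{\beta}{M_0^{-1}\Psi_0}$, namely $\tfrac sn\sum_{j=1}^n(\tilde h_1(Z_j)-\E\tilde h_1)$ with $\tilde h_1(Z_j)=\E[\sum_{i\in S}K(x,X_i,S)\ldot{\beta}{M_0^{-1}\psi(Z_i;\theta(x))}\mid Z_j]$, whose variance is exactly $\sigma_{n,\beta}^2(x)$ by construction. Decomposing $\xi_j=\bar\xi(X_j)+e_j$ with $\E[e_j\mid X_j]=0$ and decoupling via honesty gives $\tilde h_1(Z_j)=a(X_j)\xi_j+(\text{a function of }X_j)$ where $a(X_j)=\E[K(x,X_j,S)\mid X_j]$; since the $X_j$-measurable part is uncorrelated with $a(X_j)e_j$, one has $\Var(\tilde h_1)\ge \E[a(X_j)^2\,\Var(e_j\mid X_j)]\ge v(x)\,\eta(s)-O(L_{mm}\epsilon(s)/s)$ with $v(x)=\ldot{\beta}{M_0^{-1}\sigma^2(x)M_0^{-\top}\beta}$, using $\E[K(x,X_1,S)\|X_1-x\|_2\mid 1\in S]\le\epsilon(s)/s$ from the shrinkage definition. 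Under the hypotheses this yields $\sigma_{n,\beta}^2(x)=\Omega(s^2\eta(s)/n)$, i.e. $\sigma_{n,\beta}(x)=\Omega(s\sqrt{\eta(s)/n})$. For the degenerate remainder $R_n:=\ldot{\beta}{M_0^{-1}\Psi_0}-(\text{H\'{a}jek projection})$, orthogonality gives $\Var(R_n)=\sum_{c\ge2}p_c\delta_c$ with $p_c=\binom sc\binom{n-s}{s-c}\binom ns^{-1}$, $\sum_c\binom sc\delta_c=\Var(g_S)\le(\psi_{\max}\|M_0^{-\top}\beta\|_1)^2$; bounding $p_c\le(2s^2/n)^c/c!$ and $\delta_c\le\Var(g_S)/\binom sc$ (and using $s/n\to0$, which follows from $\eta(s)\le1/s$ and $n\eta(s)\to\infty$) gives $\Var(R_n)=O(\psi_{\max}^2(s/n)^2)$, hence $\Var(R_n)/\sigma_{n,\beta}^2(x)=O(1/(n\eta(s)))\to0$ and $R_n=o_p(\sigma_{n,\beta}(x))$; the same bounds show $\ldot{\beta}{M_0^{-1}\Psi_0}=O_p(\sigma_{n,\beta}(x))$, closing the loop with the first paragraph. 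Finally, the H\'{a}jek projection is $\tfrac sn$ times a sum of $n$ i.i.d.\ bounded mean-zero summands of total variance $\sigma_{n,\beta}^2(x)=\Omega(s^2\eta(s)/n)$, so its Lyapunov ratio is $O\big(\psi_{\max}/\sqrt{v(x)\,n\eta(s)}\big)\to0$ and the normalized projection converges to $\normal(0,1)$; combining the three paragraphs and applying Slutsky proves the claim.

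The main obstacle is the middle of the last paragraph: showing the degenerate part of the $U$-statistic is of strictly smaller order than $\sigma_{n,\beta}^2(x)$. This requires, simultaneously, a sharp enough bound on the Hoeffding components $p_c\delta_c$ \emph{and} the incrementality-based lower bound $\sigma_{n,\beta}^2(x)=\Omega(s^2\eta(s)/n)$, whose proof leans on honesty to separate the irreducible-noise contribution $\E[a(X_j)^2\Var(e_j\mid X_j)]$ from the smooth bias-type contribution, while keeping the shrinkage-induced corrections below the threshold dictated by $\epsilon(s,\eta(s)^2)\to0$. The $Z$-estimation step is routine by comparison, except that matching the equicontinuity bound to the exact exponents in \eqref{eqn:rate-condition-normality} (the $\epsilon(s)^{1/4}$ and the $5/8$ powers) needs the $\sqrt r$ $L_2$-modulus coming from the Lipschitz variogram rather than Lipschitzness of $\psi$ in $\theta$.
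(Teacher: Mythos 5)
Your overall architecture is the same as the paper's: linearize the $Z$-estimation step so that $\ldot{\beta}{\hat\theta-\theta(x)}$ agrees with $-\ldot{\beta}{M_0^{-1}\bp{\Psi_0(x;\theta(x))-\E\bb{\Psi_0(x;\theta(x))}}}$ up to $o_p(\sigma_{n,\beta}(x))$, with the bias, quadratic, equicontinuity and Monte Carlo terms matched to the three quantities in \eqref{eqn:rate-condition-normality}; then prove a CLT for the complete $U$-statistic via its H\'{a}jek projection, killing the degenerate Hoeffding components through the $\frac{s^2}{n^2}\Var(\Phi)$ bound together with $n\eta(s)\to\infty$, and lower-bounding the projection variance by isolating, via honesty, the irreducible-noise contribution $\E\bb{a(X_1)^2\Var(e_1\mid X_1)}$. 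One structural difference: the paper Taylor-expands the population moment $m(x;\cdot)$ at the target point $x$, so the Jacobian is exactly $M_0$ and the mismatch with the weighted moments at the $X_i$'s is absorbed into the kernel error of size $L_m\epsilon(s)$; your route through a weight-averaged Jacobian $\hat M$ and the interpolation bound $\|\hat M-M_0\|=O_p(\sqrt{\epsilon(s)})$ can be made to work, but it needs the $\theta$-Hessian bound at covariate values near $x$, which is more than Assumption~\ref{ass:normality} literally provides at the single point $x$, and it adds work the paper's decomposition avoids. Similarly, your chaining should be run on the complete $U$-process, with the incomplete-vs-complete discrepancy controlled uniformly in $\theta$ (the paper does this by applying the same bracketing bound to the $B$ i.i.d.\ sub-sample draws); asserting equicontinuity directly for the incomplete statistic skips that step, though $B\geq (n/s)^{5/4}$ makes it repairable.

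The genuine gap is in your proof that $\sigma_{n,\beta}(x)=\Omega\bp{s\sqrt{\eta(s)/n}}$. After honesty reduces the problem to $\E\bb{a(X_1)^2\Var(e_1\mid X_1)}$, you replace $\Var(e_1\mid X_1)$ by its value at $x$ and bound the error by $L_{mm}\E\bb{a(X_1)^2\|X_1-x\|_2}\leq L_{mm}\,\epsilon(s)/s$, which you then need to be $o(\eta(s))$. Under the theorem's hypotheses this is not guaranteed: in general one only knows $1/s^2\leq\eta(s)\leq 1/s$, so when $\eta(s)$ is of smaller order than $\epsilon(s)/s$ your correction term swamps the main term and the claimed lower bound fails. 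Tellingly, your argument never uses the hypothesis $\epsilon(s,\eta(s)^2)\rightarrow 0$, which is exactly the condition the paper introduces for this step: split on the event $\|X_1-x\|_2\leq\epsilon(s,\delta)$ with $\delta=\eta(s)^2$, giving an error of at most $L_{mm}\,\epsilon(s,\eta(s)^2)\,\eta(s)+O(\psi_{\max}^2)\,\eta(s)^2=o(\eta(s))$, since on the complementary event $\E\bb{a(X_1)^2\Ind\{\|X_1-x\|_2>\epsilon(s,\delta)\}}\leq\E\bb{K(x,X_1,\{Z_j\}_{j=1}^s)\Ind\{\cdot\}}\leq\delta$ by the high-probability shrinkage definition. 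Your bound is adequate for the $k$-NN application where $\eta(s)=\Theta(1/s)$, but not for the theorem as stated; substituting the high-probability split closes the gap without changing anything else in your argument.
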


 Theorems \ref{thm:mse_rate} and \ref{thm:normality} generalize existing estimation and asymptotic normality results of \cite {athey2019generalized,wager2017estimation, fan2018dnn} to an arbitrary kernel that satisfies appropriate shrinkage and incrementality rates (see Remark \ref{rem:inc} in Appendix \ref{app:normality}). 
 The following lemma relates these two and provides a lower bound on the incrementality in terms of kernel shrinkage. The proof uses the Paley-Zygmund inequality and is left to Appendix \ref{app:incrementality}.

\begin{lemma}\label{lem:shrinkage_incrementality}
For any symmetric kernel $K$ (Assumption \ref{ass:sym}) and for any $\delta\in [0,1]$:
\begin{equation*}
    \eta_s = \E \bb{\E\bb{K(x, X_1, \{Z_j\}_{j=1}^s) \mid X_1}^2} \geq \frac{\left(1-\delta\right)^2\,(1/s)^2}{\inf_{\rho>0} \left(\mu(B(x, \epsilon(s, \rho)))+\rho\,s/\delta\right)}\,.
\end{equation*}
Thus if $\mu(B(x, \epsilon(s, 1/(2s^2))))=O(\log(s)/s)$, then picking $\rho=1/(2s^2)$ and $\delta = 1/2$ implies that $\E[\E[K(x, X_1, \{Z_j\}_{j=1}^s) |X_1]^2] = \Omega(1/s\log(s)).$
\end{lemma}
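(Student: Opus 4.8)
The plan is to apply the Paley--Zygmund inequality to the non-negative random variable $W := \E[K(x, X_1, \{Z_j\}_{j=1}^s) \mid X_1]$, whose second moment is precisely $\eta_s$. I would first record that by symmetry (Assumption~\ref{ass:sym}) together with the normalization $\sum_{i=1}^s K(x, X_i, \{Z_j\}_{j=1}^s) = 1$ (as holds e.g.\ for the $k$-NN kernel), every sample carries expected weight $1/s$, i.e.\ $\E[W] = 1/s$. Since $W \in [0,1]$, Paley--Zygmund applies and yields, for any $\delta \in [0,1]$,
\[
\eta_s = \E[W^2] \;\geq\; \frac{(1-\delta)^2\,\E[W]^2}{\Pr\!\big(W > \delta\,\E[W]\big)} \;=\; \frac{(1-\delta)^2\,(1/s)^2}{\Pr(W > \delta/s)}\,,
\]
so the whole task reduces to showing that, for every $\rho>0$, $\Pr(W > \delta/s) \le \mu(B(x,\epsilon(s,\rho))) + \rho\,s/\delta$, and then taking the infimum over $\rho$.

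To bound $\Pr(W > \delta/s)$ I would split on whether $X_1$ lies within distance $\epsilon(s,\rho)$ of $x$. On the event $\{\|X_1 - x\|_2 < \epsilon(s,\rho)\}$ one simply pays $\mu(B(x,\epsilon(s,\rho)))$. On the complement, the key point is that if $\|X_1 - x\|_2 > \epsilon(s,\rho)$ and $K(x, X_1, \{Z_j\}) > 0$, then the high-probability shrinkage event of Definition~\ref{def:ker-shrink-whp} must have failed; since that event has probability at least $1-\rho$ over the draw of $(Z_1,\dots,Z_s)$ and the internal randomness, and since $K \le 1$, we get $\E[K(x,X_1,\{Z_j\})\,1\{\|X_1-x\|_2 > \epsilon(s,\rho)\}] \le \rho$. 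Pulling the $X_1$-measurable indicator inside the conditional expectation, this is $\E[W\,1\{\|X_1 - x\|_2 > \epsilon(s,\rho)\}] \le \rho$, and Markov's inequality turns it into $\Pr(W > \delta/s,\ \|X_1-x\|_2 > \epsilon(s,\rho)) \le \rho\,s/\delta$. Adding the two pieces gives the desired bound on $\Pr(W>\delta/s)$, and since $\rho>0$ is arbitrary, taking the infimum proves the displayed inequality of the lemma.

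For the second assertion I would simply substitute $\rho = 1/(2s^2)$ and $\delta = 1/2$: then $\rho\,s/\delta = 1/s$, and by hypothesis $\mu(B(x,\epsilon(s,1/(2s^2)))) = O(\log(s)/s)$, so the denominator is $O(\log(s)/s)$ while the numerator equals $\tfrac14 s^{-2}$, giving $\eta_s = \Omega(1/(s\log s))$.

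I expect the only genuine subtlety to be the careful bookkeeping around the shrinkage event when $X_1$ is itself one of the $s$ subsampled points: one must check that the inclusion $\{K(x,X_1,\cdot)>0\}\cap\{\|X_1-x\|_2>\epsilon(s,\rho)\} \subseteq \{\text{shrinkage fails}\}$ is valid, and that the relevant indicator is measurable with respect to $\sigma(X_1)$ so that the conditioning goes through. Everything else is a routine Paley--Zygmund-plus-Markov computation.
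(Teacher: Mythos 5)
Your proposal is correct and follows essentially the same route as the paper: Paley--Zygmund applied to $\E[K(x,X_1,\{Z_j\}_{j=1}^s)\mid X_1]$ with $\E[W]=1/s$ by symmetry, then bounding $\Pr(W>\delta/s)$ by splitting on $\|X_1-x\|_2\lessgtr\epsilon(s,\rho)$ and using the high-probability shrinkage event together with a Markov-type bound, followed by the same substitution $\rho=1/(2s^2)$, $\delta=1/2$. The only cosmetic difference is that you apply Markov directly to $\E[W\,1\{\|X_1-x\|_2>\epsilon(s,\rho)\}]$ whereas the paper argues about the mass of the region where $\Pr[W_1>0\mid X_1]\geq\delta/s$; these are the same estimate.
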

\begin{corollary}\label{cor:incrementality}
If $\epsilon(s, \delta)=O((\log(1/\delta)/s)^{1/d})$ and $\mu$ satisfies a two-sided version of the doubling measure property on $B(x,r)$, defined in Definition \ref{def:hom-measure}, i.e., $c \theta^d \mu(B(x, r)) \leq \mu(B(x,\theta r)) \leq C \theta^d \mu(B(x,r))$
for any $\theta \in (0,1)$. Then, $\E[\E[K(x, X_1, \{Z_j\}_{j=1}^s) |X_1]^2] = \Omega(1/(s\log(s)))$.
\end{corollary}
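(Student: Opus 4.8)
The plan is to reduce the corollary to a direct application of Lemma~\ref{lem:shrinkage_incrementality}, which already contains all of the probabilistic content (the Paley–Zygmund argument). What remains is a purely measure-theoretic estimate: verifying the hypothesis $\mu(B(x,\epsilon(s,1/(2s^2)))) = O(\log(s)/s)$ of that lemma under the stated shrinkage rate and the two-sided doubling assumption. Once this hypothesis is checked, the conclusion $\E[\E[K(x, X_1, \{Z_j\}_{j=1}^s) \mid X_1]^2] = \Omega(1/(s\log s))$ follows verbatim from the second sentence of Lemma~\ref{lem:shrinkage_incrementality} by taking $\rho = 1/(2s^2)$ and $\delta = 1/2$ there.

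First I would substitute $\delta = 1/(2s^2)$ into the high-probability shrinkage bound. Since $\log(1/\delta) = \log(2s^2) = \Theta(\log s)$, the assumption $\epsilon(s,\delta) = O((\log(1/\delta)/s)^{1/d})$ gives $\epsilon(s, 1/(2s^2)) = O((\log s / s)^{1/d}) =: r_s$, and in particular $r_s \to 0$ as $s\to\infty$. Hence for all $s$ large enough, $r_s < r$, where $r$ is the fixed radius on which $\mu$ is two-sided $(c,C,d)$-doubling around $x$ in the sense of Definition~\ref{def:hom-measure}. Next I would apply the upper half of the two-sided doubling property with scaling factor $\theta = r_s / r \in (0,1)$, which yields $\mu(B(x, r_s)) = \mu(B(x, \theta r)) \le C\theta^d \mu(B(x,r)) = C\,(\mu(B(x,r))/r^d)\, r_s^d$. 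Because $r_s^d = O(\log s / s)$ and $\mu(B(x,r))/r^d$ is a finite constant independent of $s$ (finite since $\mu$ is a probability measure), this gives $\mu(B(x,\epsilon(s,1/(2s^2)))) = O(\log s / s)$, exactly the hypothesis of Lemma~\ref{lem:shrinkage_incrementality}. Invoking that lemma then closes the argument. I would also remark that only the upper bound $\mu(B(x,\theta r)) \le C\theta^d \mu(B(x,r))$ is actually used; the lower bound in the two-sided assumption is not needed for this corollary.

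There is no serious obstacle here — the statement is labelled a corollary precisely because Lemma~\ref{lem:shrinkage_incrementality} does the heavy lifting. The only points requiring care are bookkeeping ones: tracking that $\log(1/\delta) = \Theta(\log s)$ when $\delta = \Theta(1/s^2)$, so that $r_s^d$ ends up $\Theta(\log s / s)$ rather than something larger (a cruder choice such as $\rho = 1/s$ would degrade the bound), and ensuring the doubling inequality is only applied at radii $r_s \le r$, which holds for all sufficiently large $s$ — consistent with the asymptotic ($\Omega$) nature of the claimed conclusion.
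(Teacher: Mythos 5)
Your proposal is correct and follows essentially the same route as the paper: both verify $\mu(B(x,\epsilon(s,\rho)))=O(\log(1/\rho)/s)$ via the upper half of the doubling bound and then invoke Lemma~\ref{lem:shrinkage_incrementality} with $\delta=1/2$ and $\rho=\Theta(1/s^2)$ (the paper optimizes over $\rho$ explicitly; you use the lemma's prepackaged second sentence, which is the same calculation). Your side remarks — that only the upper doubling inequality is used and that the doubling property need only hold at radii below $r$, valid for large $s$ — are consistent with the paper's argument.
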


Even without this extra assumption, we can still characterize the incrementality rate of the $k$-NN estimator, as we observe in the next section.

\section{Main theorem: adaptivity of $k$-NN estimator}\label{sec:main-thm}

In this section, we provide estimation guarantees and asymptotic normality of the $k$-NN estimator by using Theorems \ref{thm:mse_rate} and \ref{thm:normality}. We first establish shrinkage and incrementality rates for this kernel.

\subsection{Estimation guarantees for the $k$-NN estimator} \label{ssec:kNN-est}
In this section we provide shrinkage results for the $k$-NN kernel. As observed in Theorem \ref{thm:mse_rate}, shrinkage rates are sufficient for bounding the estimation error. The shrinkage result that we present in the following would only depend on the local intrinsic dimension of $\mu$ on $B(x,r)$.

\begin{lemma}[High probability shrinkage for the $k$-NN kernel]\label{lem:pb-kernel-shrink-kNN}
    Suppose that the measure $\mu$ is $(C,d)$-homogeneous on $B(x,r)$. Then, for any $\delta$ satisfying $2 \exp \bp{-\mu(B(x,r))s/(8C)} \leq \delta \leq \frac{1}{2} \exp(-k/2),$ w.p. at least $1-\delta$ we have 
\begin{equation*}
\|x - X_{(k)}\|_2 \leq \epsilon_k(s,\delta) = O \bp{\frac{\log(1/\delta)}{s}}^{1/d}\,.
\end{equation*}
\end{lemma}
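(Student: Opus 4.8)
The plan is to reduce the statement to a binomial tail bound. Write $N_t$ for the number of the $s$ i.i.d.\ sample points that fall in the ball $B(x,t)$, so that $N_t\sim\mathrm{Bin}\bigl(s,\mu(B(x,t))\bigr)$. Since having at least $k$ points within distance $t$ of $x$ forces the $k$-th nearest neighbour to lie within distance $t$, we have the deterministic inclusion $\{\|x-X_{(k)}\|_2>t\}\subseteq\{N_t\le k-1\}$, hence $\Pr\bigl(\|x-X_{(k)}\|_2>t\bigr)\le\Pr(N_t\le k-1)$. It therefore suffices to exhibit $t=\epsilon_k(s,\delta)$ of order $(\log(1/\delta)/s)^{1/d}$ for which $\Pr(N_t\le k-1)\le\delta$.

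First I would lower bound the success probability $p_t:=\mu(B(x,t))$ via $(C,d)$-homogeneity: provided $t<r$, writing $\theta=t/r\in(0,1)$, Definition~\ref{def:hom-measure} gives $p_t=\mu(B(x,\theta r))\ge C^{-1}\theta^{d}\mu(B(x,r))=\frac{\mu(B(x,r))}{C\,r^{d}}\,t^{d}$. I would then set
\[
\epsilon_k(s,\delta):=\left(\frac{8\,C\,r^{d}\,\log(1/\delta)}{s\,\mu(B(x,r))}\right)^{1/d},
\]
which is of the advertised form $O\bigl((\log(1/\delta)/s)^{1/d}\bigr)$ with a constant depending only on $C,r,d,\mu(B(x,r))$ and, crucially, not on $D$. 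With $t=\epsilon_k(s,\delta)$ the binomial mean satisfies $\nu:=s\,p_t\ge 8\log(1/\delta)$. Two facts must be checked against the hypothesised window for $\delta$: the lower bound $\delta\ge 2\exp\bigl(-\mu(B(x,r))s/(8C)\bigr)$ yields $\log(1/\delta)<s\,\mu(B(x,r))/(8C)$, i.e.\ $t<r$, so the homogeneity estimate above is licensed; and the upper bound $\delta\le\tfrac12\exp(-k/2)$ yields $\log(1/\delta)\ge k/2$, hence $\nu\ge 8\log(1/\delta)\ge 2k\ge 2(k-1)$.

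It remains to invoke the multiplicative Chernoff lower-tail bound $\Pr\bigl(\mathrm{Bin}(s,p)\le(1-\gamma)\nu\bigr)\le e^{-\gamma^{2}\nu/2}$ with $\gamma=\tfrac12$: since $k-1\le\nu/2$, we get $\Pr(N_t\le k-1)\le\Pr(N_t\le\nu/2)\le e^{-\nu/8}\le e^{-\log(1/\delta)}=\delta$. Combining with the first paragraph gives $\Pr\bigl(\|x-X_{(k)}\|_2>\epsilon_k(s,\delta)\bigr)\le\delta$, which is the claim; note that for the $k$-NN kernel this is precisely Definition~\ref{def:ker-shrink-whp}, since $K(x,X_i,S)>0$ iff $X_i\in H_k(x,S)$.

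I do not expect a genuine obstacle: this is a standard nearest-neighbour/binomial-tail computation. The only delicate points are bookkeeping the two-sided constraint on $\delta$ — the lower constraint is used exactly to keep $t<r$ so that the \emph{local} homogeneity bound on $\mu(B(x,t))$ applies, and the upper constraint is what lets the deviation bound absorb the additive $k-1$ — together with the open/closed-ball conventions when passing between $\|x-X_{(k)}\|_2$ and $N_t$, which affect only boundary events of measure zero and not the final inequality.
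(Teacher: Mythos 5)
Your proposal is correct and follows essentially the same route as the paper: both lower bound $\mu(B(x,t))$ at radius $t=\Theta\bigl((\log(1/\delta)/s)^{1/d}\bigr)$ via $(C,d)$-homogeneity, use the lower constraint on $\delta$ to keep $t<r$ and the upper constraint to dominate the threshold $k$, and then apply a lower-tail concentration bound to the count of samples in the ball. The only difference is that the paper runs the concentration step through Bernstein's inequality (its Corollary on empirical vs.\ population mass) rather than the multiplicative Chernoff bound, which affects only constants (e.g.\ $\log(2/\delta)$ versus $\log(1/\delta)$).
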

We can turn this into a shrinkage rate in expectation as follows. In fact, by the very convenient choice of $\delta = s^{-1/d}$ combined with the fact that $\calX$ has diameter $\Delta_{\calX}$, we can establish $O\bp{(\log(s)/s)^{1/d}}$ rate on expected kernel shrinkage. However, a more careful analysis would help us to remove the $\log(s)$ dependency in the bound and is stated in the following corollary:

\begin{corollary}[Expected shrinkage for the $k$-NN kernel]\label{cor:exp-kernel-shrink-kNN}
Suppose that the conditions of Lemma \ref{lem:pb-kernel-shrink-kNN} hold. Let $k$ be a constant and $\epsilon_k(s)$ be the expected shrinkage for the $k$-NN kernel. Then, for any $s$ larger than some constant we have
$\epsilon_k(s)=\E \bb{\|x - X_{(k)}\|_2} = O \bp{\frac{1}{s}}^{1/d}$.
\end{corollary}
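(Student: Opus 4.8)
The plan is to convert the high-probability shrinkage bound of Lemma~\ref{lem:pb-kernel-shrink-kNN} into a bound in expectation by integrating the tail, taking care to treat separately the three regimes of radius in which that lemma is vacuous, active, or out of range. Write $R = \|x - X_{(k)}\|_2$ (recall that for the $k$-NN kernel $\sup\{\|x-X_i\|_2 : K(x,X_i,S)>0\} = R$, so $\epsilon_k(s) = \E[R]$); since $\calX$ is contained in a ball of diameter $\Delta_\calX$ we have $R \le \Delta_\calX$ almost surely, so $\epsilon_k(s) = \int_0^{\Delta_\calX} \Pr[R>t]\,dt$.

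First I would re-parametrize Lemma~\ref{lem:pb-kernel-shrink-kNN}. That lemma states that for every $\delta$ in the window $\delta_{\min}(s) := 2\exp(-\mu(B(x,r))\,s/(8C)) \le \delta \le \tfrac12 e^{-k/2}$ one has $\Pr[R > C_0(\log(1/\delta)/s)^{1/d}] \le \delta$ for an absolute constant $C_0$. Setting $t = C_0(\log(1/\delta)/s)^{1/d}$, equivalently $\delta = \exp(-s t^d/C_0^d)$, this reads $\Pr[R>t] \le \exp(-s t^d/C_0^d)$ for all $t$ in $[t_0(s),t_1(s)]$, where $t_0(s) = C_0\bp{(\tfrac{k}{2}+\log 2)/s}^{1/d} = \Theta(s^{-1/d})$ (here using that $k$ is constant) and $t_1(s) := C_0\bp{\mu(B(x,r))/(8C) - \log 2/s}^{1/d}$, which for $s$ past a constant lies within a constant factor of the $s$-independent quantity $C_0\bp{\mu(B(x,r))/(8C)}^{1/d}$. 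The window is nonempty exactly when $s \ge 8C(k/2+\log 4)/\mu(B(x,r))$, which is the ``$s$ larger than some constant'' hypothesis.

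Then I would split $\E[R] = \int_0^{t_0} \Pr[R>t]\,dt + \int_{t_0}^{t_1}\Pr[R>t]\,dt + \int_{t_1}^{\Delta_\calX}\Pr[R>t]\,dt$. The first piece is at most $t_0(s) = O(s^{-1/d})$, using only $\Pr[R>t]\le 1$. For the second piece I would bound $\Pr[R>t] \le \exp(-st^d/C_0^d)$, extend the integral to $[0,\infty)$, and apply the substitution $u = s t^d/C_0^d$, which turns it into $\frac{C_0}{d}\,s^{-1/d}\int_0^\infty u^{1/d-1}e^{-u}\,du = \frac{C_0}{d}\,\Gamma(1/d)\,s^{-1/d} = O(s^{-1/d})$, since $d$ is a fixed constant. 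For the third piece, monotonicity gives $\Pr[R>t] \le \Pr[R>t_1] \le \delta_{\min}(s)$ for all $t \ge t_1$, so that piece is at most $\Delta_\calX\,\delta_{\min}(s) = 2\Delta_\calX\exp(-\mu(B(x,r))s/(8C))$, which decays exponentially in $s$ and is in particular $O(s^{-1/d})$. Summing the three contributions yields $\epsilon_k(s) = O(s^{-1/d})$.

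The main obstacle --- really the only subtlety --- is ensuring that the two regimes outside the lemma's validity window contribute no more than $O(s^{-1/d})$: the small-radius regime $[0,t_0]$ is harmless only because its length is already $O(s^{-1/d})$, and the large-radius regime $[t_1,\Delta_\calX]$ is harmless only because the failure probability there is exponentially small, which is where positivity of the local mass $\mu(B(x,r))$ is used. This is also precisely the step that sharpens the crude estimate mentioned before the corollary: choosing a single $\delta = s^{-1/d}$ in Lemma~\ref{lem:pb-kernel-shrink-kNN} and bounding $R$ by $\Delta_\calX$ on the complementary event gives $\epsilon_k(s) = O\bp{(\log s/s)^{1/d}}$, whereas integrating the whole tail rather than truncating at one threshold removes the spurious $\log s$.
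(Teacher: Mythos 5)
Your proposal is correct and follows essentially the same route as the paper's own proof: invert the high-probability bound of Lemma~\ref{lem:pb-kernel-shrink-kNN} into a tail bound $\Pr[\|x-X_{(k)}\|_2>t]\lesssim \exp(-c\,s\,t^d)$ on its validity window, then integrate the tail over $[0,\Delta_\calX]$ split into the same three regimes (an initial segment of length $\Theta(s^{-1/d})$, the Gamma-function integral giving $O(s^{-1/d})$, and an exponentially small remainder near the boundary of the window). The only cosmetic difference is that the paper truncates the upper regime at the fixed radius $r$ rather than at your $t_1(s)$, which changes nothing in the bound.
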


We are now ready to state our estimation result for the $k$-NN kernel, which is honest and symmetric. Therefore, we can substitute the expected shrinkage rate established in Corollary \ref{cor:exp-kernel-shrink-kNN} in Theorem \ref{thm:mse_rate} to derive estimation rates for this kernel.
\begin{theorem}[Estimation Guarantees for the $k$-NN Kernel]\label{thm:knn-mse}
Suppose that $\mu$ is $(C,d)$-homogeneous on $B(x,r)$, Assumption \ref{ass:mse} holds and that Algorithm \ref{alg:DNN} is executed with $B\geq n/s$. Then, w.p. $1-\delta$:
\begin{equation}
    \|\htheta - \theta(x)\|_2 \leq \frac{2}{\lambda}\left(O \left(s^{-1/d} \right) + O\left(\psi_{\max}\sqrt{\frac{p\, s}{n} \left(\log\log(n /s) + \log(p/\delta)\right)}\right)\right)\,,
\end{equation}
and
\begin{equation}
\sqrt{\E\bb{\|\htheta - \theta(x)\|_2^2}} \leq \frac{2}{\lambda}\bp{O \bp{s^{-1/d}} + O \bp{\psi_{\max}\sqrt{\frac{s\, p\, \log \log(p\,n/s)}{n}}}} \,.
\end{equation}
By picking $s = \Theta \bp{n^{d/(d+2)}}$ and $B = \Omega \bp{n^{2/(d+2)}}$ we get $\sqrt{\E\bb{\|\htheta - \theta(x)\|_2^2}} = \tilde{O}\bp{n^{-1/(d+2)}}$.
\end{theorem}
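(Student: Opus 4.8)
The plan is to derive Theorem~\ref{thm:knn-mse} directly from the general finite-sample bound of Theorem~\ref{thm:mse_rate}, by verifying its hypotheses for the $k$-NN kernel, substituting the shrinkage rate for that kernel, and then optimizing over the sub-sampling size $s$.

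First I would check that the $k$-NN weighting $K_\omega(x, X_i, S) = 1\{X_i \in H_k(x,S)\}/k$ is honest (Assumption~\ref{ass:honest}) and symmetric (Assumption~\ref{ass:sym}). Honesty is immediate: whether $X_i$ lies among the $k$ nearest neighbors of $x$ in $S$ is a function of the covariates $\{X_j\}_{j\in S}$ alone (with an exogenous tie-breaking rule), hence independent of the outcomes $\{Y_j\}$ conditional on $\{X_j\}$. Symmetry is equally immediate, since applying a permutation $\pi$ to the sample permutes $H_k(x,S)$ correspondingly and leaves the weighting distribution invariant; the kernel also takes values in $[0,1]$. So, together with Assumption~\ref{ass:mse} (assumed here) and the standing hypothesis $B \ge n/s$, all hypotheses of Theorem~\ref{thm:mse_rate} are met.

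Next I would plug in the shrinkage rate. Corollary~\ref{cor:exp-kernel-shrink-kNN} gives, for $s$ above an absolute constant and under the $(C,d)$-homogeneity assumption, $\epsilon_k(s) = \E\bb{\|x - X_{(k)}\|_2} = O(s^{-1/d})$, hence $L_m\,\epsilon(s) = O(s^{-1/d})$. Substituting this bound for $\epsilon(s)$ into the two displays of Theorem~\ref{thm:mse_rate} reproduces verbatim the high-probability and $L_2$ bounds in the statement.

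Finally I would balance the two error terms over $s$. Suppressing the $\sqrt{\log\log(\cdot)}$ and $\sqrt{\log(p/\delta)}$ factors, the $L_2$ bound has the form $c_1 s^{-1/d} + c_2 \sqrt{s/n}$; setting $s^{-1/d} \asymp \sqrt{s/n}$ gives $s^{1+2/d} \asymp n$, i.e. $s = \Theta(n^{d/(d+2)})$, at which value both terms equal $\Theta(n^{-1/(d+2)})$. For this choice, the requirement $B \ge n/s$ reads $B = \Omega(n^{2/(d+2)})$, and reinstating the suppressed logarithmic factors yields $\sqrt{\E\bb{\|\htheta - \theta(x)\|_2^2}} = \tilde O(n^{-1/(d+2)})$. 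Essentially all the substantive work has been pushed into Theorem~\ref{thm:mse_rate} and Corollary~\ref{cor:exp-kernel-shrink-kNN} (the latter in turn resting on the high-probability shrinkage Lemma~\ref{lem:pb-kernel-shrink-kNN}), so the only points needing care at this level are the honesty/symmetry verification above and confirming that the optimal $s = \Theta(n^{d/(d+2)}) \to \infty$, so that Corollary~\ref{cor:exp-kernel-shrink-kNN} is indeed applicable.
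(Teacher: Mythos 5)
Your proposal is correct and matches the paper's argument exactly: the paper gives no separate proof of this theorem, simply noting that the $k$-NN kernel is honest (symmetry is not even needed for Theorem~\ref{thm:mse_rate}) and substituting the expected shrinkage rate $\epsilon_k(s)=O(s^{-1/d})$ from Corollary~\ref{cor:exp-kernel-shrink-kNN} into Theorem~\ref{thm:mse_rate}, then balancing $s^{-1/d}$ against $\sqrt{s/n}$ to get $s=\Theta(n^{d/(d+2)})$ and $B=\Omega(n^{2/(d+2)})$. Nothing further is needed.
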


\subsection{Asymptotic normality of the $k$-NN estimator}\label{ssec:kkN-normality}
In this section we prove asymptotic normality of $k$-NN estimator. We start by provide bounds on the incrementality of the $k$-NN kernel.

\begin{lemma}[$k$-NN Incrementality]\label{lem:kNN-inc}
Let $K$ be the $k$-NN kernel and let $\eta_k(s)$ denote the incrementality rate of this kernel. Then, the following holds:
\begin{equation*}
     \eta_k(s)=\E\bb{\E\bb{K(x, X_1, \{Z_j\}_{j=1}^s) \mid X_1}^2} = \frac{1}{(2s-1)\,k^2} \bp{\sum_{t=0}^{2k-2}\frac{a_t}{b_t}}\,,
\end{equation*}
where sequences $\bc{a_t}_{t=0}^{2k-2}$ and $\bc{b_t}_{t=0}^{2k-2}$ are defined as 
\begin{equation*}
    a_t = \sum_{i=\max \bc{0, t-(k-1)}}^{\min \bc{t, k-1}} \binom{s-1}{i}\,\binom{s-1}{t-i} \quad \quad \text{and} \quad \quad
    b_t = \sum_{i=0}^t \binom{s-1}{i}\,\binom{s-1}{t-i}
\end{equation*}
\end{lemma}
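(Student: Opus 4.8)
The plan is to directly compute $g(X_1) := \E[K(x,X_1,\{Z_j\}_{j=1}^s)\mid X_1]$ and then its second moment. Condition on the rank of $X_1$ among all $n$... wait—here $s$ is the full sample size inside the kernel, so condition on the rank of $X_1$ among the $s$ points $X_1,\ldots,X_s$ sorted by distance to $x$. By the honesty and symmetry of the $k$-NN kernel (Assumptions \ref{ass:honest}, \ref{ass:sym}), $K(x,X_1,\{Z_j\})$ equals $\frac1k\,\Ind\{X_1\in H_k(x,\{X_j\}_{j=1}^s)\}$, i.e. $\frac1k$ times the indicator that $X_1$ is among the $k$ nearest of the $s$ points. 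First I would write $U_1 := \mu(B(x,\|X_1-x\|_2))$, the probability mass closer to $x$ than $X_1$; conditional on $X_1$, the number of the remaining $s-1$ points that fall strictly inside $B(x,\|X_1-x\|_2)$ is Binomial$(s-1,U_1)$, so
\[
g(X_1) = \frac1k\,\Pr\bb{\mathrm{Bin}(s-1,U_1)\leq k-1\,\Big|\,X_1} = \frac1k\sum_{j=0}^{k-1}\binom{s-1}{j}U_1^{\,j}(1-U_1)^{s-1-j}.
\]
Because $U_1$ is the value of the CDF-type quantity at $X_1$, it is distributed $\mathrm{Uniform}(0,1)$ (assuming $\mu$ has no atoms near $x$; the paper's continuity assumptions cover this). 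Then $\eta_k(s) = \E[g(X_1)^2]$ becomes a one-dimensional integral over $u\in[0,1]$ of $g(u)^2$ against Lebesgue measure.

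The core computation is therefore
\[
\eta_k(s) = \frac{1}{k^2}\int_0^1\Bigl(\sum_{j=0}^{k-1}\binom{s-1}{j}u^j(1-u)^{s-1-j}\Bigr)^2 du .
\]
Expanding the square gives a double sum over $i,j\in\{0,\ldots,k-1\}$ of $\binom{s-1}{i}\binom{s-1}{j}\int_0^1 u^{i+j}(1-u)^{2(s-1)-i-j}\,du$, and the Beta integral evaluates the last factor as $B(i+j+1,\,2s-1-i-j) = \frac{(i+j)!\,(2s-2-i-j)!}{(2s-1)!}$. Re-indexing by $t = i+j$ (so $t$ ranges over $0,\ldots,2k-2$) and collecting, the coefficient of the common Beta factor is exactly $a_t = \sum_{i=\max\{0,t-(k-1)\}}^{\min\{t,k-1\}}\binom{s-1}{i}\binom{s-1}{t-i}$. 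The remaining step is the bookkeeping identity that shows $\frac{(i+j)!\,(2s-2-i-j)!}{(2s-1)!}$, after pulling out $\frac{1}{2s-1}$, equals $\frac{1}{(2s-1)\,b_t}$ where $b_t = \sum_{i=0}^t\binom{s-1}{i}\binom{s-1}{t-i}$; this is just the Vandermonde identity $\sum_{i=0}^t\binom{s-1}{i}\binom{s-1}{t-i} = \binom{2s-2}{t}$ rewritten, together with $\frac{t!\,(2s-2-t)!}{(2s-2)!} = 1/\binom{2s-2}{t}$. Assembling these yields $\eta_k(s) = \frac{1}{(2s-1)k^2}\sum_{t=0}^{2k-2}\frac{a_t}{b_t}$, as claimed.

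The main obstacle is not any single hard step but getting the two re-indexings to line up cleanly: one has to be careful that the range of the inner sum in $a_t$ correctly reflects the constraints $0\leq i\leq k-1$ and $0\leq t-i\leq k-1$, and that the Beta-function normalization is split as $\frac{1}{2s-1}\cdot\frac{1}{b_t}$ rather than absorbed elsewhere. A secondary point worth stating explicitly is the justification that $U_1\sim\mathrm{Uniform}(0,1)$ — this needs the distribution of $\|X_1-x\|$ to be atomless (equivalently $\mu$ continuous around $x$), which holds under the homogeneity/doubling setup used throughout; if one wants to avoid even that, one can instead argue combinatorially by conditioning on the rank of $X_1$ among $X_1,\ldots,X_s$ (uniform on $\{1,\ldots,s\}$) and then among all relevant pairs, which reproduces the same $a_t/b_t$ ratios via pure counting and matches the $L$-statistic weight formula already displayed in \S\ref{ssec:DNN}.
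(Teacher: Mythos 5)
Your proof is correct, and your algebra checks out: the binomial representation of $\E[K(x,X_1,\{Z_j\}_{j=1}^s)\mid X_1]$, the re-indexing $t=i+j$ giving $a_t$, the Beta integral $\int_0^1 u^t(1-u)^{2s-2-t}\,du=\frac{1}{(2s-1)\binom{2s-2}{t}}$, and Vandermonde's identity $b_t=\binom{2s-2}{t}$ assemble exactly into the claimed formula. However, you finish differently from the paper. The paper's proof shares your first step (conditional on $X_1$, the count of closer points is $\mathrm{Bin}(s-1,P_1)$, so the conditional weight is the binomial tail), and it expands the square and collects $a_t$ in the same way; but instead of asserting $P_1\sim\mathrm{Uniform}(0,1)$ and integrating, it evaluates $\E\bb{(1-P_1)^{2s-2-t}P_1^{t}}$ by a symmetry/telescoping trick: applying its Lemma \ref{lem:linearity-knn} with $2s-1$ points shows $\E\bb{\sum_{j\leq r}\binom{2s-2}{j}(1-P_1)^{2s-2-j}P_1^{j}}=\frac{r+1}{2s-1}$, and differencing in $r$ yields $\E\bb{(1-P_1)^{2s-2-t}P_1^{t}}=\frac{1}{(2s-1)b_t}$ without ever naming the distribution of $P_1$. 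What each buys: your route is shorter and more transparent (a one-dimensional Beta computation), but it hinges on the probability-integral-transform claim, which requires the distance distribution $\|X_1-x\|$ to be atomless; note the paper has no explicit "continuity assumption" to point to (it emphasizes that it does not assume a density), so that step needs to be flagged as an assumption rather than cited. The paper's combinatorial evaluation keeps the exposition free of any distributional claim on $P_1$, which fits its no-density framing — though, to be fair, the binomial representation itself (used by both you and the paper) already presumes almost surely no ties, so under that shared implicit hypothesis $P_1$ is in fact uniform and the two arguments are mathematically equivalent. For the same reason, your closing suggestion that a rank-conditioning argument "avoids even that" is slightly optimistic: the rank/counting route (essentially the paper's) still needs the no-ties condition at the same place.
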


We can substitute $\eta_k(s)$ in Theorem \ref{thm:normality} to prove asymptotic normality of the $k$-NN estimator. Before that, we establish the asymptotic variance of this estimator $\sigma_{n,j}(x)$, up to the smaller order terms. 

\begin{theorem}[Asymptotic Variance of $k$-NN]\label{thm:knn_var_range}
Let $j \in [p]$ be one of coordinates. Suppose that $k$ is constant while $s\rightarrow \infty$. Then, for the $k$-NN kernel
\begin{equation}
\label{eqn:knn-var}
    \sigma_{n,j}^2(x) = \frac{s^2}{n}\frac{\sigma_{j}^2(x)}{k^2\,(2s-1)}\,\zeta_k  + o(s/n)\,,
\end{equation}
where $\sigma_j^2(x)=\Var\bb{\ldot{e_j}{M_0^{-1}\psi(Z; \theta(x))}\mid X=x}$ and $\zeta_k = k + \sum_{t=k}^{2k-2} 2^{-t} \sum_{i=t-k+1}^{k-1} \binom{t}{i}.$
\end{theorem}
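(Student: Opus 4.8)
The plan is to compute the H\'ajek-projection variance $\sigma_{n,j}^2(x)=\frac{s^2}{n}\Var\!\left[g(Z_1)\right]$ with $g(Z_1)=\E\!\left[\sum_{i=1}^s K(x,X_i,\{Z_\ell\}_{\ell=1}^s)\,\ldot{e_j}{M_0^{-1}\psi(Z_i;\theta(x))}\mid Z_1\right]$, by first reducing it to $\frac{s^2}{n}\,\sigma_j^2(x)\,\eta_k(s)+o(s/n)$ and then substituting the $s\to\infty$ asymptotics of the incrementality $\eta_k(s)$ from Lemma~\ref{lem:kNN-inc}. Split the inner sum into $i=1$ and $i\ge 2$. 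For $i=1$, honesty (Assumption~\ref{ass:honest}) makes $K(x,X_1,\cdot)$ independent of all outcomes given the covariates, so this term equals $\ldot{e_j}{M_0^{-1}\psi(Z_1;\theta(x))}\,w(X_1)$ where $w(X_1):=\E[K(x,X_1,S)\mid X_1]=\tfrac1k\Pr[X_1\in H_k(x,S)\mid X_1]$; by the definition of incrementality $\E[w(X_1)^2]=\eta_k(s)=\Theta(1/s)$, and $w(X_1)$ is exponentially small unless $\|X_1-x\|$ is of order $\epsilon_k(s,\delta)$. For $i\ge 2$, conditioning on all covariates replaces $\psi(Z_i;\theta(x))$ by its conditional mean $m(X_i;\theta(x))$, which on the event $K(x,X_i,S)>0$ has norm at most $L_m\|X_i-x\|\le L_m\,\epsilon_k(s,\delta)$ (using $m(x;\theta(x))=0$, Lipschitzness from Assumption~\ref{ass:mse}, and the high-probability $k$-NN shrinkage of Lemma~\ref{lem:pb-kernel-shrink-kNN}); summing these gives a remainder $R(X_1)$ depending on $X_1$ alone, so $g(Z_1)=\ldot{e_j}{M_0^{-1}\psi(Z_1;\theta(x))}\,w(X_1)+R(X_1)$.

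I would then expand $\Var[g(Z_1)]$ into the three pieces. The leading piece is $\Var\!\left[\ldot{e_j}{M_0^{-1}\psi(Z_1;\theta(x))}\,w(X_1)\right]$: since $\E[\ldot{e_j}{M_0^{-1}\psi(Z_1;\theta(x))}\mid X_1]=\ldot{e_j}{M_0^{-1}m(X_1;\theta(x))}=O(\|X_1-x\|)$ and $w$ localizes $X_1$ near $x$, the squared-mean part is $O(\epsilon_k(s,\delta)^2/s^2)=o(\eta_k(s))$, and replacing the conditional second moment of $\ldot{e_j}{M_0^{-1}\psi(Z_1;\theta(x))}$ by $\sigma_j^2(x)$, with $O(\|X_1-x\|)$ error by the Lipschitz continuity of $\Var(\psi(Z;\theta(x))\mid X=\cdot)$ (Assumption~\ref{ass:normality}), leaves $\sigma_j^2(x)\,\E[w(X_1)^2]+o(\eta_k(s))=\sigma_j^2(x)\eta_k(s)+o(\eta_k(s))$. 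For $\Var[R(X_1)]$ the key point is that $R(X_1)$ varies with $X_1$ essentially only through the event $\{X_1\in H_k(x,S)\}$: when $X_1$ is not a $k$-NN, dropping it does not change $H_k$, so on that event $R(X_1)$ collapses to a quantity not involving $X_1$, and the event has conditional probability $k\,w(X_1)$. Hence $R(X_1)-\E[R]$ is of order $\epsilon_k(s,\delta)\,w(X_1)$ up to negligible tails, giving $\Var[R]=O\!\left(\epsilon_k(s,\delta)^2\,\eta_k(s)\right)$ and, by Cauchy--Schwarz, a cross term of order $\epsilon_k(s,\delta)\,\eta_k(s)$. Multiplying by $s^2/n$ and using $\epsilon_k(s,\delta)\to 0$ together with $\frac{s^2}{n}\eta_k(s)=\Theta(s/n)$ shows both are $o(s/n)$, so $\sigma_{n,j}^2(x)=\frac{s^2}{n}\sigma_j^2(x)\,\eta_k(s)+o(s/n)$.

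It remains to evaluate $\eta_k(s)$ as $s\to\infty$ with $k$ fixed. By Lemma~\ref{lem:kNN-inc}, $\eta_k(s)=\frac{1}{(2s-1)k^2}\sum_{t=0}^{2k-2}\frac{a_t}{b_t}$ with $b_t=\binom{2s-2}{t}$ by Vandermonde's identity. Since $\binom{s-1}{i}\sim s^i/i!$ and $\binom{2s-2}{t}\sim(2s)^t/t!$, each summand satisfies $\binom{s-1}{i}\binom{s-1}{t-i}/\binom{2s-2}{t}\to\binom{t}{i}2^{-t}$, so $a_t/b_t\to 2^{-t}\sum_{i=\max(0,\,t-k+1)}^{\min(t,\,k-1)}\binom{t}{i}$. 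For $0\le t\le k-1$ the inner sum is $\sum_{i=0}^{t}\binom{t}{i}=2^t$, contributing $1$ for each of these $k$ values of $t$; for $k\le t\le 2k-2$ it contributes $2^{-t}\sum_{i=t-k+1}^{k-1}\binom{t}{i}$. Hence $\sum_{t=0}^{2k-2}a_t/b_t\to k+\sum_{t=k}^{2k-2}2^{-t}\sum_{i=t-k+1}^{k-1}\binom{t}{i}=\zeta_k$, i.e.\ $\eta_k(s)=\frac{\zeta_k}{k^2(2s-1)}\,(1+o(1))$. Substituting into the display of the previous paragraph — absorbing the relative $(1+o(1))$ into the additive $o(s/n)$, which is legitimate since the main term is $\Theta(s/n)$ — gives $\sigma_{n,j}^2(x)=\frac{s^2}{n}\cdot\frac{\sigma_j^2(x)}{k^2(2s-1)}\,\zeta_k+o(s/n)$, as claimed. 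The main obstacle is the middle step: controlling $\Var[R(X_1)]$ at the right order, which requires recognizing that the bias-type remainder fluctuates with $X_1$ only on the low-probability event $\{X_1\in H_k(x,S)\}$, so that its variance carries an extra vanishing $\epsilon_k(s,\delta)^2$ factor on top of $\eta_k(s)$; the rest is bookkeeping of Lipschitz and shrinkage errors together with elementary binomial asymptotics.
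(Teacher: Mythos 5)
Your proposal is correct, and its overall architecture coincides with the paper's: reduce $\sigma_{n,j}^2(x)$ to $\frac{s^2}{n}\,\sigma_j^2(x)\,\eta_k(s)+o(s/n)$ via the H\'ajek projection variance, then evaluate the combinatorial expression for $\eta_k(s)$ from Lemma~\ref{lem:kNN-inc} in the limit $s\to\infty$. The middle step is organized differently, though. The paper (Lemma~\ref{lem:lower_var}) conditions on $Z_1$, compares the $k$ nearest neighbors with and without sample $1$, and uses the collapse $\sum_{i\le k}(Y_{(i)}-\tilde Y_{(i)})=S_1\,(Y_1-Y_{(k+1)})$, so the fluctuating part is isolated exactly and the constant leave-one-out term drops out of the variance; it then swaps $\mu(X_{(k+1)})$ for $\mu(X_1)$ using shrinkage and Lipschitzness. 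You instead split $g(Z_1)$ by sample index into $Y_1\,w(X_1)$ and the bias remainder $R(X_1)$, and show $\Var[R]=O(\epsilon_k^2\,\eta_k(s))$ by observing that $R$ fluctuates with $X_1$ only through the event $\{X_1\in H_k(x,S)\}$, with amplitude $O(\epsilon_k)$ since $m(x;\theta(x))=0$ — which is the same leave-one-out comparison applied to the bias part rather than to the full sum — and then dispatch the cross term by Cauchy--Schwarz. Both routes need the same localization device (integrated tails of order $\delta=1/s^2$ via the high-probability shrinkage and the Lipschitz assumptions on $m$ and on the conditional variance), and your pointwise remark that $w(X_1)$ is ``exponentially small'' far from $x$ is not needed and is the only slightly imprecise spot; the integrated version you actually use is what the paper uses too. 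For the combinatorial limit, the paper's Lemma~\ref{lem:a-b_to_zeta} expands the ratios to get $a_t/b_t = 2^{-t}\sum_i\binom{t}{i}+O(1/s)$, while your simpler asymptotics $\binom{s-1}{i}\binom{s-1}{t-i}/\binom{2s-2}{t}\to\binom{t}{i}2^{-t}$ only yield $o(1)$ — but, as you note, this suffices because the prefactor $\frac{s^2}{n(2s-1)}=\Theta(s/n)$ absorbs the relative $o(1)$ into the additive $o(s/n)$. So your argument is a valid, marginally more economical variant of the paper's proof.
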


Combining results of Theorem \ref{thm:normality}, Theorem \ref{thm:knn_var_range}, Corollary \ref{cor:exp-kernel-shrink-kNN}, and Lemma \ref{lem:kNN-inc} we have:

\begin{theorem}[Asymptotic Normality of $k$-NN Estimator]\label{thm:knn-normality}
Suppose that $\mu$ is $(C,d)$-homogeneous on $B(x,r)$. Let Assumptions \ref{ass:mse}, \ref{ass:normality} hold and suppose that Algorithm \ref{alg:DNN} is executed with $B\geq (n/s)^{5/4}$ iterations. Suppose that $s$ grows at a rate such that $s \rightarrow \infty$, $n/s \rightarrow \infty$, and also $s^{-1/d} (n/s)^{1/2} \rightarrow 0$. Let $j \in [p]$ be one of coordinates and $\sigma^2_{n,j}(x)$ be defined in Equation \eqref{eqn:knn-var}. Then, 
\begin{equation*}
 \frac{\htheta_j(x)-\theta_{j}(x)}{\sigma_{n,j}(x)} \rightarrow \normal(0,1)\,.
\end{equation*}
Finally, if $s = n^{\beta}$ and $B\geq n^{\frac{5}{4}(1-\beta)}$ with $\beta \in (d/(d+2),1)$. Then,
\begin{equation*}
\frac{\htheta_j(x)-\theta_{j}(x)}{\sigma_{n,j}(x)} \rightarrow \normal(0,1)\,.
\end{equation*}
\end{theorem}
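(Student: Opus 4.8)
The plan is to check that every hypothesis of Theorem~\ref{thm:normality} is met by the $k$-NN kernel and then read off the asymptotic variance from Theorem~\ref{thm:knn_var_range}. Two of the four structural assumptions are immediate: the $k$-NN weight $\Ind\{X_i \in H_k(x,S)\}/k$ depends on the sample only through the covariates, so the kernel is honest (Assumption~\ref{ass:honest}), and it is invariant under relabeling, so it is symmetric (Assumption~\ref{ass:sym}); Assumptions~\ref{ass:mse} and~\ref{ass:normality} are assumed directly. What remains is (i) to supply shrinkage rates $\epsilon(s,\delta)$ and $\epsilon(s)$, (ii) to supply an incrementality rate $\eta(s)$, (iii) to verify the growth conditions $s \to \infty$, $n\eta(s) \to \infty$, $\epsilon(s,\eta(s)^2) \to 0$, and (iv) to verify the rate condition~\eqref{eqn:rate-condition-normality}.

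For (i) I would invoke Lemma~\ref{lem:pb-kernel-shrink-kNN}, which gives $\epsilon_k(s,\delta) = O((\log(1/\delta)/s)^{1/d})$ whenever $2\exp(-\mu(B(x,r))s/(8C)) \le \delta \le \tfrac{1}{2}e^{-k/2}$, and Corollary~\ref{cor:exp-kernel-shrink-kNN}, which gives $\epsilon_k(s) = O(s^{-1/d})$ for $s$ large. For (ii) I would use Lemma~\ref{lem:kNN-inc}: since $k$ is constant and $0 \le a_t \le b_t$ with $a_0 = b_0 = 1$, the factor $\sum_{t=0}^{2k-2} a_t/b_t$ lies between $1$ and $2k-1$, hence $\eta_k(s) = \Theta(1/s)$. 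Then (iii) follows: $s \to \infty$ is assumed; $n\eta_k(s) = \Theta(n/s) \to \infty$ since $n/s \to \infty$; and $\eta_k(s)^2 = \Theta(1/s^2)$ decays only polynomially, so it eventually falls in the admissible window for $\delta$ in Lemma~\ref{lem:pb-kernel-shrink-kNN} (whose lower endpoint decays exponentially) and $\epsilon_k(s,\eta_k(s)^2) = O((\log(s^2)/s)^{1/d}) \to 0$.

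For (iv), Theorem~\ref{thm:knn_var_range} shows $\sigma_{n,j}^2(x) = \frac{s^2}{n}\cdot\frac{\sigma_j^2(x)}{k^2(2s-1)}\zeta_k + o(s/n)$, and since $\zeta_k \ge k \ge 1$ and (by non-degeneracy) $\sigma_j^2(x) > 0$, this gives $\sigma_{n,j}(x) = \Theta(\sqrt{s/n})$, consistent with the lower bound $\sigma_{n,\beta}(x) = \Omega(s\sqrt{\eta_k(s)/n})$ from Theorem~\ref{thm:normality}. Substituting $\epsilon_k(s) = O(s^{-1/d})$ and $\sigma_{n,j}(x) = \Theta(\sqrt{s/n})$ into~\eqref{eqn:rate-condition-normality}: the third term has ratio $(s/n)^{1/8}(\log\log(n/s))^{5/8} \to 0$ because $s/n \to 0$, and the second term has ratio $O(s^{-1/(4d)}(\log\log(n/s))^{1/2}) \to 0$ because $s \to \infty$; the first term demands exactly $s^{-1/d}(n/s)^{1/2} \to 0$, which is the extra hypothesis assumed. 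All hypotheses of Theorem~\ref{thm:normality} now hold with $\beta = e_j$, giving $\ldot{e_j}{\htheta - \theta(x)}/\sigma_{n,e_j}(x) \rightarrow_d \normal(0,1)$; since $\sigma_{n,e_j}^2(x)$ is the H\'ajek-projection variance characterized in~\eqref{eqn:knn-var}, this is the first displayed conclusion (and Slutsky lets one replace it by the leading-order expression if desired). For the final display, setting $s = n^\beta$, $\beta \in (d/(d+2),1)$, and $B \ge n^{\frac{5}{4}(1-\beta)} = (n/s)^{5/4}$ makes $s \to \infty$, $n/s = n^{1-\beta} \to \infty$, and $s^{-1/d}(n/s)^{1/2} = n^{\frac{1}{2} - \beta(\frac{1}{2}+\frac{1}{d})} \to 0$ exactly when $\beta > d/(d+2)$, so it is a special case of the first part.

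The step I expect to be most delicate is the bookkeeping around the variance: one needs $\sigma_{n,j}(x)$ to be genuinely of order $\sqrt{s/n}$ rather than smaller, so that the three $o(\sigma_{n,j}(x))$ requirements in~\eqref{eqn:rate-condition-normality} are not vacuous; this rests on the non-degeneracy $\sigma_j^2(x) = \Var(\ldot{e_j}{M_0^{-1}\psi(Z;\theta(x))} \mid X=x) > 0$ and on reading Theorem~\ref{thm:knn_var_range} carefully enough to see that the leading coefficient $\frac{s^2\zeta_k}{k^2(2s-1)} \asymp s$ does not collapse. Everything else is substitution of already-established shrinkage and incrementality rates into the general normality theorem.
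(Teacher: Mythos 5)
Your proposal follows essentially the same route as the paper's own proof: verify honesty and symmetry of the $k$-NN kernel, plug in the shrinkage rates from Lemma~\ref{lem:pb-kernel-shrink-kNN} and Corollary~\ref{cor:exp-kernel-shrink-kNN}, the incrementality $\eta_k(s)=\Theta(1/s)$ from Lemma~\ref{lem:kNN-inc}, note $\sigma_{n,j}(x)=\Theta(\sqrt{s/n})$ via Theorem~\ref{thm:knn_var_range}, and reduce the rate condition~\eqref{eqn:rate-condition-normality} to the three displayed ratios; the second part is the same specialization $s=n^\beta$. The conclusions and the overall structure are correct. The one imprecise step is your treatment of the middle term: you assert $s^{-1/(4d)}\bp{\log\log(n/s)}^{1/2}\to 0$ ``because $s\to\infty$,'' but $s\to\infty$ alone does not give this (for slowly growing $s$ the $\log\log$ factor can dominate). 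It does hold under the theorem's hypotheses, and the paper closes this by the factorization
\begin{equation*}
s^{-1/(4d)}\bp{\log\log(n/s)}^{1/2}=\bp{s^{-1/d}\bp{\tfrac{n}{s}}^{1/2}}^{1/4}\cdot\bp{\tfrac{n}{s}}^{-1/8}\bp{\log\log(n/s)}^{1/2},
\end{equation*}
so that the assumed condition $s^{-1/d}(n/s)^{1/2}\to 0$ together with $n/s\to\infty$ forces the middle term to vanish; alternatively you could note that $s^{-1/d}(n/s)^{1/2}\to 0$ forces $s\gg n^{d/(d+2)}$, so $s^{-1/(4d)}$ decays polynomially in $n$ while $\log\log(n/s)$ grows only iterated-logarithmically. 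With that justification supplied, your argument matches the paper's.
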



\paragraph{Plug-in confidence intervals.} Observe that the Theorem \ref{thm:knn_var_range} implies that if we define $\tilde{\sigma}_{n,j}^2(x)=\frac{s^2}{n}\frac{\sigma_j^2(x)}{2s-1}\,\frac{\zeta_k}{k^2}$ as the leading term in the variance, then $\frac{\sigma_{n,j}^2(x)}{\tilde{\sigma}_{n,j}^2(x)}\rightarrow_p 1$. Thus, due to Slutsky's theorem
\begin{equation}
    \frac{\hat{\theta}_j-\theta_{j}}{\tilde{\sigma}_{n,j}^2(x)}=\frac{\hat{\theta}_j-\theta_{j}}{\sigma_{n,j}^2(x)}\frac{\sigma_{n,j}^2(x)}{\tilde{\sigma}_{n,j}^2(x)} \rightarrow_d \normal(0,1)\,.
\end{equation}
Hence, we have a closed form solution to the variance in our asymptotic normality theorem. If we have an estimate $\hat{\sigma}_j^2(x)$ of the variance of the conditional moment around $x$, then we can build plug-in confidence intervals based on the normal distribution with variance $\frac{s^2}{n}\frac{\hat{\sigma}_j^2(x)}{2s-1}\,\frac{\zeta_k}{k^2}$. Note that $\zeta_k$ can be calculated easily for desired values of $k$. For instance, we have $\zeta_1 = 1, \zeta_2 = \frac{5}{2},$ and $\zeta_3 = \frac{33}{8}$ and for $k=1,2,3$ the asymptotic variance becomes $\frac{s^2}{n}\frac{\hat{\sigma}_j^2(x)}{2s-1}, \frac{5}{8}\frac{s^2}{n}\frac{\hat{\sigma}_j^2(x)}{2s-1},$ and $\frac{11}{24}\frac{s^2}{n}\frac{\hat{\sigma}_j^2(x)}{2s-1}$ respectively. 

\subsection{Adaptive choice for $s$}\label{ssec:adapt}
According to Theorem \ref{thm:knn-mse}, picking $s = \Theta(n^{d/(d+2)})$ would trade-off between bias and variance terms. Also, according to Theorem \ref{thm:knn-normality}, picking $s = n^\beta$ with $d/(d+2) < \beta < 1$ would result in asymptotic normality of the estimator. However, both choices depend on the unknown intrinsic dimension of $\mu$ on the ball $B(x,r)$. Inspired by \cite{kpotufe2011k}, we explain a data-driven way for estimating $s$.

Suppose that $\delta >0$ is given. Let $C_{n,p,\delta} = 2 \log(2p n/\delta)$ and pick $\Delta \geq \Delta_{\calX}$. For any $k \leq s \leq n$, let $H(s)$ be the $U$-statistic estimator for $\epsilon(s)$ defined as $H(s) = \sum_{S \in [n]: |S|=s} \max_{X_i \in H_k(x,S)} \|x-X_i\|_2 / \binom{n}{s} $. Each term in the summation computes the distance of $x$ to its $k$-nearest neighbor on $S$ and $H(s)$ is the average of these numbers over all $\binom{n}{s}$ possible subsets $S$ (see Remark \ref{rem:H} in Appendix \ref{app:missings} regarding to efficient computation of $H(s)$). Define $G_\delta(s) = \Delta \sqrt{C_{n,p,\delta} ps/n}$. Iterate over $s=n,\cdots,k$. Let $s_2$ be the smallest $s$ for which we have $H(s) > 2G_\delta(s)$ and let $s_1 = s_2+1$. Note that $\epsilon_k(s)$ is decreasing in $s$ and $G_\delta(s)$ is increasing in $s$. Therefore, there exists a unique $1 \leq s^* \leq n$ such that $\epsilon_k(s^*) \leq G_\delta(s^*)$ and $\epsilon_k(s^*-1) > G_\delta(s^*-1)$. We have following results.

\begin{proposition}[Adaptive Estimation]\label{thm:est-adapt}
    Let Assumptions of Theorem \ref{thm:knn-mse} hold. Suppose that $s_1$ is the output of the above process. Let $s_* = 9s_1+1$ and suppose that Algorithm \ref{alg:DNN} is executed with $s=s_*$ and $B \geq n/s_*$. Then w.p. at least  $1-2\delta$ we have $\|\htheta - \theta(x)\|_2 = O(G_{\delta}(s^*)) =O \bp{\bp{\frac{n}{p\log(2pn/\delta)}}^{-1/(d+2)}}$. Further, for $\delta = 1/n$ we have $\sqrt{\E\bb{\|\htheta - \theta(x)\|_2^2}} = \tilde{O}\bp{n^{-1/(d+2)}}$.
\end{proposition}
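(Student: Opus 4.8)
\emph{Proof plan.} This is a Lepski-type (bias--variance balancing) argument in the spirit of \cite{kpotufe2011k}: we first show that, with high probability, the data-driven index $s_1$ is of the same order as the oracle balance point $s^*$, and then invoke the fixed-$s$ estimation bound of Theorem~\ref{thm:mse_rate} (applied to the honest $k$-NN kernel, whose shrinkage rate is $\epsilon_k(\cdot)$) at $s=s_*=9s_1+1$. We begin with a uniform concentration statement for $H(\cdot)$. For each fixed $s$, $H(s)$ is a complete $U$-statistic of order $s$ whose kernel $\max_{X_i\in H_k(x,S)}\|x-X_i\|_2$ is the (within-$S$) $k$-NN distance of $x$, takes values in $[0,\Delta_{\calX}]$, and has expectation exactly $\epsilon_k(s)$. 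Hoeffding's inequality for bounded $U$-statistics gives $\Pr\bb{|H(s)-\epsilon_k(s)|\ge t}\le 2\exp(-c\,(n/s)\,t^2/\Delta_{\calX}^2)$ for an absolute $c$; choosing $t=G_\delta(s)=\Delta\sqrt{C_{n,p,\delta}\,ps/n}$ with $\Delta\ge\Delta_{\calX}$ and $C_{n,p,\delta}=2\log(2pn/\delta)$ makes the right-hand side at most $\delta/n$, so a union bound over $s\in\{k,\dots,n\}$ produces an event $E_1$ with $\Pr(E_1)\ge 1-\delta$ on which $|H(s)-\epsilon_k(s)|\le G_\delta(s)$ for every $k\le s\le n$.

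\textbf{Sandwiching $s_1$ and plugging in.} Work on $E_1$. Since $\epsilon_k(\cdot)$ is non-increasing and $G_\delta(\cdot)$ is non-decreasing, for each $s\ge s^*$ we get $H(s)\le\epsilon_k(s)+G_\delta(s)\le\epsilon_k(s^*)+G_\delta(s)\le 2G_\delta(s)$ (using $\epsilon_k(s^*)\le G_\delta(s^*)$); hence every $s$ with $H(s)>2G_\delta(s)$ satisfies $s\le s^*-1$, so $s_2\le s^*-1$ and $s_1=s_2+1\le s^*$. On the other hand, $H(s_1)\le 2G_\delta(s_1)$ (as $s_2=s_1-1$ is the last index at which the threshold is exceeded), so the lower tail of $E_1$ gives $\epsilon_k(s_1)\le H(s_1)+G_\delta(s_1)\le 3G_\delta(s_1)$. (The degenerate cases --- no index exceeds the threshold, or $s_2=n$ --- are handled separately and put $s_1$ in a regime where the bias is already negligible.) Now set $s_*=9s_1+1$, a convenient choice for which $s_1\le s_*\le 9s^*+1=O(s^*)$. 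Then, since $\epsilon_k$ is non-increasing and $s_*\ge s_1$,
\[
\epsilon_k(s_*)\ \le\ \epsilon_k(s_1)\ \le\ 3G_\delta(s_1)\ \le\ 3G_\delta(s^*),
\]
which controls the bias term $L_m\epsilon_k(s_*)$ of Theorem~\ref{thm:mse_rate}; and because $G_\delta(s)\propto\sqrt{s}$ and $\Delta\sqrt{C_{n,p,\delta}}$ dominates $\psi_{\max}\sqrt{\log\log(n/s)+\log(pn/\delta)}$ up to a constant, the stochastic term there is $O(G_\delta(s_*))=O(G_\delta(s^*))$ as well. Because $s_*$ is data-dependent, apply Theorem~\ref{thm:mse_rate} with failure probability $\delta/n$ simultaneously for all $s\in\{k,\dots,n\}$ and union bound --- the extra $\log n$ is already absorbed into $C_{n,p,\delta}$ --- to obtain an event $E_2$ with $\Pr(E_2)\ge 1-\delta$ on which the bound holds at $s=s_*$. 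Consequently, on $E_1\cap E_2$, $\|\htheta-\theta(x)\|_2=O(G_\delta(s^*))$. Finally, from $\epsilon_k(s^*-1)>G_\delta(s^*-1)$ and $\epsilon_k(s)=O(s^{-1/d})$ (Corollary~\ref{cor:exp-kernel-shrink-kNN}) one solves $s^*=O\!\bp{(n/(p\,C_{n,p,\delta}))^{d/(d+2)}}$, and substituting into $G_\delta(s^*)=\Delta\sqrt{C_{n,p,\delta}\,ps^*/n}$ yields $G_\delta(s^*)=O\!\bp{\bp{n/(p\log(2pn/\delta))}^{-1/(d+2)}}$. Since $\Pr(E_1\cap E_2)\ge 1-2\delta$, this is the high-probability claim.

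\textbf{Expectation bound, and the main obstacle.} For $\delta=1/n$ the above holds with probability at least $1-2/n$ and reads $\|\htheta-\theta(x)\|_2=\tilde{O}(n^{-1/(d+2)})$. On the complementary event, $\lambda$-strong convexity of $L(x;\cdot)$ together with $\|\psi\|_\infty\le\psi_{\max}$ and $\sum_i\alpha(X_i)=1$ gives the crude deterministic bound $\|\htheta-\theta(x)\|_2\le\frac{2}{\lambda}\sqrt{p}\,\psi_{\max}$, so $\E\bb{\|\htheta-\theta(x)\|_2^2}\le\tilde{O}(n^{-2/(d+2)})+\frac{2}{n}\cdot\frac{4p\psi_{\max}^2}{\lambda^2}=\tilde{O}(n^{-2/(d+2)})$ because $2/(d+2)\le1$; taking square roots finishes. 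I expect the main obstacle to be the sandwiching step --- converting the threshold-crossing definition of $s_1$ into the two usable inequalities $s_1\le s^*$ and $\epsilon_k(s_1)\le 3G_\delta(s_1)$ --- which relies crucially on the uniform-in-$s$ concentration of $H$ and on the monotonicity of $\epsilon_k$ and $G_\delta$; a secondary technical point is the data-dependence of $s_*$ when feeding it into the fixed-$s$ estimation theorem, which we dispatch by the union bound over the $n$ candidate sub-sample sizes (its cost already built into $C_{n,p,\delta}$).
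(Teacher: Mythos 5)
Your proposal is correct and takes essentially the same route as the paper's proof: Hoeffding's inequality for $U$-statistics plus a union bound gives the uniform deviation $|H(s)-\epsilon_k(s)|\leq G_\delta(s)$ (Lemma~\ref{lem:eps-G}), the stopping rule then sandwiches $s_1$ against $s^*$ (Lemma~\ref{lem:star-rel}), and Theorem~\ref{thm:mse_rate} applied at $s=s_*$ together with $\epsilon_k(s)=O(s^{-1/d})$ from Corollary~\ref{cor:exp-kernel-shrink-kNN} yields $O(G_\delta(s^*))=O\bp{(n/(p\log(2pn/\delta)))^{-1/(d+2)}}$. Your only deviations are minor refinements the paper leaves implicit: you control the bias via $\epsilon_k(s_*)\leq\epsilon_k(s_1)\leq 3G_\delta(s_1)\leq 3G_\delta(s^*)$ (needing only $s_1\leq s^*$) rather than the paper's two-sided bound $s^*\leq s_*\leq 10 s^*$, you add an explicit union bound over $s\in\{k,\dots,n\}$ to justify invoking the fixed-$s$ theorem at the data-dependent $s_*$, and you spell out the expectation bound by combining the high-probability event with the crude deterministic bound $\|\htheta-\theta(x)\|_2\leq 2\sqrt{p}\,\psi_{\max}/\lambda$.
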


\begin{proposition}[Adaptive Asymptotic Normality]\label{thm:normal-adapt}
Let Assumptions of Theorem \ref{thm:knn-normality} hold. Suppose that $s_1$ is the output of the above process when $\delta = 1/n$ and $s_* = 9s_1+1$. For any $\zeta \in (0, (\log(n)-\log(s_1)-\log\log^2(n))/\log(n)))$ define $s_{\zeta} = s_* n^{\zeta}$. Suppose that Algorithm \ref{alg:DNN} is executed with $s=s_{\zeta}$ and $B \geq (n/s_{\zeta})^{5/4}$, then for any coordinate $j \in [p]$, we have
 $\frac{\htheta_j(x)-\theta_{j}(x)}{\sigma_{n,j}(x)} \rightarrow \normal(0,1)\,.$
\end{proposition}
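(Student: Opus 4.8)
The plan is to reduce the claim to an application of Theorem~\ref{thm:knn-normality} by showing that the data-driven choice $s = s_\zeta = s_* n^\zeta$, with $s_* = 9s_1 + 1$, falls in the admissible window $s = n^\beta$ with $\beta \in (d/(d+2), 1)$, with high probability, and that the residual failure probability does not affect the convergence in distribution. First I would recall the deterministic benchmark $s^*$ defined by $\epsilon_k(s^*) \le G_\delta(s^*)$ and $\epsilon_k(s^*-1) > G_\delta(s^*-1)$; since $\epsilon_k(s) = \Theta(s^{-1/d})$ (Corollary~\ref{cor:exp-kernel-shrink-kNN}) and $G_\delta(s) = \Delta\sqrt{C_{n,p,\delta}\, p s/n}$ scales like $\sqrt{s/n}$ up to logs, equating the two rates gives $s^* = \Theta\big((n/(p\log(2pn/\delta)))^{d/(d+2)}\big)$, i.e. $s^* = n^{d/(d+2)}$ up to polylog factors. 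The key deterministic sandwiching step, borrowed from the machinery behind Proposition~\ref{thm:est-adapt} and from \cite{kpotufe2011k}, is that with probability at least $1-2\delta = 1 - 2/n$ the empirical threshold $s_1$ satisfies $c\, s^* \le s_1 \le C\, s^*$ for absolute constants; concentration of the $U$-statistic $H(s)$ around $\epsilon_k(s)$ (via the $\epsilon(s,\delta)$ high-probability shrinkage bound of Lemma~\ref{lem:pb-kernel-shrink-kNN}, Hoeffding for bounded $U$-statistics, and monotonicity of $H$ and $G_\delta$) is what produces this two-sided bound. Hence $s_* = 9s_1 + 1 = \Theta(s^*) = n^{d/(d+2)}$ up to polylog, and therefore $s_\zeta = s_* n^\zeta = n^{d/(d+2) + \zeta}$ up to polylog.

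Next I would verify that the exponent of $s_\zeta$ lies strictly inside $(d/(d+2), 1)$. The lower side is immediate because $\zeta > 0$ and $s_* \ge s^* \gtrsim n^{d/(d+2)}$, so $s_\zeta/n^{d/(d+2)} \to \infty$; this suffices to kill the bias term relative to the variance, since Theorem~\ref{thm:knn-normality} only needs $s^{-1/d}(n/s)^{1/2} \to 0$, equivalently $s/n^{d/(d+2)} \to \infty$. For the upper side I would use the explicit constraint $\zeta \in \big(0, (\log n - \log s_1 - \log\log^2 n)/\log n\big)$: unpacking it gives $s_\zeta = s_* n^\zeta \le (9s_1+1) \cdot n/(s_1 \log^2 n) = O(n/\log^2 n)$, so $n/s_\zeta \ge \log^2 n \to \infty$ and $s_\zeta = o(n)$, which gives $\beta < 1$ and in particular $s_\zeta \to \infty$, $n/s_\zeta \to \infty$. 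Thus all three rate hypotheses of Theorem~\ref{thm:knn-normality} ($s\to\infty$, $n/s\to\infty$, $s^{-1/d}(n/s)^{1/2}\to 0$) are met on the event $\{c s^* \le s_1 \le C s^*\}$.

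Finally, I would handle the randomness of $s_\zeta$: the choice of sub-sample size is a measurable function of $\bX = (X_1,\dots,X_n)$ only, and honesty of the $k$-NN kernel (Assumption~\ref{ass:honest}) means that conditional on $\bX$ the weights are independent of the $Y_i$'s; but more simply, on the high-probability event $E_n = \{c s^* \le s_1 \le C s^*\}$ with $\mathbf{P}(E_n) \ge 1 - 2/n \to 1$, the (random) exponent $\beta_n = \log s_\zeta/\log n$ lies in a fixed compact subinterval $[d/(d+2)+\zeta_0,\, 1-\zeta_0]$ of $(d/(d+2),1)$ eventually, modulo the polylog slack which only affects $G_\delta$ and hence $\beta_n$ by an $o(1)$ amount. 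One then applies Theorem~\ref{thm:knn-normality} along this (data-dependent but admissible) sequence and invokes a standard "convergence on an event of probability $\to 1$ implies convergence in distribution" argument: for any bounded continuous $g$, $\mathbf{E}[g((\htheta_j - \theta_j)/\sigma_{n,j})] = \mathbf{E}[g(\cdot)\mathbf{1}_{E_n}] + \mathbf{E}[g(\cdot)\mathbf{1}_{E_n^c}]$, the second term is $O(\|g\|_\infty/n) \to 0$, and the first converges to $\mathbf{E}[g(N(0,1))]$ by Theorem~\ref{thm:knn-normality} applied conditionally. The main obstacle I anticipate is making the "data-dependent $s$" step fully rigorous: Theorem~\ref{thm:knn-normality} is stated for a deterministic schedule $s = s(n)$, so one must either (i) re-run its proof carrying the conditioning on $\bX$ throughout — feasible precisely because honesty decouples the weight geometry from the outcomes, so the Hájek-projection CLT argument goes through conditionally — or (ii) show the conclusion is insensitive to $s$ varying within the admissible polylog band and cover that band by finitely many deterministic schedules, taking a union bound. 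Either route works; the honesty-based conditioning is the cleaner one, and it is the only genuinely new ingredient beyond stitching together Theorem~\ref{thm:knn-normality}, Corollary~\ref{cor:exp-kernel-shrink-kNN}, and the concentration estimates already used for Proposition~\ref{thm:est-adapt}.
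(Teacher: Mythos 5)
Your reduction has a genuine gap at the point where you identify $s^*$ (and hence $s_* = 9s_1+1$) with $n^{d/(d+2)}$ up to polylogarithmic factors. You invoke $\epsilon_k(s) = \Theta(s^{-1/d})$, but Corollary~\ref{cor:exp-kernel-shrink-kNN} only provides the upper bound $\epsilon_k(s) = O(s^{-1/d})$; the homogeneity condition of Definition~\ref{def:hom-measure} is one-sided (it lower-bounds the mass of small balls), so it does not rule out measures for which $\epsilon_k(s)$ decays much faster than $s^{-1/d}$ (e.g.\ an atom at $x$, or locally lower-dimensional behavior), in which case $s^*$ --- and with it $s_\zeta = s_* n^{\zeta}$ for small $\zeta$ --- can be far below $n^{d/(d+2)}$. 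Your ``lower side'' step, which needs $s_\zeta/n^{d/(d+2)} \to \infty$ in order to land in the window $s = n^{\beta}$, $\beta \in (d/(d+2),1)$ of Theorem~\ref{thm:knn-normality}, therefore does not follow from anything proved in the paper. (The conclusion remains true in such cases because the actual bias is then much smaller than $s^{-1/d}$; it is your verification route, not the statement, that breaks.)

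The paper's proof avoids any lower bound on $\epsilon_k$ by never leaving the general Theorem~\ref{thm:normality}: on the event of Lemma~\ref{lem:star-rel} one has $s_* \ge s^*$, hence by monotonicity and the defining property of $s^*$, $\epsilon_k(s_\zeta) \le \epsilon_k(s^*) \le G_{1/n}(s^*)$; since $\sigma_{n,j}(x) = \Theta(\sqrt{s_\zeta/n})$, the bias-to-standard-deviation ratio is bounded by $G_{1/n}(s^*)\,(n/s_\zeta)^{1/2} = O\bigl(\sqrt{s^*/s_\zeta}\,\log n\bigr) = O\bigl(n^{-\zeta/2}\log n\bigr) \to 0$, using only $s_\zeta \ge s^* n^{\zeta}$. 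The remaining terms of condition \eqref{eqn:rate-condition-normality} vanish because the upper constraint on $\zeta$ gives $s_\zeta/n \le 1/\log^2 n$ (a point you do get right), together with $n\,\eta_k(s_\zeta) = \Theta(n/s_\zeta) \to \infty$ and $\epsilon_k(s_\zeta,\eta_k(s_\zeta)^2) \to 0$. Your sandwich $s_1 \in [(s^*-1)/9,\, s^*]$ matches Lemma~\ref{lem:star-rel}, and your attention to the data-dependence of $s_\zeta$ is legitimate (the paper treats it informally), but to repair the argument you should replace the ``$s_\zeta$ lies in the $\beta$-window'' step by the direct comparison of $\epsilon_k(s_\zeta)$ with $G_{1/n}(s^*)$ described above.
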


\bibliographystyle{plainnat}
\bibliography{refs}

\newpage
\begin{appendix}
\section{Related work}\label{sec:related-work}
There exists a vast literature on average treatment effect estimation in high-dimensional settings. The key challenge in such settings is the problem of overfitting which is usually handled by adding regularization terms. However, this leads to a shrinked estimate for the average treatment effect and therefore not desirable. The literature has taken various approaches to solve this issue. For instance, \cite{belloni2014high, belloni2014inference} used a two-step method for estimating average treatment effect where in the first step feature-selection is accomplished via a lasso and then treatment effect is estimated using selected features. \cite{athey2018approximate} studied approximate residual balancing where a combination of weight balancing and regression adjustment is used for removing undesired bias and for achieving a double robust estimator. \cite{chernozhukov2018double, chernozhukov2018double} considered a more general semi-parametric framework and studied debiased/double machine learning methods via first order Neyman orthogonality condition. \cite{mackey2017orthogonal} extended this result to higher order moments. Please refer to \cite{athey2017state,mullainathan2017machine,belloni2017program} for a review on this literature.

However, in many applications, researchers are interested in estimating heterogeneous treatment effect on various sub-populations. One effective solution is to use one of the methods described in previous paragraph to estimate problem parameters and then project such estimations onto the sub-population of interest. However, these approaches usually perform poorly when there is a model mis-specification, i.e., when the true underlying model does not belong to the parametric search space. Consequently, researchers have studied non-parametric estimators such as $k$-NN estimators, kernel estimators, and random forests. While these non-parametric estimators are very robust to model mis-specification and work well under mild assumptions on the function of interest, they suffer from the curse of dimensionality (see e.g., \cite{bellman1961adaptive, robins1997toward, friedman2001elements}). Therefore, for applying these estimators in high-dimensional settings it is necessary to design and study non-parametric estimators that are able to overcome curse of dimensionality when possible.

The seminal work of \cite{wager2017estimation} utilized random forests originally introduced by \cite{breiman2001random} and adapted them nicely for estimating heterogeneous treatment effect. In particular, the authors demonstrated how the recursive partitioning idea, explained in \cite{athey2016recursive} for estimating heterogeneity in causal settings, can be further analyzed to establish asymptotic properties of such estimators. The main premise of random forests is that they are able to adaptively select nearest neighbors and that is very desirable in high-dimensional settings where discarding uninformative features is necessary for combating the curse of dimensionality. In a follow-up work, they extended these results and introduced Generalized Random Forests for more general setting of solving generalized method of moment (GMM) equations \cite{athey2019generalized}. There has been some interesting developments of such ideas to other settings. \cite{fan2018dnn} introduced Distributional Nearest Neighbor (DNN) where they used $1$-NN estimators together with sub-sampling and explained that by precisely combining two of these estimators for different sub-sampling sizes, the first order bias term can be efficiently removed. \cite{friedberg2018local} paired this idea with a local linear regression adjustment and introduced Local Linear Forests in order to improve forest estimations for smooth functions. \cite{oprescu2018orthogonal} incorporated the double machine learning methods of \cite{chernozhukov2018double} into GMM framework of \cite{athey2019generalized} and studied Orthogonal Random Forests in partially linear regression models with high-dimensional controls. Although forest kernels studied in \cite{wager2017estimation} and \cite{athey2019generalized} seem to work well in high-dimensional applications, to the best of our knowledge, there still does not exists a theoretical result supporting it. In fact, all existing theoretical results suffer from the curse of dimensionality as they depend on the dimension of problem $D$.

The literature on machine learning and non-parametric statistics has recently studied how these worst-case performances can be avoided when the intrinsic dimension of problem is smaller than $D$. Please refer to \cite{cutler1993review} for different notions of intrinsic dimension in metric spaces. \cite{dasgupta2008random} studied random projection trees and showed that the structure of these trees do not depend on the actual dimension $D$, but rather on the intrinsic dimension $d$. They used the notion of Assouad Dimension, introduced by \cite{assouad1983plongements}, and proved that using random directions for splitting, the number of levels required for halving the diameter of a leaf scales as $O(d \log d)$. The follow-up work \cite{verma2009spatial} generalized these results for some other notions of dimension. \cite{kpotufe2012tree} extended this idea to the regression setting and proved integrated risk bounds for random projection trees that were only dependent on intrinsic dimension. \cite{kpotufe2011k, kpotufe2013adaptivity} studied this in the context of $k$-NN and kernel estimations and established uniform point-wise risk bounds only depending on the local intrinsic dimension.

Our work is deeply rooted in the literature on intrinsic dimension explained above, literature on $k$-NN estimators (see e.g, \cite{mack1981local, samworth2012optimal, gyorfi2006distribution, biau2015lectures, berrett2019efficient, fan2018dnn}), and generalized method of moments (see e.g., \citep{tibshirani1987local, staniswalis1989kernel, fan1998local, hansen1982large, stone1977consistent, lewbel2007local, mackey2017orthogonal}). We adapt the framework of \cite{athey2019generalized} and \cite{oprescu2018orthogonal} and solve a generalized moment problem using a DNN estimator, originally introduced and studied by \cite{fan2018dnn}. We establish consistency and inference properties of this estimator and prove that these properties only depend on the local intrinsic dimension of problem. In particular, we prove that the finite sample estimation error of order $n^{-1/(d+2)}$ together with $n^{1/(d+2)}$-asymptotically normality result of DNN estimator for solving the generalized moment problem regardless of how big the actual dimension $D$ is. 

Our result differs from existing literature on intrinsic dimension (e.g., \cite{kpotufe2011k, kpotufe2013adaptivity}) since in addition to estimation guarantees for the regression setting, we also allow valid inference in solving conditional moment equations. Our asymptotic normality result is different from existing results for $k$-NN (see e.g., \cite{mack1981local}), generalized method of moments (see e.g., \cite{lewbel2007local}). This paper complements the work of \cite{fan2018dnn} and extends it to the generalized method of moment setting. Furthermore, we relax the common assumption on the existence of density for covariates and prove that DNN estimators are adaptive to intrinsic dimension.

We also provide the exact expression for the asymptotic variance of DNN estimator built using a $k$-NN kernel, which enables plug-in construction of confidence intervals, rather than the bootstrap method of \citep{efron1982jackknife} which was used by \citep{wager2017estimation, athey2019generalized, fan2018dnn}. While establishing consistency and asymptotic normality of our estimator, we also provide more general bounds on kernel shrinkage rate and also incrementality which can be useful for establishing asymptotic properties in other applications. One such application is given in high-dimensional settings where the exact nearest neighbor search is computationally expensive and Approximate Nearest Neighbor (ANN) search is often replaced in order to reduce this cost. Our flexible result allows us to use the state-of-the-art ANN algorithms (see e.g., \cite{andoni2017optimal,andoni2018approximate}) while maintaining consistency and asymptotic normality.

\section{Examples of spaces with small intrinsic dimension}\label{ssec:examples}
In this section we provide examples of metric spaces that have small local intrinsic dimension. Our first example covers the setting where the distribution of data lies on a low-dimensional manifold (see e.g., \cite{roweis2000nonlinear,tenenbaum2000global,belkin2003laplacian}). For instance, this happens for image inputs. Even though images are often high-dimensional (e.g., $4096$ in the case of $64$ by $64$ images), all these images belong intrinsically to a $3$-dimensional manifold.
\begin{example}[Low dimensional manifold (adapted from \cite{kpotufe2011k})]\label{ex:manifold}
Consider a $d$-dimensional submanifold $\calX \subset \reals^D$ and let $\mu$ have lower and upper bounded density on $\calX$. The local intrinsic dimension of $\mu$ on $B(x,r)$ is $d$, provided that $r$ is chosen small enough and some conditions on curvature hold. In fact, Bishop-Gromov theorem (see e.g., \cite{carmo1992riemannian}) implies that under such conditions, the volume of ball $B(x,r) \cap \calX$ is $\Theta(r^{d})$. This together with the lower and upper bound on the density implies that $\mu(B(x,r) \cap \calX)/\mu(B(x, \theta r) \cap \calX) = \Theta(\theta^d)$, i.e. $\mu$ is $(C,d)$-homogeneous on $B(x,r)$ for some $C>0$.
\end{example}

Another example which happens in many applications, is sparse data. For example, in the bag of words representation of text documents, we usually have a vocabulary consisting of $D$ words. Although $D$ is usually large, each text document contains only a small number of these words. In this application, we expect our data (and measure) to have smaller intrinsic dimension. Before stating this example, let us discuss a more general example about mixture distributions.
\begin{example}[Mixture distributions (adapted from \cite{kpotufe2011k})]\label{ex:mixture}
Consider any mixture distribution $\mu=\sum_i \pi_i \mu_i$, with each $\mu_i$ defined on $\calX$ with potentially different supports. Consider a point $x$ and note that if $x \not\in \supp(\mu_i)$, then there exists a ball $B(x,r_i)$ such that $\mu_i(B(x,r_i))=0$. This is true since the support of any probability measure is always closed, meaning that its complement is an open set. Now suppose that $r$ is chosen small enough such that for any $i$ satisfying $x \in \supp(\mu_i)$, $\mu_i$ is $(C_i,d_i)$-homogeneous on $B(x,r)$, while for any $i$ satisfying $x \not\in \supp(\mu_i)$ we have $\mu_i(B(x,r))=0$. Then,
 \begin{align*}
    \mu(B(x,r))
    =~&\sum_i \pi_i \mu_i(B(x,r)) = \sum_{i: \mu_i(B(x,r))=0} \pi_i \mu_i(B(x,r)) + \sum_{i: \mu_i(B(x,r))>0} \pi_i \mu_i(B(x,r)) \\
    \leq~& C \theta^{-d} \sum_{i: \mu_i(B(x,r))>0} \pi_i \mu_i(B(x,\theta r))
    =~ C \theta^{-d}  \sum_{i} \pi_i \mu_i(B(x,\theta r) = C \theta^{-d} \mu(B(x,\theta r))\,,
\end{align*}
where $C = \max_{i: \mu_i(B(x,r))>0} C_i$ and $d = \max_{i: \mu_i(B(x,r))>0} d_i$ and we used the fact that if $\mu_i(B(x,r))=0$ then $\mu_i(B(x,\theta r)) = 0$. Therefore, $\mu$ is $(C,d)$-homogeneous on $B(x,r)$.
\end{example}

This result applies to the case of $d$-sparse data and is explained in the following example.

\begin{example}[$d$-sparse data]\label{ex:sparse}
Suppose that $\calX \subset \reals^D$ is defined as
$$\calX = \bc{(x_1,x_2,\ldots,x_D) \in \reals^D: \sum_{i=1}^D \,1\bc{x_i \neq 0} \leq d}.$$
Let $\mu$ be a probability measure on $\calX$. In this case, we can write $\calX$ as the union of $k=\binom{D}{d}, d$-dimensonal hyperplanes in $\reals^D$. In fact,
\begin{equation*}
    \calX = \cup_{1 \leq i_1 < i_2 < \cdots i_d \leq D} \bc{(x_1,x_2,\cdots,x_D) \in \reals^D: x_j = 0,~ j \not \in \bc{i_1,i_2,\ldots,i_d}}\,.
\end{equation*}
Letting $\mu_{i_1,i_2,\ldots,i_d}$ be the probability measure restricted to the hyperplane defined by $x_j = 0, j\not\in \bc{i_1,i_2,\ldots,i_d}$, we can express $\mu = \sum_{1 \leq i_1 < i_2 < \cdots i_d \leq D} \pi_{i_1,i_2,\ldots,i_d} \mu_{i_1,i_2,\ldots,i_d}$. Therefore, the result of Example \ref{ex:mixture} implies that for any $x \in \calX$, for $r$ that is small enough $\mu$ is $(C,d)$-homogeneous on $B(x,r)$.
\end{example}

Our final example is about the product measure. This allows us to prove that any concatenation of spaces with small intrinsic dimension has a small intrinsic dimension as well.

\begin{example}[Concatenation under the product measure]\label{ex:concat}
Suppose that $\mu_i$ is a probability measure on $\calX_i \subset \reals^{D_1}$. Define $\calX = \bc{(z_1,z_2) \mid z_1 \in \calX_1, z_2 \in \calX_2}$ and let $\mu = \mu_1 \times \mu_2$ be the product measure on $\calX$, i.e., $\mu(E_1 \times E_2) = \mu_1(E_1) \times \mu_2(E_2)$ for $E_i$ that is $\mu_i$-measurable, $i=1,2$. Suppose that $\mu_i$ is $(C_i,d_i)$-homogeneous on $B(x_i,r_i)$ and let $x=(x_1,x_2)$. Then, $\mu$ is $(C,d)$-homogeneous on $B(x,r)$, where $d=d_1+d_2, r =\min \bc{r_1,r_2}$ and $C=(C_1\,C_2\,r^{-(d_1+d_2)}\,2^{(d_1+d_2)/2})/(r_1^{-d_1}\,r_2^{-d_2})$. To establish this, let $r=\min \bc{r_1,r_2}$ and note that for any $\theta \in (0,1)$ we have
\begin{align*}
    \mu\bp{B(x,r)} 
    \leq~& \mu \bp{B(x_1,r) \times B(x_2,r)} = \mu_1 \bp{B(x_1,r)} \times \mu_2 \bp{B(x_2,r)} \\
    &\leq \mu_1 \bp{B(x_1,r_1)} \times \mu_2 \bp{B(x_2,r_2)} \\
    \leq~& \bb{C_1 \bp{\frac{r\theta}{r_1\sqrt{2}}}^{-d_1} \mu_1 \bp{B \bp{x_1, \frac{r\theta}{\sqrt{2}}}}} \times \bb{C_2 \bp{\frac{r\theta}{r_2\sqrt{2}}}^{-d_2} \mu_2 \bp{B\bp{x_2, \frac{r\theta}{\sqrt{2}}}}} \\
    =~& \frac{C_1\,C_2\,r^{-(d_1+d_2)}}{r_1^{-d_1}\,r_2^{-d_2}\,\sqrt{2}^{-(d_1+d_2)}} \theta^{-d_1-d_2} \mu \bp{B(x_1, r\theta/\sqrt{2}) \times B(x_2, r\theta/\sqrt{2})} \\
    \leq~& \frac{C_1\,C_2\,r^{-(d_1+d_2)\,2^{(d_1+d_2)/2}}}{r_1^{-d_1}\,r_2^{-d_2}} \theta^{-(d_1+d_2)} \mu \bp{B(x,r \theta)}\,,
\end{align*}
where we used two simple inequalities that $\|(z_1,z_2)-(x_1,x_2)\|_2 \leq r$ implies $\|z_i-x_i\|_2 \leq r,~i=1,2,$ and further $\|z_i-x_i\|_2 \leq r/\sqrt{2},~i=1,2,$ implies $\|(z_1,z_2)-(x_1,x_2)\|_2 \leq r$. 
\end{example}

\section{Simulation Setting}\label{app:sim}
Here we explain the settings for simulations shown in Figures \ref{fig:distribution} and \ref{fig:coverage1d}.

\subsection{Single test point}
The data for single test point simulation, shown in Figure \ref{fig:distribution}, has been generated as follows. Here $p=1$, $D=20$ and $d=2$. All the points are generated using $X_i = A X_i^{\text{low}}$, where $A \in \reals^{D \times d}$ and entries of $A$ are independently sampled from $U[-1,1]$. Components of each $X_i^{\text{low}}$ are also generated independently from $U[-1,1]$. We generate a fix test point $x_{\text{test}} = A x_{\text{test}}^{\text{low}}$ and keep the matrix $A$ throughout all Monte-Carlo iterations fixed. In each Monte-Carlo iteration, we generate $n=20000$ training points as mentioned before. The values of $Y_i$ are generated according to $Y_i = f(X_i) + \varepsilon_{i}$, where $f(X) = \frac{1}{1+\exp(-3X[0])}$, and $\varepsilon_i \sim \normal(0,\sigma_e^2)$ with $\sigma_e = 1$. We are interested in estimate and inference for $f(x_{\text{test}})$ which is equivalent to solving for $\E[\psi(Z;\theta(x)) \mid X=x] = 0$ with $\psi(Z;\theta(x)) = Y - \theta(x)$ at $x = x_{\text{test}}$. We run DNN (Algorithm \ref{alg:DNN}) for $k=1,2$ and $5$ with parameter $s = s_{\zeta}$ chosen using Proposition \ref{thm:normal-adapt} with $\zeta=0.1$ over $1000$ Monte-Carlo iterations and report the histogram and quantile-quantile plot of estimates compared to theoretical asymptotic normal distribution of estimates stemming from our characterization. In our simulations, we considered the complete $U$-statistic case, i.e., $B = \binom{n}{s}$.

\subsection{Multiple test point}
The data for the multiple test point simulation, shown in Figure \ref{fig:coverage1d}, has been generated very similarly to the single test point setting. The only difference is that instead of generating a single test point we generate $100$ test points. These test points together with matrix $A$ are kept fixed throughout all $1000$ Monte-Carlo iterations. We compare the performance of DNN (Algorithm \ref{alg:DNN}) with parameter $s = s_{\zeta}$ chosen using Proposition \ref{thm:normal-adapt} with $\zeta = 0.1$ with two benchmarks that set $s_d=n^{1.05 d/(d+2)}$ and $s_D=n^{1.05 D/(D+2)}$. This process has been repeated for $k=1,2$ and $5$ and the coverage over a single run for all test points, the empirical coverage over $1000$ runs, and chosen $s_{\zeta}$ versus $s_d$ are depicted. 

\section{Nuisance parameters and heterogeneous treatment effects} \label{sec:nuisance}
Using the techniques of \cite{oprescu2018orthogonal}, our work also easily extends to the case where the moments depend on, potentially infinite dimensional, nuisance components $h_0$, that also need to be estimated, i.e.,
\begin{equation}
\textstyle{\theta(x) \text{~solves:~} m(x; \theta, h_0) = \E[\psi(Z; \theta, h_0)\mid x] = 0.}
\end{equation}
If the moment $m$ is orthogonal with respect to $h$ and assuming that $h_0$ can be estimated on a separate sample with a conditional MSE rate of
\begin{equation}
\textstyle{\E[(\hat{h}(z) - h_0(z))^2 | X=x] = o_p(\epsilon(s) + \sqrt{s/n})\,,}
\end{equation}
then using the techniques of \cite{oprescu2018orthogonal}, we can argue that both our finite sample estimation rate and our asymptotic normality rate, remain unchanged, as the estimation error only impacts lower order terms. This extension allows us to capture settings like heterogeneous treatment effects, where the treatment model also needs to be estimated when using the orthogonal moment as
\begin{equation}
\textstyle{\psi(z; \theta, h_0) = (y - q_0(x, w) - \theta (t - p_0(x, w)))\, (t-p_0(x,w))\,,}
\end{equation}
where $y$ is the outcome of interest, $t$ is a treatment, $x, w$ are confounding variables, $q_0(x, w) = \E[Y | X=x, W=w]$ and $p_0(x,w)=E[T | X=x, W=w]$. The latter two nuisance functions can be estimated via separate non-parametric regressions. In particular, if we assume that these functions are sparse linear in $w$, i.e.:
\begin{align}
q_0(x,w) =& \ldot{\beta(x)}{w}\,,&p_0(x,w) = \ldot{\gamma(x)}{w}\,.
\end{align}
Then we can achieve a conditional mean-squared-error rate of the required order by using the kernel lasso estimator of \cite{oprescu2018orthogonal}, where the kernel is the sub-sampled $k$-NN kernel, assuming the sparsity does not grow fast with $n$. 

\section{Proof of Theorem \ref{thm:mse_rate}}\label{app:mse_rate}
\begin{lemma}\label{lem:mse_lem1}
For any $\theta\in \Theta$:
\begin{equation}
    \|\theta - \theta(x)\|_2 \leq \frac{2}{\lambda} \| m(x; \theta)\|_2\,.
\end{equation}
\end{lemma}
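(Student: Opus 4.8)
The plan is to use only part~1 of Assumption~\ref{ass:mse}: $m(x;\cdot)$ is the gradient in $\theta$ of a $\lambda$-strongly convex loss $L(x;\cdot)$, and to combine this with the defining property $m(x;\theta(x))=0$. First I would record that, since $L(x;\cdot)$ is convex and its gradient vanishes at $\theta(x)$, the point $\theta(x)$ is the (unique) global minimizer of $L(x;\cdot)$; in particular $\theta(x)$ is a stationary point, which is where the moment equation $m(x;\theta(x))=0$ enters.

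The core step is the standard strong-monotonicity consequence of $\lambda$-strong convexity: for all $\theta,\theta'$, $\ldot{m(x;\theta)-m(x;\theta')}{\theta-\theta'}\geq \lambda\|\theta-\theta'\|_2^2$. Rather than quote it, I would derive it in one line by setting $g(t)=L(x;\theta(x)+t(\theta-\theta(x)))$ for $t\in[0,1]$, noting $g$ is $\lambda\|\theta-\theta(x)\|_2^2$-strongly convex with $g'(t)=\ldot{m(x;\theta(x)+t(\theta-\theta(x)))}{\theta-\theta(x)}$, so that $g'(1)-g'(0)\geq \lambda\|\theta-\theta(x)\|_2^2$. Since $g'(0)=\ldot{m(x;\theta(x))}{\theta-\theta(x)}=0$, this gives $\ldot{m(x;\theta)}{\theta-\theta(x)}\geq \lambda\|\theta-\theta(x)\|_2^2$.

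Finally I would apply Cauchy--Schwarz on the left-hand side: $\lambda\|\theta-\theta(x)\|_2^2\leq \ldot{m(x;\theta)}{\theta-\theta(x)}\leq \|m(x;\theta)\|_2\,\|\theta-\theta(x)\|_2$. Dividing by $\|\theta-\theta(x)\|_2$ (the claim is trivial when this is zero) yields $\|\theta-\theta(x)\|_2\leq \tfrac{1}{\lambda}\|m(x;\theta)\|_2\leq \tfrac{2}{\lambda}\|m(x;\theta)\|_2$, which is the asserted bound.

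\textbf{Main obstacle.} There is essentially no analytic difficulty here; the only things worth care are (i) the justification that $\theta(x)$ is a stationary point of $L(x;\cdot)$ despite any constraint set $\Theta$ — this is exactly what the hypothesis ``$\theta(x)$ solves $m(x;\theta)=0$'' supplies — and (ii) the factor $2$ in the statement is simply slack, as the argument in fact delivers the sharper constant $1/\lambda$.
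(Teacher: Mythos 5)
Your proposal is correct. It takes a slightly different route from the paper: the paper combines the strong-convexity lower bound at $\theta(x)$, namely $L(x;\theta)-L(x;\theta(x))\geq \ldot{m(x;\theta(x))}{\theta-\theta(x)}+\tfrac{\lambda}{2}\|\theta-\theta(x)\|_2^2$, with the plain convexity bound $L(x;\theta(x))-L(x;\theta)\geq\ldot{m(x;\theta)}{\theta(x)-\theta}$, adds the two, and then applies Cauchy--Schwarz; this produces the intermediate inequality $\ldot{m(x;\theta)}{\theta-\theta(x)}\geq \tfrac{\lambda}{2}\|\theta-\theta(x)\|_2^2$ and hence the constant $2/\lambda$. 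You instead invoke (and rederive via the one-dimensional restriction $g$) strong monotonicity of the gradient, $\ldot{m(x;\theta)-m(x;\theta(x))}{\theta-\theta(x)}\geq\lambda\|\theta-\theta(x)\|_2^2$, and use $m(x;\theta(x))=0$ before Cauchy--Schwarz, which buys the sharper constant $1/\lambda$; the paper's statement with $2/\lambda$ is then, as you note, simply slack. Both arguments rest on the same ingredients (strong convexity of $L(x;\cdot)$, the vanishing of the moment at $\theta(x)$, Cauchy--Schwarz, and the trivial handling of the case $\theta=\theta(x)$), and both implicitly need the segment joining $\theta$ and $\theta(x)$ to lie in the domain where strong convexity holds, so neither is more delicate than the other. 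One small remark: your preliminary observation that $\theta(x)$ is the global minimizer of $L(x;\cdot)$ is not actually needed — your argument only uses $g'(0)=\ldot{m(x;\theta(x))}{\theta-\theta(x)}=0$, which follows directly from the hypothesis $m(x;\theta(x))=0$.
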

\begin{proof}
By strong convexity of the loss $L(x; \theta)$ and the fact that $m(x;\theta(x))=0$, we have:
\begin{equation*}
    L(x; \theta) - L(x; \theta(x)) \geq \ldot{m(x;\theta(x))}{\theta - \theta(x)} + \frac{\lambda}{2} \cdot \|\theta- \theta(x)\|_2^2 = \frac{\lambda}{2} \cdot \|\theta- \theta(x)\|_2^2 \,.
\end{equation*}
By convexity of the loss $L(x; \theta)$ we have:
\begin{equation*}
    L(x; \theta(x)) - L(x; \theta) \geq \ldot{m(x;\theta)}{\theta(x) - \theta}\,.
\end{equation*}
Combining the latter two inequalities we get:
\begin{equation*}
    \frac{\lambda}{2} \cdot \|\theta- \theta(x)\|_2^2 \leq \ldot{m(x;\theta)}{\theta - \theta(x)} \leq \|m(x;\theta)\|_2 \cdot \|\theta - \theta(x)\|_2\,.
\end{equation*}
Note that if $\|\theta-\theta(x)\|_2 = 0$, then the result is obvious. Otherwise, dividing over by $\|\theta - \theta(x)\|_2$ completes the proof of the lemma.
\end{proof}

\begin{lemma}\label{lem:mse_lem2}
Let $\Lambda(x; \theta) = m(x;\theta) - \Psi(x;\theta)$. Then the estimate $\hat{\theta}$ satisfies:
\begin{equation}
    \|m(x;\hat{\theta})\|_2 \leq \sup_{\theta\in\Theta} \|\Lambda(x; \theta)\|_2\,.
\end{equation}
\end{lemma}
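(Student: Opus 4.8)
The plan is to exploit the defining property of $\hat\theta$, namely that it is a zero of the aggregated weighted score: $\Psi(x;\hat\theta)=0$ by construction in Algorithm~\ref{alg:gen} (and Algorithm~\ref{alg:DNN}). Given that, the estimate's expected moment at $\hat\theta$ can be rewritten purely in terms of the discrepancy $\Lambda$. Concretely, I would start from
\[
m(x;\hat\theta) = m(x;\hat\theta) - \Psi(x;\hat\theta) = \Lambda(x;\hat\theta),
\]
where the first equality uses $\Psi(x;\hat\theta)=0$ and the second is just the definition $\Lambda(x;\theta)=m(x;\theta)-\Psi(x;\theta)$.

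From here the conclusion is immediate: taking $\ell_2$-norms gives $\|m(x;\hat\theta)\|_2 = \|\Lambda(x;\hat\theta)\|_2$, and since $\hat\theta\in\Theta$ this is at most $\sup_{\theta\in\Theta}\|\Lambda(x;\theta)\|_2$, which is the claimed bound. No further estimates are needed; in particular this step does not yet use any of Assumptions~\ref{ass:mse}--\ref{ass:sym}, which will only enter later when one actually bounds the uniform quantity $\sup_{\theta}\|\Lambda(x;\theta)\|_2$ via the shrinkage rate and an empirical-process argument over the bracketing class $\mathcal F$.

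There is essentially no obstacle here — the lemma is a bookkeeping step that isolates the ``reduction to a uniform bound on the score discrepancy.'' The only mild point worth a sentence is the existence of an exact root $\hat\theta$ of $\Psi(x;\cdot)$: I would either take it as part of the estimator's specification (as the algorithm statements do), or note that if only an approximate root with $\|\Psi(x;\hat\theta)\|_2\le\varepsilon_n$ is available, the bound degrades to $\|m(x;\hat\theta)\|_2 \le \sup_{\theta\in\Theta}\|\Lambda(x;\theta)\|_2 + \varepsilon_n$, which does not affect the downstream rates. Combined with Lemma~\ref{lem:mse_lem1}, this immediately yields $\|\hat\theta-\theta(x)\|_2 \le \tfrac{2}{\lambda}\sup_{\theta\in\Theta}\|\Lambda(x;\theta)\|_2$, setting up the rest of the proof of Theorem~\ref{thm:mse_rate}.
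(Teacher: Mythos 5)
Your proof is correct and follows exactly the paper's argument: use $\Psi(x;\hat\theta)=0$ to identify $m(x;\hat\theta)$ with $\Lambda(x;\hat\theta)$, take norms, and bound by the supremum over $\Theta$. The extra remark about approximate roots is a harmless addition that does not change the substance.
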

\begin{proof}
Observe that $\hat{\theta}$, by definition, satisfies $\Psi(x;\hat{\theta})=0$. Thus:
\begin{equation*}
    \|m(x;\hat{\theta})\|_2 = \|m(x;\hat{\theta}) - \Psi(x; \hat{\theta})\|_2 = \|\Lambda(x;\hat{\theta})\|_2 \leq \sup_{\theta\in \Theta} \|\Lambda(x; \theta)\|_2\,.
\end{equation*}
\end{proof}

\begin{lemma}\label{lem:lambda_bound_mse}\label{lem:mse_lem3}
Suppose that the kernel is built with sub-sampling at rate $s$, in an honest manner (Assumption \ref{ass:honest}) and with at least $B\geq n/s$ sub-samples. If the base kernel satisfies kernel shrinkage in expectation, with rate $\epsilon(s)$, then w.p. $1-\delta$:
\begin{equation}
    \sup_{\theta\in\Theta} \|\Lambda(x; \theta)\|_2 \leq L_m \epsilon(s) + O\left(\psi_{\max}\sqrt{\frac{p\, s}{n} \left(\log\log(n /s) + \log(p/\delta)\right)}\right)\,.
\end{equation}
\end{lemma}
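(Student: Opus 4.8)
The quantity $\Lambda(x;\theta) = m(x;\theta) - \Psi(x;\theta)$ is the difference between the population moment and the weighted empirical moment. I would decompose $\Psi(x;\theta)$ using the fact that the weights $\alpha(X_i)$ sum to $1$ (each base kernel $K(x,\cdot,S_b)$ is a probability weighting over the sub-sample, averaged over $b$), writing
\[
\Lambda(x;\theta) = \underbrace{\sum_{i} \alpha(X_i)\bigl(m(x;\theta) - m(X_i;\theta)\bigr)}_{\text{bias term}} \;+\; \underbrace{\sum_i \alpha(X_i)\bigl(m(X_i;\theta) - \psi(Z_i;\theta)\bigr)}_{\text{variance term}}.
\]
The plan is to bound the two pieces separately and uniformly over $\theta\in\Theta$.

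For the bias term, $L_m$-Lipschitzness of $m(\cdot;\theta)$ in $x$ (Assumption \ref{ass:mse}(2)) gives a pointwise bound $\|m(x;\theta)-m(X_i;\theta)\|_2 \le L_m\|x-X_i\|_2$, so the bias term is at most $L_m \sum_i \alpha(X_i)\|x-X_i\|_2 \le L_m \max\{\|x-X_i\|_2 : \alpha(X_i)>0\}$. Since $\alpha(X_i)>0$ only if $X_i$ receives positive weight in at least one sub-sample $S_b$, this supremum is at most the maximum over the $B$ sub-samples of the per-sub-sample shrinkage radius. Taking expectations and invoking kernel shrinkage in expectation (Definition \ref{def:ker-shrink}) controls the mean by $\epsilon(s)$; to get the high-probability statement I would note this is a bounded random variable (bounded by $\Delta_\calX$) that is an average/max over $B$ sub-samples, so a concentration argument (or simply that $B \ge n/s$ sub-samples suffice to make the empirical shrinkage close to its expectation) yields $L_m\epsilon(s)$ up to the stated lower-order additive terms — in fact I expect the cleanest route is to absorb any deviation of the shrinkage radius from its expectation into the second (variance) term's error budget, so that the bias contributes exactly $L_m\epsilon(s)$.

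For the variance term, conditional on the feature vectors $\{X_i\}$ and the sub-sampling/internal randomness (hence on the weights $\alpha(X_i)$, using honesty, Assumption \ref{ass:honest}, so the weights are independent of the $Y_i$'s given the $X_i$'s), the summands $m(X_i;\theta) - \psi(Z_i;\theta)$ are mean-zero and bounded coordinatewise by $2\psi_{\max}$. For fixed $\theta$, a Bernstein/Hoeffding bound gives a tail of order $\psi_{\max}\sqrt{\sum_i \alpha(X_i)^2 \cdot \log(p/\delta)}$ per coordinate. The key structural fact I need is $\sum_i \alpha(X_i)^2 = O(s/n \cdot \log\log(n/s))$ with high probability: each $\alpha(X_i) = \frac1B\sum_b K(x,X_i,S_b)\mathbf 1\{i\in S_b\}$, and since $i \in S_b$ with probability $s/n$ and $K \le 1$, $\E[\alpha(X_i)] \le s/n$; when $B \ge n/s$ the normalized weights cannot concentrate too much, and summing, $\sum_i \alpha(X_i)^2 \lesssim (s/n)\sum_i \alpha(X_i) = s/n$ — the $\log\log(n/s)$ factor comes from a peeling/union argument over dyadic scales of the weights (or over the range of $B$). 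Then I would upgrade from fixed $\theta$ to uniform over $\Theta$ via a bracketing-number chaining argument: Assumption \ref{ass:mse}(4) gives $\log N_{[]}(\mathcal F,\epsilon,L_2) = O(1/\epsilon)$, which is exactly the entropy condition under which Dudley's entropy integral converges, so the uniform deviation is of the same order as the pointwise one up to constants. Combining the two pieces and collecting the $\log\log(n/s)$ and $\log(p/\delta)$ factors yields the claimed bound.

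The main obstacle is the uniform-in-$\theta$ control of the variance term together with the correct self-normalization $\sum_i\alpha(X_i)^2 = \tilde O(s/n)$: one must carefully set up the chaining so that the empirical-process increments are measured in the $\alpha$-weighted $L_2$ norm, verify that the bracketing entropy $O(1/\epsilon)$ makes the entropy integral finite (it is a borderline-integrable case, $\int_0^1 \epsilon^{-1/2}\,d\epsilon < \infty$), and track how the $\log\log$ factor propagates through the maximal inequality — this is where the $\sqrt{ps/n}$ scaling, the $\log\log(n/s)$, and the $\log(p/\delta)$ all have to come out with the right exponents simultaneously. The honesty assumption is what makes the conditioning legitimate, and the $B \ge n/s$ hypothesis is what prevents weight concentration from inflating $\sum_i\alpha(X_i)^2$; both need to be invoked at precisely the right points.
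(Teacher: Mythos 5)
There is a genuine gap, and it sits exactly where you flagged the "cleanest route": your bias term is a \emph{random} quantity, and the lemma only assumes kernel shrinkage \emph{in expectation} (Definition~\ref{def:ker-shrink}). Your bound $\sum_i \alpha(X_i)\|x-X_i\|_2$ is an average over the $B$ sub-samples of realized per-sub-sample radii; its expectation is $\epsilon(s)$, but to put it below $L_m\epsilon(s)$ with probability $1-\delta$ you would need either the high-probability shrinkage $\epsilon(s,\delta)$ (not assumed here) or Markov, which degrades the bound to $L_m\epsilon(s)/\delta$. The proposed fix --- absorbing the deviation into the variance term's budget --- cannot work as stated, because your variance term is conditionally mean-zero \emph{given the weights} and is therefore orthogonal to the fluctuation of the weight-dependent bias term; to absorb that fluctuation you must re-center at an unconditional expectation. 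That re-centering is precisely the paper's decomposition: it writes $\Lambda = (m - \mu_0) + (\mu_0 - \Psi_0) + (\Psi_0 - \Psi)$ with $\mu_0(x;\theta)=\E[\Psi_0(x;\theta)]$, so that (via Lemma~\ref{lem:mu-eq}) the kernel-error term $m-\mu_0$ is \emph{deterministic} and is bounded by $L_m\epsilon(s)$ exactly by your Lipschitz argument, while all randomness --- including the fluctuation of the realized bias --- is pushed into the sampling error $\mu_0-\Psi_0$, a $U$-process over the bracketed class bounded only through $\|\psi\|_\infty\le\psi_{\max}$ and the effective sample size $\kappa=n/s$ (Lemma~\ref{lem:stoch_eq_general}, Corollary~\ref{cor:bracketing}), plus a lower-order sub-sampling error $\Psi_0-\Psi$ handled by the same corollary with $B$ in place of $n/s$.

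The second unproven step is your structural claim $\sum_i\alpha(X_i)^2 = O\bigl((s/n)\log\log(n/s)\bigr)$ with high probability. For a general honest base kernel (e.g.\ $1$-NN) the weight of the single nearest point is roughly the fraction of the $B$ sub-samples containing it; with $B$ as small as $n/s$ this fraction is a Binomial average with mean of order $s/n$ but constant relative fluctuations, and a union bound over the $n$ points costs a $\log n$ factor, not $\log\log(n/s)$ --- so your sketch would deliver a strictly worse final bound than the one claimed. The paper never needs any bound on $\sum_i\alpha(X_i)^2$: the Bernstein-for-$U$-statistics inequality inside the bracketing lemma uses only boundedness and $\kappa=n/s$, and the $\log\log(n/s)$ factor arises from the truncation-level optimization in Corollary~\ref{cor:bracketing}, not from weight concentration. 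Your coordinatewise treatment and the $\sqrt{p}$ union bound, and the Lipschitz bound on the bias in expectation, do match the paper; but the two steps above are the load-bearing ones and are not established by the proposal.
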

\begin{proof}
Define
\begin{equation*}
\mu_0(x;\theta) = \E \bb{\Psi_0(x;\theta)}\,,
\end{equation*}
where we remind that $\Psi_0$ denotes the complete $U$-statistic:
\begin{equation*}
    \Psi_0(x; \theta) = \binom{n}{s}^{-1} \sum_{S_b \subset [n]: |S_b|=s} \E_{\omega_b}\bb{\sum_{i\in S_b} \alpha_{S_b,\omega_b}(X_i) \psi(Z_i; \theta)} \,.
\end{equation*}
Here the expectation is taken with respect to the random draws of $n$ samples.
Then, the following result which is due to \cite{oprescu2018orthogonal} holds.
\begin{lemma}[Adapted from \cite{oprescu2018orthogonal}]\label{lem:mu-eq}
For any $\theta$ and target $x$
\begin{equation*}
\mu_0(x;\theta) = \binom{n}{s}^{-1} \sum_{S_b \subset [n]: |S_b|=s} \E\bb{\sum_{i\in S_b} \alpha_{S_b,\omega_b}(X_i) m(X_i; \theta)}\,.
\end{equation*}
\end{lemma}
In other words, Lemma \ref{lem:mu-eq} states that, in the expression for $\mu_0$ we can simply replace $\psi(Z_i; \theta)$ with its expectation which is $m(X_i; \theta)$. 
We can then express $\Lambda(x;\theta)$ as sum of kernel error, sampling error, and sub-sampling error, by adding and subtracting appropriate terms, as follows:
\begin{align*}
\Lambda(x; \theta) =~& m(x; \theta) - \Psi(x;\theta)\\
=~& \underbrace{m(x; \theta) - \mu_0(x;\theta)}_{\Gamma(x,\theta)=\text{Kernel error}} + \underbrace{\mu_0(x; \theta) - \Psi_0(x; \theta)}_{\Delta(x,\theta)=\text{Sampling error}} + \underbrace{\Psi_0(x;\theta) - \Psi(x;\theta)}_{\Upsilon(x,\theta) = \text{Sub-sampling error}}
\end{align*}
The parameters should be chosen to trade-off these error terms nicely. We will now bound each of these three terms separately and then combine them to get the final bound.

\paragraph{Bounding the Kernel error.} By Lipschitzness of $m$ with respect to $x$ and triangle inequality, we have: 
\begin{align*}
    \|\Gamma(x;\theta)\|_2 \leq~& \binom{n}{s}^{-1} \sum_{S_b \subset [n]: |S_b|=s} \E\bb{\sum_{i\in S_b} \alpha_{S_b,\omega_b}(X_i) \|m(x;\theta) - m(X_i; \theta)\|}\\
    \leq~& L_m\binom{n}{s}^{-1} \sum_{S_b \subset [n]: |S_b|=s} \E\bb{\sum_{i\in S_b} \alpha_{S_b,\omega_b}(X_i) \|x-X_i\|}\\
    \leq~& L_m\binom{n}{s}^{-1} \sum_{S_b \subset [n]: |S_b|=s} \E\bb{\sup\{\|x-X_i\|: \alpha_{S_b,\omega_b}(X_i)>0\}}\\
    \leq~& L_m\, \epsilon(s) \,,
\end{align*}
where the second to last inequality follows from the fact that $\sum_{i} |\alpha_{S_b}(X_i) |=1$.

\paragraph{Bounding the Sampling error.} For bounding the sampling error we rely on Lemma \ref{lem:stoch_eq_general} and in particular Corollary~\ref{cor:bracketing}. Observe that for each $j\in \{1,\ldots, p\}$, $\Psi_{0j}(x;\theta)$ is a complete $U$-statistic for each $\theta$. Thus the sampling error defines a $U$-process over the class of symmetric functions $\text{conv}(\cF_j)=\{f_j(\cdot;\theta): \theta \in \Theta\}$, with $f_j(Z_1,\ldots, Z_{s};\theta) = \E_{\omega}\bb{\sum_{i=1}^{s} \alpha_{Z_{1:s},\omega}(X_i) \psi_j(Z_i; \theta)}$. Observe that since $f_j\in \text{conv}(\cF_j)$ is a convex combination of functions in $\cF_j=\{\psi_j(\cdot; \theta): \theta \in \Theta\}$, the bracketing number of functions in $\text{conv}(\cF_j)$ is upper bounded by the bracketing number of $\cF_j$, which by our assumption, satisfies $\log(N_{[]}({\cal F}_j, \epsilon, L_2))= O(1 /\epsilon)$. Moreover, by our assumptions on the upper bound $\psi_{\max}$ of $\psi_j(z;\theta)$, we have that $\sup_{f_j\in \text{conv}(\cF_j)} \|f_j\|_{2}, \sup_{f_j\in \text{conv}(\cF_j)} \|f_j\|_{\infty} \leq \psi_{\max}$. Thus all conditions of Corollary~\ref{cor:bracketing} are satisfied, with $\eta=G=\psi_{\max}$ and we get that w.p. $1-\delta/2p$:
\begin{equation}
    \sup_{\theta\in\Theta} |\Delta_j(x,\theta)| = O\left(\psi_{\max}\sqrt{\frac{s}{n} \left(\log\log(n/s) + \log(2p/\delta)\right)}\right)\,.
\end{equation}
By a union bound over $j$, we get that w.p. $1-\delta/2$:
\begin{equation}
\sup_{\theta\in\Theta} \|\Delta_j(x,\theta)\|_2 \leq \sqrt{p} \max_{j\in [p]} \sup_{\theta\in\Theta} |\Delta_j(x,\theta)| = O\left(\psi_{\max}\sqrt{\frac{p\, s}{n} \left(\log\log(n /s) + \log(p/\delta)\right)}\right).
\end{equation}

\paragraph{Bounding the Sub-sampling error.} Sub-sampling error decays as $B$ is increased. Note that for a fixed set of samples $\bc{Z_1,Z_2,\ldots,Z_n}$, for a set $S_b$ randomly chosen among all $\binom{n}{s}$ subsets of size $s$ from the $n$ samples, we have:
\begin{equation*}
    \E_{S_b,\omega_b}\bb{\sum_{i\in S_b} \alpha_{S_b,\omega_b}(X_i) \psi(Z_i;\theta)} = \Psi_0(x;\theta)\,.
\end{equation*}
Therefore, $\Psi(x;\theta)$ can be thought as the sum of $B$ i.i.d. random variables each with expectation equal to $\Psi_0(x;\theta)$, where expectation is taken over $B$ draws of sub-samples, each with size $s$. Thus one can invoke standard results on empirical processes for function classes as a function of the bracketing entropy. For simplicity, we can simply invoke Corollary~\ref{cor:bracketing} in the appendix for the case of a trivial $U$-process, with $s=1$ and $n=B$ to get that w.p. $1-\delta/2$:
\begin{equation*}
    \sup_{\theta\in \Theta} |\Upsilon(x;\theta)| = O\left(\psi_{\max}\sqrt{\frac{\log\log(B) + \log(2/\delta)}{B}}\right) 
\end{equation*}
Thus for $B\geq n/s$, the sub-sampling error is of lower order than the sampling error and can be asymptotically ignored.
Putting together the upper bounds on sampling, sub-sampling and kernel error finishes the proof of the Lemma.
\end{proof}

The probabilistic statement of the proof follows by combining the inequalities in the above three lemmas. The in expectation statement follows by simply integrating the exponential tail bound of the probabilistic statement.

\section{Proof of Theorem \ref{thm:normality}}\label{app:normality}

We will show asymptotic normality of $\hat{\alpha}=\ldot{\beta}{\hat{\theta}}$ for some arbitrary direction $\beta \in \reals^p$, with $\|\beta\|_2\leq R$. Consider the complete multi-dimensional $U$-statistic:
\begin{equation}
    \Psi_0(x; \theta) = \binom{n}{s}^{-1} \sum_{S_b \subset [n]: |S_b|=s} \E_{\omega_b}\bb{\sum_{i\in S_b}\alpha_{S_b,\omega_b}(X_i) \psi(Z_i; \theta)} \,.
\end{equation}
Let 
\begin{equation}
    \Delta(x;\theta) = \Psi_0(x;\theta)-\mu_0(x; \theta)
\end{equation}
where $\mu_0(x;\theta) = \E\bb{\Psi_0(x;\theta)}$ (as in the proof of Theorem~\ref{thm:mse_rate}) and
\begin{equation}
    \tilde{\theta} = \theta(x) - M_0^{-1} \Delta(x;\theta(x))
\end{equation}
Finally, let
\begin{equation}
\tilde{\alpha} \triangleq \ldot{\beta}{\tilde{\theta}} = \ldot{\beta}{\theta(x)} - \ldot{\beta}{M_0^{-1}\,\Delta(x;\theta(x))} 
\end{equation}
For shorthand notation let $\alpha_0=\ldot{\beta}{\theta(x)}$, $\psi_\beta(Z; \theta) = \ldot{\beta}{M_0^{-1} (\psi(Z;\theta)-m(X;\theta))}$ and 
\begin{align*}
\Psi_{0,\beta}(x;\theta) =~& \ldot{\beta}{M_0^{-1}\, \Delta(x;\theta(x))}\\
=~& \binom{n}{s}^{-1} \sum_{S_b \subset [n]: |S_b|=s} \E_{\omega_b}\bb{\sum_{i\in S_b}\alpha_{S_b,\omega_b}(X_i) \psi_{\beta}(Z_i; \theta)}
\end{align*}
be a single dimensional complete $U$-statistic. Thus we can re-write:
\begin{align*}
    \tilde{\alpha} = \alpha_0 - \Psi_{0,\beta}(x;\theta(x))
\end{align*}
We then have the following lemma which its proof is provided in Appendix \ref{app:hajek}:
\begin{lemma}\label{lem:normality_of_U}
Under the conditions of Theorem \ref{thm:normality}:
\begin{equation*}
    \frac{\Psi_{0,\beta}(x;\theta(x))}{\sigma_n(x)} \rightarrow \normal(0,1)\,,
\end{equation*}
for $\sigma_n^2(x) = \frac{s^2}{n}\Var\bb{\E\bb{\sum_{i=1}^sK(x, X_i, \{X_j\}_{j=1}^s) \psi_{\beta}(Z_i; \theta)\mid X_1}} = \Omega(\frac{s^2}{n} \eta(s))$.
\end{lemma}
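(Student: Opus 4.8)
The plan is to analyze the one-dimensional complete $U$-statistic $\Psi_{0,\beta}(x;\theta(x))$ through its H\'ajek projection: show that the projection is asymptotically normal with the stated variance, that this variance is $\Omega(\tfrac{s^2}{n}\eta(s))$, and that the projection error is of strictly lower order, then conclude by Slutsky.

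\textbf{Setup and projection.} Write $\Psi_{0,\beta}(x;\theta(x)) = \binom{n}{s}^{-1}\sum_{|S|=s} h(Z_S)$ with the symmetric kernel $h(Z_{1:s}) = \E_\omega[\sum_{i\in S}\alpha_{S,\omega}(X_i)\,\psi_\beta(Z_i;\theta(x))]$, where $\psi_\beta(Z;\theta)=\langle\beta, M_0^{-1}(\psi(Z;\theta)-m(X;\theta))\rangle$ so that $\E[\psi_\beta(Z;\theta(x))\mid X]=0$. Since the kernel is honest (Assumption~\ref{ass:honest}), the weights are, conditionally on $X_{1:s}$ and $\omega$, deterministic functions of the covariates; combined with independence of the samples and the conditional mean-zero property of $\psi_\beta$, this gives $\E[h]=0$ and makes all contributions of indices $i\ge 2$ vanish upon conditioning on a single observation. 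Hence the H\'ajek projection is $\hat\Psi = \tfrac{s}{n}\sum_{i=1}^{n} g_1(Z_i)$ with $g_1(Z_1) = \E[h\mid Z_1] = \E[K(x,X_1,S)\mid X_1]\,\psi_\beta(Z_1;\theta(x))$, and $\Var(\hat\Psi) = \tfrac{s^2}{n}\zeta_1$ with $\zeta_1 = \Var(g_1(Z_1)) = \E[\,\E[K(x,X_1,S)\mid X_1]^2\,\psi_\beta(Z_1;\theta(x))^2\,]$, which is exactly the claimed $\sigma_n^2(x)$ (the conditioning in the statement being on $Z_1$, as in Theorem~\ref{thm:normality}).

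\textbf{Lower bound $\sigma_n^2(x)=\Omega(\tfrac{s^2}{n}\eta(s))$.} Note $\E[\psi_\beta(Z;\theta(x))^2\mid X=v] = \langle\beta, M_0^{-1}\Var(\psi(Z;\theta(x))\mid X=v)\,M_0^{-1}\beta\rangle$ is, by the Lipschitz condition of Assumption~\ref{ass:normality} and nondegeneracy at $x$, bounded below by a constant $c_0>0$ on a ball $B(x,r_0)$. Applying kernel shrinkage in probability with $\delta=\eta(s)^2$ (so that $\epsilon(s,\eta(s)^2)\to 0$ forces $\epsilon(s,\eta(s)^2)<r_0$ eventually), the contribution to $\eta(s)=\E[\E[K(x,X_1,S)\mid X_1]^2]$ from the event $\{\|X_1-x\|>r_0\}$ is $o(\eta(s))$. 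Hence $\zeta_1 \ge c_0(1-o(1))\eta(s)$ and $\sigma_n^2(x)=\Omega(\tfrac{s^2}{n}\eta(s))$.

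\textbf{CLT for the projection.} Since $\hat\Psi/\sigma_n = n^{-1/2}\sum_{i=1}^n g_1(Z_i)/\zeta_1^{1/2}$ is a normalized sum of i.i.d.\ centered unit-variance summands — a triangular array, as $g_1$ depends on $s$ — apply Lyapunov's CLT: $\sum_{i=1}^n\E|g_1(Z_i)/(n\zeta_1)^{1/2}|^3 \le \|g_1\|_\infty/(n\zeta_1)^{1/2} = O((n\eta(s))^{-1/2})\to 0$, using that $|\psi_\beta|\le \psi_{\max}\|M_0^{-1}\beta\|$ and $\E[K\mid X_1]\le 1$ bound $g_1$, together with $\zeta_1=\Omega(\eta(s))$ and the hypothesis $n\eta(s)\to\infty$. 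Thus $\hat\Psi/\sigma_n\rightarrow_d \normal(0,1)$.

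\textbf{Negligibility of the remainder and main obstacle.} Expanding $\Psi_{0,\beta}$ in Hoeffding components gives $\Var(\Psi_{0,\beta}) = \tfrac{s^2}{n}\zeta_1 + \sum_{c=2}^{s}\big(\tfrac{(s)_c}{(n)_c}\big)^2\binom{n}{c}\zeta_c$, so $\Var(\Psi_{0,\beta}-\hat\Psi) = \sum_{c\ge 2}\big(\tfrac{(s)_c}{(n)_c}\big)^2\binom{n}{c}\zeta_c \lesssim \sum_{c\ge 2}\tfrac{s^{2c}}{c!\,n^c}\zeta_c$, with $\zeta_c\le \Var(\E[h\mid Z_{1:c}])$. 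The crux of the proof is to show this is $o(\sigma_n^2)=o(\tfrac{s^2}{n}\eta(s))$, i.e.\ $\tfrac{s^2}{n}\zeta_2=o(\eta(s))$ and the analogous control for $c\ge 3$, so that the H\'ajek projection dominates even when $s$ is nearly linear in $n$. For the $k$-NN kernel this holds because the higher Hoeffding projections decay much faster than $1/s$: conditioning on one extra coordinate changes $\E[K\mid\cdot]$ by $O(1/s)$ only on the event that $X_1$ lies within the $k$-th nearest-neighbor radius of $x$ (probability $\Theta(1/s)$), which yields $\zeta_c=O(s^{-(2c-1)})$, hence $\tfrac{s^2}{n}\zeta_2=O(s/n^2)=o(1/s)=o(\eta(s))$ whenever $s=o(n)$; the growth conditions on $s$ and the combinatorial weight bounds then close the higher-order terms. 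This degeneracy control — essentially showing the incremental kernel is ``not too degenerate beyond first order'' — is where the incrementality lower bound and the rate conditions \eqref{eqn:rate-condition-normality} (and, for $k$-NN, the explicit $L$-statistic weights) are genuinely used; everything else is routine once $\sigma_n^2=\Theta(\tfrac{s^2}{n}\eta(s))$ is pinned down. Combining the last two paragraphs, $\Psi_{0,\beta}/\sigma_n = \hat\Psi/\sigma_n + (\Psi_{0,\beta}-\hat\Psi)/\sigma_n\rightarrow_d \normal(0,1)$, since the second term is $o_p(1)$ by Chebyshev.
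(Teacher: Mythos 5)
Most of your argument tracks the paper's: the identification of the H\'ajek projection $g_1(Z_1)=\E[K(x,X_1,S)\mid X_1]\,\psi_\beta(Z_1;\theta(x))$ via honesty and the conditional mean-zero property of $\psi_\beta$, the lower bound $\zeta_1=\Omega(\eta(s))$ obtained by localizing with the shrinkage radius at level $\delta=\eta(s)^2$ (this is exactly the paper's Lemma~\ref{lem:ustat-norm}, which uses the Lipschitz second-moment condition instead of your ball-wise lower bound, and like you implicitly needs $\Var(\psi_\beta\mid X=x)>0$), and the CLT for the projection — where your Lyapunov/triangular-array treatment is in fact more careful than the paper's appeal to Lindeberg--L\'evy, since $g_1$ depends on $s$.

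The genuine gap is in the step you yourself flag as the crux: negligibility of the projection error. You do not prove it. Your argument hinges on the unestablished claim that the higher-order Hoeffding components satisfy $\zeta_c=O(s^{-(2c-1)})$, it is specific to the $k$-NN kernel even though the lemma must hold for an arbitrary honest, symmetric kernel under the hypotheses of Theorem~\ref{thm:normality}, and the summation over $c\ge 3$ is closed only by assertion ("the combinatorial weight bounds then close the higher-order terms"). None of this machinery is needed, and this is where the paper's route (Lemma~\ref{lem:hajek}, adapted from the DNN analysis) is decisively simpler: because the kernel weights sum to one and $|\psi_\beta|\le c\,\psi_{\max}$, the base statistic $\Phi(Z_1,\dots,Z_s)=\E_\omega[\sum_i K(x,X_i,S)\psi_\beta(Z_i;\theta(x))]$ is uniformly bounded, so $\Var[\Phi]\le V^*$ for a constant $V^*$, and the classical Hoeffding-decomposition inequality gives $\E[(\Psi_{0,\beta}-\hat\Psi)^2]\le \frac{s(s-1)}{n(n-1)}\sum_{r\ge 2}\binom{s}{r}\E[g_r^2]\le \frac{s^2}{n^2}V^*$. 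Dividing by $\sigma_n^2=\Omega(\frac{s^2}{n}\eta(s))$ yields $O(1/(n\eta(s)))\to 0$ directly from the hypothesis $n\eta(s)\to\infty$, with no kernel-specific degeneracy analysis and no use of the rate condition \eqref{eqn:rate-condition-normality} (which the paper reserves for the separate step $\|\hat\theta-\tilde\theta\|=o_p(\sigma_n)$, not for this lemma). Replacing your heuristic paragraph with this bound would complete the proof; as written, the key step is missing.
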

Invoking Lemma~\ref{lem:normality_of_U} and using our assumptions on the kernel, we conclude that:
\begin{equation}
    \frac{\tilde{\alpha} - \alpha_0(x)}{\sigma_n(x)} \rightarrow \normal(0,1).
\end{equation}
For some sequence $\sigma_n^2$ which decays at least as slow as $s^2 \eta(s)/n$.
Hence, since 
\[
\frac{\hat{\alpha}-\alpha_0}{\sigma_n(x)} = \frac{\tilde{\alpha}-\theta(x)}{\sigma_n(x)} + \frac{\hat{\alpha}-\tilde{\alpha}}{\sigma_n(x)},
\]
if we show that $\frac{\hat{\alpha}-\tilde{\alpha}}{\sigma_n(x)}\rightarrow_p 0$, then by Slutsky's theorem we also have that: 
\begin{equation}
    \frac{\hat{\alpha}-\alpha_0}{\sigma_n(x)} \rightarrow \normal(0,1),
\end{equation}
as desired. Thus, it suffices to show that:
\begin{equation}
    \frac{\|\hat{\alpha} - \tilde{\alpha}\|_2}{\sigma_n(x)} \rightarrow_p 0.
\end{equation}
Observe that since $\|\beta\|_2 \leq R$, we have $\|\hat{\alpha} - \tilde{\alpha}\|_2\leq R \|\hat{\theta} - \tilde{\theta}\|_2$. Thus it suffices to show that:
\begin{equation*}
 \frac{\|\hat{\theta} - \tilde{\theta}\|}{\sigma_n(x)} \rightarrow_p 0.
\end{equation*}
\begin{lemma}\label{lem:remainder_normality}
Under the conditions of Theorem~\ref{thm:normality}, for $\sigma_n^2(x)= \Omega\bp{\frac{s^2}{n} \eta(s)}$:
\begin{equation}\label{eqn:htheta-acc}
 \frac{\|\hat{\theta} - \tilde{\theta}\|}{\sigma_n(x)} \rightarrow_p 0.
\end{equation}
\end{lemma}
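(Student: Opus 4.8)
The plan is to linearize the estimating equation $\Psi(x;\hat{\theta})=0$ around $\theta(x)$ and to show that $\hat{\theta}-\tilde{\theta}$ decomposes into four remainder terms, each of which is $o_p(\sigma_n(x))$. First I would condition on the high-probability event of Theorem~\ref{thm:mse_rate} (with $\delta=\delta_n\to 0$ slowly), on which $\|\hat{\theta}-\theta(x)\|_2\le\rho_n:=\frac{2}{\lambda}\bigl(L_m\epsilon(s)+O(\psi_{\max}\sqrt{(p s/n)(\log\log(n/s)+\log(p/\delta_n))})\bigr)$; under the stated rate conditions $\rho_n\to 0$. Since $\Psi(x;\hat{\theta})=0$ we have $m(x;\hat{\theta})=m(x;\hat{\theta})-\Psi(x;\hat{\theta})=\Lambda(x;\hat{\theta})$, and a second-order Taylor expansion of $m(x;\cdot)$ about $\theta(x)$ — using $m(x;\theta(x))=0$, invertibility of $M_0$ with $\|M_0^{-1}\|\le 1/\lambda$ (Assumption~\ref{ass:mse}), and the coordinatewise Hessian bound $L_H$ (Assumption~\ref{ass:normality}) — yields $\hat{\theta}-\theta(x)=M_0^{-1}\Lambda(x;\hat{\theta})-M_0^{-1}r(\hat{\theta})$ with $\|r(\hat{\theta})\|_2=O(L_H\sqrt{p}\,\|\hat{\theta}-\theta(x)\|_2^2)=O(\rho_n^2)$. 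Recalling $\tilde{\theta}-\theta(x)=-M_0^{-1}\Delta(x;\theta(x))$ and the decomposition $\Lambda(x;\theta)=\Gamma(x;\theta)-\Delta(x;\theta)-\Upsilon(x;\theta)$ into kernel, sampling, and sub-sampling errors exactly as in the proof of Theorem~\ref{thm:mse_rate}, I obtain
\begin{equation*}
\hat{\theta}-\tilde{\theta}=M_0^{-1}\Gamma(x;\hat{\theta})-M_0^{-1}\bigl(\Delta(x;\hat{\theta})-\Delta(x;\theta(x))\bigr)-M_0^{-1}\Upsilon(x;\hat{\theta})-M_0^{-1}r(\hat{\theta}).
\end{equation*}

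Three of the four terms are easy. For the kernel error, $\|\Gamma(x;\hat{\theta})\|_2\le L_m\epsilon(s)$ uniformly in $\theta$ (the kernel-error bound from the proof of Theorem~\ref{thm:mse_rate} does not depend on $\theta$), which is $o(\sigma_n(x))$ by the first entry of \eqref{eqn:rate-condition-normality}. For the sub-sampling error, $\sup_{\theta}\|\Upsilon(x;\theta)\|_2=O(\psi_{\max}\sqrt{p(\log\log B+\log(p/\delta_n))/B})$ (a trivial $U$-process over $B$ i.i.d. summands, union-bounded over $p$ coordinates); since $B\ge(n/s)^{5/4}$ this is $O(\psi_{\max}(s/n)^{5/8}\sqrt{\log\log(n/s)+\log(p/\delta_n)})$, which is $o(\sigma_n(x))$ by the third entry of \eqref{eqn:rate-condition-normality} (the extra $\log$ factors being absorbed since $(s/n)^{3/8}$ dominates any polylog). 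For the Taylor remainder, $\|r(\hat{\theta})\|_2=O(\rho_n^2)\lesssim\epsilon(s)^2+(s/n)\cdot\mathrm{polylog}$; since $\epsilon(s)\to0$ and $s/n\to0$ this is dominated by the first and third entries of \eqref{eqn:rate-condition-normality}, hence $o(\sigma_n(x))$.

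The main obstacle is the second term, which requires a \emph{stochastic-equicontinuity} estimate for the centered sampling-error $U$-process: I must show
\begin{equation*}
\sup_{\|\theta-\theta(x)\|_2\le\rho_n}\bigl\|\Delta(x;\theta)-\Delta(x;\theta(x))\bigr\|_2=o_p(\sigma_n(x)).
\end{equation*}
I would prove this with a localized bracketing maximal inequality for the $U$-process indexed by the ball $\{\theta:\|\theta-\theta(x)\|_2\le\rho_n\}$, of the same type as Corollary~\ref{cor:bracketing} used for the sampling error in Theorem~\ref{thm:mse_rate}, but driven by the \emph{variance} of the increments rather than a crude envelope. The inputs are: the bracketing entropy $\log N_{[]}(\cF,\epsilon,L_2)=O(1/\epsilon)$ (Assumption~\ref{ass:mse}), which transfers to the convex hull of the kernel-weighted score class and whose bracketing integral over a ball of $L_2$-radius $\sigma$ scales like $\sqrt{\sigma}$; the uniform bound $\psi_{\max}$; and the variogram-Lipschitz condition (Assumption~\ref{ass:normality}), which gives $\Var(\psi(Z;\theta)-\psi(Z;\theta(x))\mid X)\preceq L_\psi\|\theta-\theta(x)\|_2\,I$, hence $L_2$-increments of order $\sqrt{L_\psi\rho_n}$ on the localized ball. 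Feeding these into the maximal inequality (bracketing integral of order $\rho_n^{1/4}$ and a $\sqrt{s/n}$ normalization, with a $\log\log$ peeling factor to pass to high probability) produces a bound of order $\rho_n^{1/4}\sqrt{(s/n)\log\log(n/s)}$; one also checks that with $\rho_n\gtrsim\sqrt{s/n}$ the leading term of the maximal inequality dominates. Using $\rho_n^{1/4}\lesssim\epsilon(s)^{1/4}+(s/n\,\log\log(n/s))^{1/8}$, this is exactly of the order $\epsilon(s)^{1/4}(s/n\,\log\log(n/s))^{1/2}+(s/n\,\log\log(n/s))^{5/8}$, which is $o(\sigma_n(x))$ by the second and third entries of \eqref{eqn:rate-condition-normality}.

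Collecting the four bounds, intersecting the finitely many high-probability events, and dividing through by $\sigma_n(x)=\Omega(s\sqrt{\eta(s)/n})$ gives $\|\hat{\theta}-\tilde{\theta}\|_2/\sigma_n(x)\to_p0$, which is \eqref{eqn:htheta-acc}; since $\|\hat{\alpha}-\tilde{\alpha}\|_2\le R\,\|\hat{\theta}-\tilde{\theta}\|_2$ this also delivers what is needed in the proof of Theorem~\ref{thm:normality}.
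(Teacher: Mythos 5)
Your proposal is correct and follows essentially the same route as the paper's proof: a second-order Taylor expansion of $m(x;\cdot)$ around $\theta(x)$, the same decomposition of $\hat{\theta}-\tilde{\theta}$ into kernel error, sub-sampling error, the stochastic-equicontinuity increment $\Delta(x;\hat{\theta})-\Delta(x;\theta(x))$, and the quadratic remainder, each bounded exactly as in the paper (with the equicontinuity term handled by the localized bracketing bound of Corollary~\ref{cor:bracketing} using the variogram-Lipschitz condition and the consistency radius from Theorem~\ref{thm:mse_rate}). The only cosmetic difference is that you localize via the high-probability statement of Theorem~\ref{thm:mse_rate} with $\delta_n\to 0$, while the paper applies Markov's inequality to the MSE bound; both yield a radius of the same order and the resulting rates match the entries of \eqref{eqn:rate-condition-normality}.
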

\begin{proof}
Performing a second-order Taylor expansion of $m_j(x; \theta)$ around $\theta(x)$ and observing that $m_j(x;\theta(x))=0$, we have that for some $\bar{\theta}_j\in\Theta$:
\begin{align*}
    m_j(x;\hat{\theta}) =~& \ldot{\nabla_{\theta} m_j(x;\theta(x))}{\hat{\theta} - \theta(x)} +  \underbrace{(\hat{\theta}-\theta(x))^\top H_j(x;\bar{\theta}_j) (\hat{\theta}-\theta(x))^\top}_{\rho_j}\,.
\end{align*}
Letting $\rho=(\rho_1,\ldots, \rho_p)$, writing the latter set of equalities for each $j$ in matrix form, multiplying both sides by $M_0^{-1}$ and re-arranging, we get that:
\begin{equation*}
 \hat{\theta}  = \theta(x) + M_0^{-1} m (x;\hat{\theta})  - M_0^{-1}\rho \,.
\end{equation*}
Thus by the definition of $\tilde{\theta}$ we have:
\begin{equation*}
    \hat{\theta}-\tilde{\theta} = M_0^{-1} \cdot (m(x;\hat{\theta}) + \Delta(x;\theta(x))) - M_0^{-1}\rho \,.
\end{equation*}
By the bounds on the eigenvalues of $H_j(x;\theta)$ and $M_0^{-1}$, we have that:
\begin{equation}
    \|M_0^{-1}\rho\|_2 \leq \frac{L_H}{\lambda} \|\hat{\theta}-\theta(x)\|_2^2 \,.
\end{equation}
Thus we have:
\begin{equation*}
    \|\hat{\theta}-\tilde{\theta}\|_2 = \frac{1}{\lambda} \|m(x;\hat{\theta}) + \Delta(x;\theta(x))\|_2 +\frac{L_H}{\lambda} \|\hat{\theta}-\theta(x)\|_2^2 \,.
\end{equation*}

By our estimation error Theorem~\ref{thm:mse_rate}, we have that the expected value of the second term on the right hand side is of order $O\bp{\epsilon(s)^2, \frac{s}{n} \log\log(n/s)}$. Thus by the assumptions of the theorem, both are $o(\sigma_n)$. Hence, the second term is $o_p(\sigma_n)$.

We now argue about the convergence rate of the first term on the right hand side. Similar to the proof of Theorem \ref{thm:mse_rate}, since $\Psi(x;\hat{\theta})=0$ we have:
\begin{align*}
   m(x;\hat{\theta}) = m(x; \hat{\theta}) - \Psi(x; \hat{\theta})
   = m(x; \hat{\theta}) - \Psi_0(x; \hat{\theta}) + \underbrace{\Psi_0(x; \hat{\theta}) - \Psi(x;\hat{\theta})}_{\text{Sub-sampling error}} \,.
\end{align*}
We can further add and subtract $\mu_0$ from $m(x;\hat{\theta})$. 
\begin{align*}
   m(x;\hat{\theta}) =~& m(x; \hat{\theta}) - \mu_0(x;\hat{\theta}) +\mu_0(x;\hat{\theta})- \Psi_0(x; \hat{\theta}) + \Psi_0(x; \hat{\theta}) - \Psi(x;\hat{\theta})\\
   =~& m(x; \hat{\theta}) - \mu_0(x;\hat{\theta}) -\Delta(x;\hat{\theta}) + \Psi_0(x; \hat{\theta}) - \Psi(x;\hat{\theta})\,.
\end{align*}
Combining we have:
\begin{align*}
   m(x;\hat{\theta}) + \Delta(x;\theta(x)) =~& \underbrace{m(x; \hat{\theta}) - \mu_0(x;\hat{\theta})}_{C=\text{Kernel error}} + \underbrace{\Delta(x;\theta(x)) - \Delta(x;\hat{\theta})}_{F=\text{Stochastic equicontinuity term}} + \underbrace{\Psi_0(x; \hat{\theta}) - \Psi(x;\hat{\theta})}_{E=\text{Sub-sampling error}}\,.
\end{align*}
Now similar to proof of Theorem \ref{thm:mse_rate} we bound different terms separately and combine the results.
\paragraph{Kernel Error.} Term $C$ is a kernel error and hence is upper bounded by $\epsilon(s)$ in expectation. Since, by assumption $s$ is chosen such that $\epsilon(s)=o(\sigma_n(x))$, we ge that $\|C\|_2/\sigma_n(x)\rightarrow_p 0$. 

\paragraph{Sub-sampling Error.}
Term $E$ is a sub-sampling error, which can be made arbitrarily small if the number of drawn sub-samples is large enough and hence $\|E\|_2/\sigma_n(x)\rightarrow_p 0$. In fact, similar to the part about bounding sub-sampling error in Lemma \ref{lem:lambda_bound_mse} we have that that:
\begin{equation*}
    \E_{S_b}\bb{\sum_{i\in S_b} \alpha_{S_b}(X_i) \psi(Z_i;\theta)} = \Psi_0(x;\theta)\,,
\end{equation*}
Therefore, $\Psi(x;\theta)$ can be thought as the sum of $B$ independent random variables each with expectation equal to $\Psi_0(x;\theta)$. Now we can invoke Corollary \ref{cor:bracketing} in the appendix for the trivial U-process, with $s=1, n=B$ to get that w.p. $1-\delta_1$:
\begin{equation*}
    \sup_{\theta \in \Theta} \| \Psi_0(x;\theta)-\Psi(x;\theta)\| \leq O \bp{\Psi_{\max} \sqrt{\frac{\log\log(B)+\log(1/\delta_1)}{B}}}\,.
\end{equation*}
Hence, for $B\geq (n/s)^{5/4}$, due to our assumption that $\bp{s/n \log\log(n/s)}^{5/8} = o(\sigma_n(x))$ we get $\|E\|_2/\sigma_n(x)\rightarrow_p 0$.

\paragraph{Sampling Error.} 
Thus it suffices that show that $\|F\|_2/\sigma_n(x)\rightarrow_p 0$, in order to conclude that $\frac{\|m(x;\hat{\theta}) + \Psi_0(x;\theta(x))\|_2}{\sigma_n(x)}\rightarrow_p 0$.
Term $F$ can be re-written as:
\begin{equation}
    F = \Psi_0(x;\theta(x)) - \Psi_0(x; \hat{\theta}) - \E\bb{\Psi_0(x;\theta(x)) - \Psi_0(x; \hat{\theta})}\,.
\end{equation}
Observe that each coordinate $j$ of $F$, is a stochastic equicontinuity term for $U$-processes over the class of symmetric functions $\text{conv}(\cF_j)=\{f_j(\cdot;\theta): \theta \in \Theta\}$, with $f_j(Z_1,\ldots, Z_{s};\theta) = \E_{\omega}\bb{\sum_{i=1}^{s} \alpha_{Z_{1:s},\omega}(X_i) (\psi_j(Z_i; \theta(x))-\psi_j(Z_i;\theta))}$. Observe that since $f_j\in \text{conv}(\cF_j)$ is a convex combination of functions in $\cF_j=\{\psi_j(\cdot; \theta(x))-\psi_j(\cdot;\theta): \theta \in \Theta\}$, the bracketing number of functions in $\text{conv}(\cF_j)$ is upper bounded by the bracketing number of $\cF_j$, which in turn is upper bounded by the bracketing number of the function class $\{\psi_j(\cdot;\theta): \theta \in \Theta\}$, which by our assumption, satisfies $\log(N_{[]}({\cal F}_j, \epsilon, L_2))= O(1 /\epsilon)$. 
Moreover, under the variogram assumption and the lipschitz moment assumption we have that if $\|\theta-\theta(x)\|\leq r\leq 1$, then:
\begin{align*}
     \|f_j(\cdot;\theta)\|_{P,2}^2  =~& \E\bb{\bp{\sum_{i=1}^{s} \alpha_{Z_{1:s}}(X_i) (\psi_j(Z_i; \theta(x))-\psi_j(Z_i;\theta)}^2}\\
     \leq~& \E\bb{\sum_{i=1}^{s} \alpha_{Z_{1:s}}(X_i) \bp{\psi_j(Z_i; \theta(x))-\psi_j(Z_i;\theta)}^2} \tag{Jensen's inequality}\\
     =~& \E\bb{\sum_{i=1}^{s} \alpha_{Z_{1:s}}(X_i) \E\bb{\psi_j(Z_i; \theta(x))-\psi_j(Z_i;\theta)}^2|X_i]} \tag{honesty of kernel}\\
     =~&\E\bb{\sum_{i=1}^{s} \alpha_{Z_{1:s}}(X_i) \left(\Var(\psi(Z;\theta(x)) - \psi(Z;\theta)|X_i)  + \left(m(X_i;\theta(x)) - m(X_i;\theta)\right)^2\right)}\\
    \leq~& L_\psi \|\theta-\theta(x)\| + L_J^2 \|\theta - \theta(x)\|^2 \leq L_\psi r + L_J^2 r^2 = O(r)\,.
\end{align*}
Moreover, $\|f_j\|_{\infty}\leq 2\psi_{\max}$. Thus we can apply Corollary~\ref{cor:bracketing}, with $\eta= \sqrt{L_\psi r + L_J^2 r^2}=O(\sqrt{r})$ and $G=2\psi_{\max}$ to get that if $\|\hat{\theta}-\theta(x)\|\leq r$, then w.p. $1-\delta/p$:
\begin{align*}
    |F_j| \leq~& \sup_{\theta: \|\theta-\theta(x)\|\leq r} \left|\Psi_0(x;\theta(x)) - \Psi_0(x; \hat{\theta}) - \E\bb{\Psi_0(x;\theta(x)) - \Psi_0(x; \hat{\theta})}\right|\\
    =~& O\left( \left(r^{1/4} + \sqrt{r} \sqrt{\log(p/\delta) + \log\log(n/(s\,r))}\right) \sqrt{\frac{s}{n}}\right)\\
    =~& O\left( \left(r^{1/4}\sqrt{\log(p/\delta) + \log\log(n/s)}\right) \sqrt{\frac{s}{n}}\right)\triangleq \kappa(r,s,n,\delta)\,.
\end{align*}
Using a union bound this implies that w.p. $1-\delta$ we have
\begin{equation*}
    \max_j |F_j| \leq \kappa(r,s,n,\delta)\,.
\end{equation*}
By our MSE theorem and also Markov's inequality, w.p. $1-\delta'$: $\|\hat{\theta}-\theta(x)\|\leq \nu(s)/\delta'$, where:
\[
\nu(s) = \frac{1}{\lambda}\left(L_m \epsilon(s) + O\left(\psi_{\max}\sqrt{\frac{p\, s}{n} \log\log(p\, s /n)}\right)\right)
\]
Thus using a union bound w.p. $1-\delta-\delta'$, we have:
\begin{align*}
    \max_j |F_j| = O\left(\kappa(\nu(s)/\delta', s, n, \delta))\right)\,
\end{align*}
To improve readability from here we ignore all the constants in our analysis, while we keep all terms (even $\log$ or $\log\log$ terms) that depend on $s$ and $n$. Note that we can even ignore $\delta$ and $\delta'$, because they can go to zero at very slow rate such that terms $\log(1/\delta)$ or even $\delta'^{1/4}$ appearing in the analysis grow slower than $\log\log$ terms. Now, by the definition of $\nu(s)$ and $\kappa(r, s, n, \delta')$, as well as invoking the inequality $(a+b)^{1/4} \leq a^{1/4} + b^{1/4}$ for $a,b >0$ we have:
\begin{equation}
    \max_j |F_j| \leq O(\kappa(\nu(s)/\delta', s, n, \delta)) \leq O\left(\epsilon(s)^{1/4} \left(\frac{s}{n}\log\log(n/s)\right)^{1/2} + \left(\frac{s}{n}\log\log(n/s)\right)^{5/8}\right)\,,
\end{equation}
Hence, using our Assumption on the rates in the statement of Theorem \ref{thm:normality} we get that both of the terms above are $o(\sigma_n(x))$. Therefore, $\|F\|_2/\sigma_n(x) \rightarrow_p 0$. Thus, combining all of the above, we get that:
\begin{equation*}
    \frac{\|\tilde{\theta}-\hat{\theta}\|}{\sigma_n(x)} = o_p(1)
\end{equation*}
as desired.
\end{proof}

\begin{remark} 
\label{rem:inc}
Our notion of incrementality is slightly different from that of \cite{wager2017estimation}, as there the incrementality is defined as $\Var \bb{\E\bb{K(x, X_1, \{Z_j\}_{j=1}^s) \mid X_1}}$. However, using the tower law of expectation
\begin{align*}
    \E &\bb{\E[K(x, X_1, \{Z_j\}_{j=1}^s) \mid X_1]^2}  - \Var \bb{\E[K(x, X_1, \{Z_j\}_{j=1}^s)  \mid X_1]} \\
    =~& \E\bb{\E\bb{K(x, X_1, \{Z_j\}_{j=1}^s) \mid X_1}}^2 
    = \E\bb{K(x, X_1, \{Z_j\}_{j=1}^s)}^2\,.
\end{align*}
For a symmetric kernel the term $\E\bb{K(x, X_1, \{Z_j\}_{j=1}^s)}^2$ is equal to $1/s^2$ and is asymptotically negligible compared to $\Var \bb{\E\bb{K(x, X_1, \{Z_j\}_{j=1}^s) \mid X_1}}$, which usually decays at a slower rate.
\end{remark}

\section{Lower Bound on Incrementality as Function of Kernel Shrinkage}\label{app:incrementality}

We give a generic lower bound on the quantity $\E[\E[K(x, X_1, \{Z_j\}_{j=1}^s) |X_1]^2]$ that depends only on the Kernel shrinkage. The bound essentially implies that if we know that the probability that the distribution of $x$'s assigns to a ball of radius $\epsilon(s,1/2s)$ around the target $x$ is of order $1/s$, i.e. we should expect at most a constant number of samples to fall in the kernel shrinkage ball, then the main condition on incrementality of the kernel, required for asymptotic normality, holds. In some sense, this property states that the kernel shrinkage behavior is tight in the following sense. Suppose that the kernel was assigning positive weight to at most a constant number of $k$ samples. Then kernel shrinkage property states that with high probability we expect to see at least $k$ samples in a ball of radius $\epsilon(s,\delta)$ around $x$. The above assumption says that we should also not expect to see too many samples in that radius, i.e. we should also expect to see at most a constant number $K>k$ of samples in that radius. Typically, the latter should hold, if the characterization of $\epsilon(s,\delta)$ is tight, in the sense that if we expected to see too many samples in the radius, then most probably we could have improved our analysis on Kernel shrinkage and given a better bound that shrinks faster.

\subsection{Proof of Lemma \ref{lem:shrinkage_incrementality}}
By the Paley-Zygmund inequality, for any random variable $Z\geq 0$ and for any $\delta\in [0,1]$:
\begin{align*}
    \E[Z^2] \geq (1-\delta)^2 \frac{\E[Z]^2}{\Pr[Z\geq \delta \E[Z]]}
\end{align*}
Let $W_1=K(x, X_1, \{Z_j\}_{j=1}^s)$. Then, applying the latter to the random variable $Z=\E[W_1|X_1]$ and observing that by symmetry $\E[Z]=\E[W_1]=1/s$, yields:
\begin{align*}
    \E\bb{\E[W_1|X_1]^2} \geq \frac{\left(1-\delta\right)^2 \E[W_1]^2}{\Pr[\E[W_1|X_1]> \delta \E[W_1]]} =\frac{\left(1-\delta\right)^2 (1/s)^2}{\Pr[\E[W_1|X_1]> \delta/s]} 
\end{align*}
Moreover, observe that by the definition of $\epsilon(s,\rho)$ for some $\rho>0$:
\begin{align*}
    \Pr[W_1>0 \wedge \|X_1-x\|\geq \epsilon(s,\rho)] \leq \rho
\end{align*}
This means that at most a mass $\rho\,s/\delta$ of the support of $X_1$ in the region $\|X_1 - x\|\geq \epsilon(s, \rho)$ can have $\Pr[W_1 > 0 | X_1]\geq \delta/s$. Otherwise the overall probability that $W_1 > 0$ in the region of $\|X_1 - x\| \geq \epsilon(s,\rho)$ would be more than $\rho$. Thus we have that except for a region of mass $\rho\,s/\delta$, for each $X_1$ in the region $\|X_1 - x\|\geq \epsilon(s,\rho)$: $\E[W_1 | X_1] \leq \delta/s$. Combining the above we get:
\begin{equation*}
    \Pr[ \E[W_1 | X_1] \leq \delta/s ] \geq \Pr[\|X_1 - x\| \geq \epsilon(s,\rho)] - \rho\,s/\delta
\end{equation*}
Thus:
\begin{equation*}
    \Pr[ \E[W_1 | X_1] > \delta/s ] \leq \Pr[\|X_1 - x\| \leq \epsilon(s,\rho)] + \rho\,s/\delta= \mu(B(x,\epsilon(s,\delta))) + \rho\,s/\delta
\end{equation*}
Since $\rho$ was arbitrarily chosen, the latter upper bound holds for any $\rho$, which yields the result.

\subsection{Proof of Corollary \ref{cor:incrementality}}
Thus applying Lemma \ref{lem:shrinkage_incrementality} with $\delta=1/2$ yields:
\begin{equation*}
    \E[\E[K(x, X_1, \{Z_j\}_{j=1}^s) |X_1]^2] \geq \frac{\left(1/2s\right)^2}{\inf_{\rho>0} \left(\mu(B(x, \epsilon(s, \rho)))+2\rho\,s\right)}
\end{equation*}
Observe that:
\begin{equation*}
    \mu(B(x,\epsilon(s, \rho))) \leq C \epsilon(s, \rho)^d \mu(B(x,r)) = O\left(\frac{\log(1/\rho)}{s}\right)
\end{equation*}
Hence:
\begin{equation*}
    \inf_{\rho>0} \left(\mu(B(x, \epsilon(s, \rho)))+2\rho\,s\right) = O\left(\inf_{\rho>0} \left(\frac{\log(1/\rho)}{s}+2\rho\,s\right)\right) = O\left(\frac{\log(s)}{s}\right)
\end{equation*}
where the last follows by choosing $\rho=1/s^2$. Combining all the above yields:
\begin{equation*}
    \E[\E[K(x, X_1, \{Z_j\}_{j=1}^s) |X_1]^2] =\Omega\left(\frac{1}{s\log(s)}\right)
\end{equation*}

\section{Proofs of Section \ref{sec:main-thm}}\label{app:missings}
\subsection{Proof of Lemma \ref{lem:pb-kernel-shrink-kNN}}
For proving this result, we rely on Bernstein's inequality which is stated below:
\begin{proposition}[Bernstein's Inequality]\label{prop:bernstein}
Suppose that random variables $Z_1,Z_2,\ldots,Z_n$ are i.i.d., belong to $[-c,c]$ and $\E[Z_i] = \mu$. Let $\bar{Z}_n = \frac{1}{n} \sum_{i=1}^n Z_i$ and $\sigma^2 = \Var(Z_i)$. Then, for any $\theta>0$,
\begin{equation*}
\Pr \bp{|\bar{Z}_n - \mu| > \theta} \leq 2 \exp \bp{\frac{-n \theta^2}{2 \sigma^2 + 2 c \theta/3}}.
\end{equation*}
This also implies that w.p. at least $1-\delta$ the following holds:
\begin{equation}\label{eqn:bernstein-ineq}
|\bar{Z}_n - \mu| \leq \sqrt{\frac{2 \sigma^2 \log(2/\delta)}{n}} + \frac{2c \log(2/\delta)}{3n}.
\end{equation}
\end{proposition}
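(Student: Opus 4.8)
The plan is to prove the Proposition by the classical Chernoff / moment-generating-function argument. First I would reduce to a one-sided statement: applying the desired bound to $-Z_1,\dots,-Z_n$ (which have mean $-\mu$, the same variance $\sigma^2$, and the same range $[-c,c]$) and union-bounding the two directions accounts for the factor $2$, so it suffices to control $\Pr(\bar Z_n - \mu > \theta)$. Centering the summands, set $W_i = Z_i - \mu$; these are i.i.d.\ with $\E[W_i]=0$, $\Var(W_i)=\sigma^2$, and $|W_i|\le M$ for a constant $M$ of order $c$, and the target event becomes $\{\sum_{i=1}^n W_i > n\theta\}$.

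The key estimate is a variance-sensitive bound on the MGF of a single centered, bounded variable: for $0 < \lambda < 3/M$,
\begin{equation*}
\E\bb{e^{\lambda W_1}} \le \exp\bp{\frac{\lambda^2\sigma^2/2}{1-\lambda M/3}}\,.
\end{equation*}
To get this I would expand $e^{\lambda W_1} = 1 + \lambda W_1 + \sum_{k\ge 2}\lambda^k W_1^k/k!$, take expectations so that the linear term vanishes, use $\E[W_1^k] \le \sigma^2 M^{k-2}$ for $k\ge 2$ (since $|W_1|\le M$), and sum the tail with the help of $k!\ge 2\cdot 3^{k-2}$, comparing it to a geometric series; then $1+u\le e^u$ gives the displayed form.

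Next, by independence and Markov's inequality applied to $\exp(\lambda\sum_i W_i)$,
\begin{equation*}
\Pr\bp{\textstyle\sum_{i=1}^n W_i > n\theta} \le \exp\bp{-n\lambda\theta + \frac{n\lambda^2\sigma^2/2}{1-\lambda M/3}}\,,
\end{equation*}
and the choice $\lambda = \theta/(\sigma^2 + M\theta/3)$ makes the exponent collapse to $-n\lambda\theta/2 = -\,n\theta^2/\bp{2\sigma^2 + 2M\theta/3}$, which is the claimed tail bound. For the high-probability restatement \eqref{eqn:bernstein-ineq}, I would set the exponential equal to $\delta$, i.e.\ solve $n\theta^2 = (2\sigma^2 + 2c\theta/3)\log(2/\delta)$; this is a quadratic in $\theta$ whose positive root, after decoupling with $\sqrt{a+b}\le\sqrt a+\sqrt b$, is at most $\sqrt{2\sigma^2\log(2/\delta)/n} + 2c\log(2/\delta)/(3n)$.

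Since this is a textbook inequality, I do not anticipate a real obstacle; the only delicate point is the bookkeeping of constants — one must be consistent about whether $|Z_i-\mu|$ is bounded by $c$ or by $2c$ after centering and propagate that choice through the moment bounds, the geometric sum, and the $\lambda$-optimization so that the denominator emerges with exactly the stated numerical constant. This bound is then applied below to Bernoulli-type indicators (counts of sample points in a ball around $x$) in the proof of Lemma~\ref{lem:pb-kernel-shrink-kNN}.
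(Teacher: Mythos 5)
The paper never proves this proposition: it is invoked as a classical inequality and then applied (via Corollary~\ref{cor:emp_close_pop} and Lemma~\ref{lem:pb-kernel-shrink-kNN}) to indicator variables $Z_i=\Ind\bc{X_i\in A}$. Your Chernoff/moment-generating-function route is the canonical derivation, and its mechanics check out: the bound $\E\bb{e^{\lambda W_1}}\leq \exp\bp{\frac{\lambda^2\sigma^2/2}{1-\lambda M/3}}$ from $\E[W_1^k]\leq \sigma^2 M^{k-2}$ and $k!\geq 2\cdot 3^{k-2}$, the choice $\lambda=\theta/(\sigma^2+M\theta/3)$ collapsing the exponent to $-n\theta^2/(2\sigma^2+2M\theta/3)$, and the quadratic inversion giving \eqref{eqn:bernstein-ineq} are all correct, where $M$ denotes a bound on the centered variables $|Z_i-\mu|$.

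The genuine gap is exactly the point you defer to ``bookkeeping'': it cannot be made to work as you hope. Under the stated hypothesis $Z_i\in[-c,c]$ one only has $|Z_i-\mu|\leq 2c$, so your argument yields the denominator $2\sigma^2+\tfrac{4}{3}c\theta$, not $2\sigma^2+\tfrac{2}{3}c\theta$; and no argument can recover the displayed constant under that hypothesis, because the statement is then false. Take $Z=c$ with probability $1-p$ and $Z=-c$ with probability $p$, with $p=0.01$, and let $\theta$ be slightly below $2c(q-p)$ with $q=0.2$; the event $\{|\bar Z_n-\mu|>\theta\}$ contains the event that a fraction at least $q$ of the sample equals $-c$, whose probability is $e^{-n\,\mathrm{KL}(q\|p)(1+o(1))}$ with $\mathrm{KL}(0.2\,\|\,0.01)\approx 0.429$, whereas the claimed bound decays like $e^{-0.434\,n}$; hence the inequality fails for all large $n$. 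The correct reading (and what your proof actually establishes) is Bernstein's inequality with $c$ a bound on the centered variables $|Z_i-\E Z_i|$ (or a one-sided bound on them for each tail). This is harmless downstream: in the paper's only application, $Z_i=\Ind\bc{X_i\in A}\in[0,1]$, so the centered variables are bounded by $c=1$ and \eqref{eqn:bernstein-ineq} with $\sigma^2\leq \mu(A)$ holds as used. So either state and prove the proposition under the centered-bound hypothesis, or accept the weaker constant $\tfrac{4}{3}c\theta$; both versions suffice for Lemma~\ref{lem:pb-kernel-shrink-kNN}.
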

Let $A$ be any $\mu$-measurable set. An immediate application of Bernstein's inequality to random variables $Z_i = \,1 \bc{X_i \in A}$, implies that w.p. $1-\delta$ over the choice of covariates $\bp{X_i}_{i=1}^s$, we have:
\begin{equation*}
|\mu_s(A)-\mu(A)| \leq \sqrt{\frac{2 \mu(A) \log(2/\delta)}{s}} + \frac{2 \log(2/\delta)}{3s}.
\end{equation*}
In above, we used the fact that $\Var(Z_i) = \mu(A)(1-\mu(A)) \leq \mu(A)$. This result has the following corollary.

\begin{corollary}\label{cor:emp_close_pop}
Define $U = 2\log(2/\delta)/s$ and let $A$ be an arbitrary $\mu$-measurable set. Then, w.p. $1-\delta$ over the choice of training samples, $\mu(A) \geq 4U$ implies $\mu_s(A) \geq U$.
\end{corollary}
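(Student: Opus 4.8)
The plan is to obtain the corollary directly from the Bernstein consequence displayed just above it, specialized to the indicator random variables $Z_i = \Ind\bc{X_i \in A}$, and then to convert the resulting two-sided deviation estimate into the stated one-sided implication by a short algebraic argument. First I would set $U = 2\log(2/\delta)/s$, so that the displayed inequality reads, with probability at least $1-\delta$ over the draw of $X_1,\dots,X_s$,
\[
|\mu_s(A) - \mu(A)| \;\leq\; \sqrt{\mu(A)\,U} + \tfrac{U}{3},
\]
where I have used $\Var(Z_i) = \mu(A)(1-\mu(A)) \le \mu(A)$ and $2\log(2/\delta)/(3s) = U/3$. Keeping only the lower tail gives $\mu_s(A) \ge \mu(A) - \sqrt{\mu(A)\,U} - U/3$.

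It then remains to check that $\mu(A) - \sqrt{\mu(A)\,U} - U/3 \ge U$ whenever $\mu(A) \ge 4U$. I would substitute $t = \sqrt{\mu(A)/U} \ge 2$, which reduces the target inequality to $t^2 - t \ge 4/3$; since $t \mapsto t(t-1)$ is increasing on $[1,\infty)$ and equals $2$ at $t = 2$, this holds for every $t \ge 2$. That is essentially the only computation involved, and it also shows that the constant $4$ in the hypothesis is comfortably (though not extravagantly) sufficient.

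I do not anticipate a genuine obstacle: the corollary is a bookkeeping restatement of Bernstein's inequality in a form convenient for the proof of Lemma~\ref{lem:pb-kernel-shrink-kNN}, where it will be applied to balls $A = B(x,r)$ to control the location of the $k$-th nearest neighbor. The only point requiring a moment's attention is that $\mu(A)$ is a legitimate variance proxy for $\mu(A)(1-\mu(A))$ for \emph{every} measurable $A$ — including sets of large mass, for which the asserted conclusion is anyway immediate.
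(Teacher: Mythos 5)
Your proposal is correct and follows essentially the same route as the paper: both apply the Bernstein bound $|\mu_s(A)-\mu(A)|\leq \sqrt{U\mu(A)}+U/3$ to the indicators $\Ind\{X_i\in A\}$ and then finish with elementary algebra. The only cosmetic difference is that you verify the lower-tail inequality directly via $t=\sqrt{\mu(A)/U}\geq 2$ and $t^2-t\geq 4/3$, whereas the paper argues by contradiction in the variable $z=U/\mu(A)$; both computations are sound.
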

\begin{proof}
Define $U = 2\log(2/\delta)/s$. Then, Bernstein's inequality in Proposition \ref{prop:bernstein} implies that w.p. $1-\delta$ we have
\begin{equation*}
    |\mu_s(A)-\mu(A)| \leq \sqrt{U\mu(A)} + \frac{U}{3}\,.
\end{equation*}
 Assume that $\mu(A) \geq 4U$, we want to prove that $\mu_s(A) \geq U$. Suppose, the contrary, i.e., $\mu_s(A) < U$. Then, by dividing the above equation by $\mu(A)$ we get
\begin{equation*}
    \Big|\frac{\mu_s(A)}{\mu(A)} - 1 \Big| \leq \sqrt{\frac{U}{\mu(A)}} + \frac{1}{3} \frac{U}{\mu(A)}\\,.
\end{equation*}
Note that since $\mu_s(A)<U<\mu(A)$, by letting $z = U/\mu(A) \leq 1/4$ the above implies that
\begin{equation*}
    1-z \leq \sqrt{z} + \frac{z}{3} \Rightarrow \frac{4}{3} z + \sqrt{z} - 1 \geq 0\,,
\end{equation*}
which as $z>0$ only holds for 
\begin{equation*}
\sqrt{z} \geq \frac{-3+\sqrt{57}}{8} \Rightarrow z \geq 0.3234\,.
\end{equation*}
This contradicts with $z \leq 1/4$, implying the result.
\end{proof}
Now we are ready to finish the proof of Lemma \ref{lem:pb-kernel-shrink-kNN}. First, note that using the definition of $(C,d)$-homogeneous measure. Note that for any $\theta \in (0,1)$ we have $\mu(B(x,\theta r)) \geq (1/C) \theta^d \mu(B(x,r))$. Replace $\theta r = \epsilon$ in above. It implies that for any $\epsilon \in (0,r)$
    \begin{equation}\label{eqn:theta-to-eps-kNN}
        \mu(B(x,\epsilon)) \geq \frac{1}{C\,r^d} \epsilon^d \mu(B(x,r))\,.
    \end{equation}
    Pick $\epsilon_k(s,\delta)$ according to
    \begin{equation*}
        \epsilon_k(s,\delta) = r \bp{\frac{8C\,\log(2/\delta)}{\mu(B(x,r)) s}}^{1/d}\,.
    \end{equation*}
    Note that for having $\epsilon_k(s,\delta) \in (0,r)$ we need
    \begin{equation*}
        \log(2/\delta) \leq \frac{1}{8\,C} \mu(B(x,r)) s \Rightarrow \delta \geq 2 \exp \bp{-\frac{1}{8\,C}\mu(B(x,r)s} \,.
    \end{equation*}
    Therefore, replacing this choice of $\epsilon_k(s,\delta)$ in Equation \eqref{eqn:theta-to-eps-kNN} implies that $\mu(B(x,\epsilon_k(s,\delta)))\geq \frac{8\log(2/\delta)}{s}$.
    Now we can use the result of Corollary \ref{cor:emp_close_pop} for the choice $A=B(x,\epsilon_k(s,\delta))$. It implies that w.p. $1-\delta$ over the choice of $s$ training samples, we have 
    \begin{equation*}
        \mu_s(B(x,\epsilon_k(s,\delta))) \geq \frac{2\log(2/\delta)}{s}\,.
    \end{equation*}
    Note that whenever $\delta \leq \exp(-k/2)/2$ we have
    \begin{equation*}
        \frac{2\log(2/\delta)}{s} \geq \frac{k}{s} \,.
    \end{equation*}
    Therefore, w.p. $1-\delta$ we have
    \begin{equation*}
        \|x-X_{(k)}\| \leq \epsilon_k(s,\delta) = O \bp{\frac{\log(1/\delta)}{s}}^{1/d} \,.
    \end{equation*}

\subsection{Proof of Corollary \ref{cor:exp-kernel-shrink-kNN}}
    Lemma~\ref{lem:pb-kernel-shrink-kNN} shows that for any  $t = \epsilon_k(s,\delta) = r \bp{\frac{8C\log(2/\delta)}{\mu(B(x,r)) s}}^{1/d}$, such that $t\leq r$ and $t\geq r \bp{\frac{4k\,C}{ \mu(B(x,r)) s}}^{1/d}$, we have that:
    \begin{equation*}
        \Pr[\|x - X_{(k)}\|_2\geq \epsilon_k(s,\delta)] \leq \delta\,.
    \end{equation*}
    Let $\rho = \frac{1}{r}\left(\frac{\mu(B(x,r))}{8C}\right)^{1/d}$, which is a constant. Solving for $\delta$ in terms of $t$ we get:
    \begin{equation*}
        \Pr[\|x - X_{(k)}\|_2\geq t] \leq 2 \exp \bp{- \rho^d\, s\,t^d}\,,
    \end{equation*}
    for any $t\in \left[\frac{(s/k)^{-1/d}}{\rho} , r\right]$. Thus, noting that $X_i$'s and target $x$ both belong to $\calX$ that has diameter $\Delta_{\calX}$, we can upper bound the expected value of $[\|x - X_{(k)}\|_2$ as:
    \begin{align*}
        \E \bb{\|x - X_{(k)}\|_2} =~& \int_{0}^{\Delta_{\calX}} \Pr\bb{\|x - X_{(k)}\|_2 \geq t} dt\\
        \leq~& \frac{(s/k)^{-1/d}}{\rho} + \int_{\rho\, (s/k)^{-1/d}}^{r} \Pr\bb{\|x - X_{(k)}\|_2 \geq t} dt + \Pr\bb{\|x - X_{(k)}\|_2 \geq r} (\Delta_{\calX} - r)\\
        \leq~& \frac{(s/k)^{-1/d}}{\rho} + \int_{\rho\, (s/k)^{-1/d}}^{r} 2 \exp\left\{- \rho^d\,s\, t^d\right\} dt + 2 \exp\left\{- \rho^d\, r^d\, s\right\} (\Delta_{\calX} - r)\,.
    \end{align*}
    Note that for $s$ larger than some constant, we have $\exp\left\{- \rho^d\, r^d\, s\right\}\leq s^{-1/d}$. Thus the first and last terms in the latter summation are of order $\bp{\frac{1}{s}}^{1/d}$. We now show that the same holds for the middle term, which would complete the proof. By setting $u=\rho^d\,s\, t^d$ and doing a change of variables in the integral we get: 
    \begin{align*}
        \int_{\rho\, (s/k)^{1/d}}^{r} 2 \exp\left\{- \rho^d\,s\, t^d\right\} dt &
        \leq \int_{0}^{\infty} 2 \exp\left\{- \rho^d\,s\, t^d\right\} dt \\
        =~& \frac{1}{d\, \rho\, s^{1/d}} \int_{0}^{\infty} u^{1/d -1} \exp\left\{-u\right\} du = \frac{s^{-1/d}}{\rho} \frac{1}{d} \Gamma(1/d)\,.
    \end{align*}
    where $\Gamma$ is the Gamma function. Since by the properties of the Gamma function $z \Gamma(z)=\Gamma(z+1)$, the latter evaluates to: $ \frac{s^{-1/d}}{\rho} \Gamma((d+1)/d)$. Since $(d+1)/d\in [1,2]$, we have that $\Gamma((d+1)/d)\leq 2$. Thus the middle term is upper bounded by $\frac{2s^{-1/d}}{\rho}$, which is also of order $\bp{\frac{1}{s}}^{1/d}$.
    
\subsection{Proof of Lemma \ref{lem:kNN-inc}}
Before proving this lemma we state and prove and auxiliary lemma which comes in handy in our proof.
\begin{lemma}\label{lem:linearity-knn}
Let $P_1$ denote the mass that the density of the distribution of $X_i$ puts on the ball around $x$ with radius $\|x-X_1\|_2$, which is a random variable as it depends on $X_1$. Then, for any $s \geq k$ the following holds:
    \begin{align*}
        \E\bb{\sum_{i=0}^{k-1}\binom{s-1}{i} (1 - P_1)^{s-1-i} P_1^{i}} = \E \bb{\E \bb{S_1 \mid X_1}} = \frac{k}{s}\,.
    \end{align*}
\end{lemma}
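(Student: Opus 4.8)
The plan is to prove the two displayed equalities in turn: the first identifies $\E[S_1\mid X_1]$ — where $S_1 = \Ind\{X_1\in H_k(x,S)\}$ is the indicator that $X_1$ is one of the $k$ nearest neighbours of $x$ among $X_1,\dots,X_s$ — with the binomial tail sum by a direct counting argument, and the second evaluates the resulting expectation via the probability-integral transform and a Beta integral (or, as a shortcut, by exchangeability).

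\textbf{First equality.} Fix $X_1$ and set $R_1=\|x-X_1\|_2$, so $P_1=\mu(B(x,R_1))$. Conditionally on $X_1$, the remaining covariates $X_2,\dots,X_s$ are i.i.d.\ draws from the marginal $\mu$, and (using the density hypothesis, so that ties $\|x-X_j\|_2=R_1$ occur with probability zero) the event $\{S_1=1\}=\{X_1\in H_k(x,S)\}$ holds exactly when at most $k-1$ of the points $X_2,\dots,X_s$ are strictly closer to $x$ than $X_1$ is. For each $j\ge 2$ we have $\Pr[\|x-X_j\|_2<R_1\mid X_1]=\mu(B(x,R_1))=P_1$, independently across $j$, so the count $N$ of such closer points satisfies $N\sim\mathrm{Binomial}(s-1,P_1)$ given $X_1$. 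Hence
\[
\E[S_1\mid X_1]=\Pr[N\le k-1\mid X_1]=\sum_{i=0}^{k-1}\binom{s-1}{i}P_1^{i}(1-P_1)^{s-1-i},
\]
and averaging over $X_1$ gives the first equality.

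\textbf{Second equality.} It remains to compute $\E\big[\sum_{i=0}^{k-1}\binom{s-1}{i}P_1^{i}(1-P_1)^{s-1-i}\big]$. The key point is that $P_1=\mu(B(x,\|x-X_1\|_2))$ is the value at $R_1$ of the cumulative distribution function of the distance $\|x-X_1\|_2$; since this distance is atomless (density assumption), $P_1$ is uniform on $[0,1]$ by the probability-integral transform. Therefore, for each $i$,
\[
\E\big[P_1^{i}(1-P_1)^{s-1-i}\big]=\int_0^1 u^{i}(1-u)^{s-1-i}\,du=\frac{i!\,(s-1-i)!}{s!},
\]
so $\binom{s-1}{i}\E\big[P_1^{i}(1-P_1)^{s-1-i}\big]=\frac{(s-1)!}{s!}=\frac{1}{s}$, and summing over $i=0,\dots,k-1$ yields $k/s$. (Alternatively, one can avoid the integral: by exchangeability of $X_1,\dots,X_s$ the probability $\Pr[X_j\in H_k(x,S)]$ is the same for all $j$, and since $s\ge k$ exactly $k$ of the $s$ points lie in $H_k(x,S)$ almost surely, whence $s\,\E[S_1]=k$.)

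\textbf{Main obstacle.} There is no substantial difficulty here; the only delicate point is the treatment of ties among the distances $\|x-X_j\|_2$, which is precisely where the density (or, more weakly, no-atom) hypothesis on $\mu$ enters — it makes the binomial count exact and guarantees $|H_k(x,S)|=k$ almost surely. If one wishes to drop the density assumption, the internal randomness $\omega$ of the $k$-NN learner must be used to break ties symmetrically; one then checks that $P_1$ is replaced by the corresponding randomized fractional rank and that both equalities survive unchanged after taking expectations over $\omega$.
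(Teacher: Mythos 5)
Your proposal is correct. The first equality is argued exactly as the paper does (conditionally on $X_1$ the number of strictly closer points among $X_2,\dots,X_s$ is $\mathrm{Binomial}(s-1,P_1)$, so $\E[S_1\mid X_1]$ is the binomial tail), but your main route for the second equality differs from the paper's: you observe that $P_1=F(R_1)$ is uniform on $[0,1]$ by the probability-integral transform and evaluate each term as a Beta integral, $\binom{s-1}{i}\E\bb{P_1^i(1-P_1)^{s-1-i}}=1/s$, whereas the paper simply applies the tower law and symmetry, $\E\bb{\E[S_1\mid X_1]}=\E[S_1]=k/s$, since exactly $k$ of the $s$ exchangeable points are selected. (You include this symmetry argument as a parenthetical shortcut, so you effectively have the paper's proof as well.) The trade-offs are worth noting: your Beta-integral computation is slightly longer but yields the stronger per-term identity $\E\bb{\binom{s-1}{i}P_1^i(1-P_1)^{s-1-i}}=1/s$ directly, a fact the paper only extracts later (in the proof of Lemma \ref{lem:kNN-inc}) by differencing the lemma at consecutive values of the truncation index; on the other hand it leans on $P_1$ being uniform, i.e.\ on the distance $\|x-X_1\|_2$ having an atomless law, while the paper's symmetry argument for the second equality needs no such continuity (only exchangeable tie-breaking), which matters in this paper since the authors deliberately avoid assuming a density for $\mu$. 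You flag this tie/atom issue explicitly and sketch the fix via randomized tie-breaking, so there is no gap.
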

\begin{proof}
The proof is an easy consequence of symmetry.  Let $S_1 = \,1\{\text{sample 1 is among $k$ nearest neighbors}\}$, then we can write
\begin{equation*}
    \E \bb{ \E [S_1 \mid X_1]}= \E\bb{\sum_{i=0}^{k-1}\binom{s-1}{i} (1 - P_1)^{s-1-i} P_1^{i}}\,,
\end{equation*}
which simply computes the probability that there are at most $k-1$ other points in the ball with radius $\|x-X_1\|$. Now, by using the tower law
\begin{equation*}
    \E \bb{ \E [S_1 \mid X_1]} = \E[S_1] = \frac{k}{s}\,,
\end{equation*}
which holds because of the symmetry. In other words, the probability that sample $1$ is among the $k$-NN is equal to $k/s$. Hence, the conclusion follows.
\end{proof}

We can finish the proof of Lemma \ref{lem:kNN-inc}. Define $S_1 = \,1\{\text{sample 1 is among $k$ nearest neighbors}\}$, then we can write
\begin{equation*}
    \E\bb{\E\bb{K(x, X_1, \{Z_j\}_{j=1}^s) \mid X_1}^2} = \frac{1}{k^2} \E\bb{\E\bb{S_1 \mid X_1}^2}\,.
\end{equation*}
Recall that if $P_1$ denotes the mass that the density of the distribution of $X_i$ puts on the ball around $x$ with radius $\|x-X_1\|_2$, which is a random variable depending on $X_1$. Therefore,
    \begin{align*}
        \E\bb{S_1 \mid X_1} = \sum_{i=0}^{k-1}\binom{s-1}{i} (1 - P_1)^{s-1-i} P_1^{i}\,.
    \end{align*}
 Now we can write
    \begin{align*}
        \E\bb{\E \bb{S_1 \mid X_1}^2} =~& \E\bb{\bp{\sum_{i=0}^{k-1}\binom{s-1}{i} (1 - P_1)^{s-1-i} P_1^{i}}^2}\\
        =~& \E\bb{\sum_{i=0}^{k-1}\sum_{j=0}^{k-1} \binom{s-1}{i}\,\binom{s-1}{j} (1 - P_1)^{2s-2-i-j} P_1^{i+j}}\\
        =~& \E\bb{\sum_{t=0}^{2k-2} (1 - P_1)^{2s-2-t} P_1^{t} \sum_{i=0}^{k-1}\sum_{j=0}^{k-1} \binom{s-1}{i}\,\binom{s-1}{j}\,1\bc{i+j=t}} \\
        =~& \E\bb{\sum_{t=0}^{2k-2} (1 - P_1)^{2s-2-t} P_1^{t} \sum_{i=\max \{0, t-(k-1)\}}^{\min\{t, k-1\}}\binom{s-1}{i}\,\binom{s-1}{t-i}} \\
        =~& \E\bb{\sum_{t=0}^{2k-2} a_t\, (1 - P_1)^{2s-2-t} P_1^{t}}
    \end{align*}
    Now using Lemma \ref{lem:linearity-knn} (where $s$ is replaced by $2s-1$) we know that for any value of $0 \leq r \leq 2s-2$ we have
    \begin{equation} \label{eqn:rec}
        \E \bb{\sum_{t=0}^{r} b_t\, (1 - P_1)^{2s-2-r} P_1^{r}}
        = \E \bb{\sum_{t=0}^{r} \binom{2s-2}{t}\, (1 - P_1)^{2s-2-t} P_1^{t}} = \frac{r+1}{2s-1}\,.
    \end{equation}
    This implies that for any value of $r$ we have $\E \bb{b_r (1 - P_1)^{2s-2-r} P_1^{r}} = 1/(2s-1)$.
    The reason is simple. Note that the above is obvious for $r=0$ using Equation \eqref{eqn:rec}. For other values of $r \geq 1$, we can write Equation \eqref{eqn:rec} for values $r$ and $r-1$. Taking their difference implies the result. Note that this further implies that
    $\E \bb{(1 - P_1)^{2s-2-r} P_1^{r}} = 1/(b_r\,(2s-1))$, as $b_r$ is a constant. Therefore, by plugging this back into the expression of $\E[\E[S_1 \mid X_1]^2]$ we have
    \begin{equation*}
        \E[\E[S_1|X_1]^2] = \E \bb{ \sum_{t=0}^{2k-2} a_t\, (1 - P_1)^{2s-2-t} P_1^{t}} = \frac{1}{2s-1} \bp{ \sum_{t=0}^{2k-2} \frac{a_t}{b_t}}\,,
    \end{equation*}
    which implies the desired result. 
\begin{remark}\label{rem:a-b-seq}
Note that $b_t = \binom{2s-2}{t}$ since we can view $b_t$ as follows: how many different subsets of size $t$ can we create from a set of $2s-2$ elements if we pick a number $i=\{0,\ldots,t\}$ and then choose $i$ elements from the first half of these elements and $t-i$ elements from the second half. Observe that this process creates all possible sets of size $t$ from among the $2s-2$ elements, which is equal to $\binom{2s-2}{t}$.

Furthermore, $a_t = b_t $ for $0 \leq t \leq k-1$ and for any $k \leq t \leq 2k-2$, after some little algebra, we have
    \begin{equation*}
       \frac{2k-1-t}{t+1} \leq \frac{a_t}{b_t} \leq 1\,.
    \end{equation*}
    This implies that the summation appeared in Lemma \ref{lem:lower_var} satisfies
    \begin{equation*}
        k + \sum_{t=k}^{2k-2} \frac{2k-1-t}{t+1} \leq \sum_{t=0}^{2k-2} \frac{a_t}{b_t} \leq 2k-1\,.
    \end{equation*}
\end{remark}

\subsection{Proof of Theorem \ref{thm:knn_var_range}}
Note that according to Lemma \ref{lem:hajek}, the asymptotic variance $\sigma_{n,j}^2(x)=\frac{s^2}{n}\Var\bb{\Phi_1(Z_1)}$, where $\Phi_1(Z_1) = \frac{1}{k} \E[\sum_{i \in H_k(x,s)} \ldot{e_j}{M_0^{-1}\psi(Z_i;\theta(x))} \mid Z_1]$. Therefore, once we establish an expression for $\Var\bb{\Phi_1(Z_1)}$ we can finish the proof of this theorem. The following lemma provides such result.

\begin{lemma}\label{lem:lower_var}
Suppose that the kernel $K$ is the $k$-NN kernel and let $\sigma_j^2(x) = \Var\bp{\ldot{e_j} {M_0^{-1}\psi(z;\theta(x))} \mid X=x}$. Moreover, suppose that $\epsilon_{k}(s, 1/s^2)\rightarrow 0$ for any constant $k$. Then:
\begin{equation*}
   \Var\bb{\Phi_1(Z_1)} = \sigma_j(x)^2\,\E\bb{\E\bb{K(x, X_1, \{Z_j\}_{j=1}^s) \mid X_1}^2} + o(1/s) = \frac{\sigma_j^2(x)}{(2s-1) k^2}\, \bp{\sum_{t=0}^{2k-2} \frac{a_t}{b_t}} + o(1/s)\,
\end{equation*}
where the second equality above holds due to Lemma \ref{lem:kNN-inc} and sequences $a_t$ and $b_t$, for $0 \leq t \leq 2k-2$, are defined in Lemma \ref{lem:kNN-inc}.
\end{lemma}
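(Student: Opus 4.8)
The plan is to condition on the single observation $Z_1$, exploit that the $k$-NN weights are functions of the covariates only, and split $\Phi_1(Z_1)$ into an orthogonal sum of a ``noise'' term carrying the variance and the centered H\'ajek projection of a quantity that vanishes with the neighbourhood radius. Write $g(Z)=\ldot{e_j}{M_0^{-1}\psi(Z;\theta(x))}$, $\bar g(X)=\E[g(Z)\mid X]$, $W_1=K(x,X_1,S)=\Ind\bc{X_1\in H_k(x,S)}/k$, $w(X_1)=\E[W_1\mid X_1]$ and $T=\frac1k\sum_{i:\,X_i\in H_k(x,S)}\bar g(X_i)$, which is a symmetric function of $X_1,\dots,X_s$. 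Since $\bar g(x)=\ldot{e_j}{M_0^{-1}m(x;\theta(x))}=0$ while $m(\cdot;\theta(x))$ is $L_m$-Lipschitz and $M_0$ has smallest eigenvalue $\ge\lambda$, we have $|\bar g(X)|\le(L_m/\lambda)\|X-x\|_2$. Splitting $\Phi_1$ into its $i=1$ and $i\ge2$ parts, and using that the $k$-NN weights do not depend on the $Y$'s together with the i.i.d.\ structure, gives $\Phi_1(Z_1)=g(Z_1)w(X_1)+R_1(X_1)$ with $R_1(X_1)=\frac1k\sum_{i=2}^s\E[\Ind\bc{X_i\in H_k(x,S)}\bar g(X_i)\mid X_1]$; since $\E[T\mid X_1]=\bar g(X_1)w(X_1)+R_1(X_1)$ and $\E[\Phi_1(Z_1)]=\E[T]$, this is the same as
\begin{equation*}
\Phi_1(Z_1)-\E[\Phi_1(Z_1)]=\underbrace{\bp{g(Z_1)-\bar g(X_1)}w(X_1)}_{U}+\underbrace{\E[T\mid X_1]-\E[T]}_{V}.
\end{equation*}
As $\E[U\mid X_1]=0$ and $V$ is $X_1$-measurable, $U$ and $V$ are uncorrelated, so $\Var[\Phi_1(Z_1)]=\E[U^2]+\E[V^2]$, and it suffices to evaluate the two pieces.

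For the first piece, conditioning on $X_1$ gives $\E[U^2]=\E[w(X_1)^2\,\sigma_j^2(X_1)]$ with $\sigma_j^2(X_1)=\Var(g(Z)\mid X=X_1)$. I would then replace $\sigma_j^2(X_1)$ by $\sigma_j^2(x)$: by the Lipschitz-conditional-variance part of Assumption~\ref{ass:normality} and the eigenvalue bound on $M_0$, $|\sigma_j^2(X_1)-\sigma_j^2(x)|\le(L_{mm}/\lambda^2)\|X_1-x\|_2$, while symmetry of the $k$-NN kernel yields $\E[w(X_1)^2\|X_1-x\|_2]\le\tfrac1k\E[W_1\|X_1-x\|_2]=\tfrac1{ks}\E\big[\sum_{i:\,X_i\in H_k(x,S)}\|X_i-x\|_2\big]\le\epsilon_k(s)/s=o(1/s)$, using Corollary~\ref{cor:exp-kernel-shrink-kNN}. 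Since $\E[w(X_1)^2]=\E[\E[K(x,X_1,S)\mid X_1]^2]=\eta_k(s)$ by definition of incrementality, $\E[U^2]=\sigma_j^2(x)\,\eta_k(s)+o(1/s)$, and Lemma~\ref{lem:kNN-inc} rewrites $\eta_k(s)$ as $\frac1{(2s-1)k^2}\sum_{t=0}^{2k-2}a_t/b_t$, which is the stated closed form.

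For the second piece, $\E[V^2]=\Var[\E[T\mid X_1]]$ is the variance of one summand of the first-order H\'ajek projection $\hat T=\E[T]+\sum_{i=1}^s(\E[T\mid X_i]-\E[T])$ of $T$. Because $T$ is symmetric in the i.i.d.\ $X_1,\dots,X_s$, the terms $\E[T\mid X_i]-\E[T]$ are pairwise uncorrelated, so $\Var[\hat T]=s\,\Var[\E[T\mid X_1]]$, and by the projection inequality $\Var[\hat T]\le\Var[T]$; hence $\E[V^2]\le\Var[T]/s\le\E[T^2]/s$. Finally $|T|\le\frac1k\sum_{i:\,X_i\in H_k(x,S)}|\bar g(X_i)|\le(L_m/\lambda)\|x-X_{(k)}\|_2$ pointwise, so $\E[T^2]\le(L_m/\lambda)^2\E[\|x-X_{(k)}\|_2^2]$, and $\E[\|x-X_{(k)}\|_2^2]\to0$ since $\|x-X_{(k)}\|_2\le\epsilon_k(s,1/s^2)$ with probability $\ge1-1/s^2$ (Lemma~\ref{lem:pb-kernel-shrink-kNN} and the hypothesis $\epsilon_k(s,1/s^2)\to0$) and $\|x-X_{(k)}\|_2\le\Delta_\calX$ always. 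Thus $\E[V^2]=o(1/s)$, and combining it with $\E[U^2]$ proves the lemma.

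The step I expect to be the crux is obtaining $\E[V^2]=o(1/s)$ rather than the $o(1)$ one gets by bounding $\E[V^2]\le\E[T^2]$ directly; the extra factor $1/s$ is exactly the gain from the H\'ajek projection identity $\Var[\E[T\mid X_1]]\le\Var[T]/s$, which hinges on $T$ being symmetric in the $s$ i.i.d.\ observations. The remaining technical work — here and in the $o(1/s)$ error terms of $\E[U^2]$ — is to express every estimate through symmetry/exchangeability identities and the shrinkage of $\|x-X_{(k)}\|_2$, since $\mu$ is not assumed to admit a density and so one cannot argue through the distribution of $\|X_1-x\|_2$ directly.
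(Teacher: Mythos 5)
Your proof is correct, and it reaches the same exact asymptotic equality as the paper, but by a genuinely different route. The paper also reduces everything to $\E\bb{\E[S_1\mid X_1]^2\,(Y_1-\mu(X_1))^2}$ and then uses Lipschitzness of the conditional variance plus shrinkage, exactly as you do for your $U$-term; the difference is how the conditional-mean contribution is killed. The paper uses a $k$-NN-specific leave-one-out coupling: comparing the $k$-NN sum with the $k$-NN sum excluding sample $1$, the difference is $S_1(Y_1-Y_{(k+1)})$, and the remaining work is the substitution $\mu(X_{(k+1)})\to\mu(X_1)$ with cross-term bookkeeping ($\E[\rho^2]$, $\E[A\rho]$) controlled by high-probability shrinkage. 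You instead isolate the symmetric statistic $T=\sum_i K(x,X_i,S)\,\bar g(X_i)$, note that its contribution to $\Var\bb{\Phi_1(Z_1)}$ is $\Var\bb{\E[T\mid X_1]}$, and bound this by $\Var[T]/s\leq \E[T^2]/s=o(1/s)$ via the H\'ajek projection inequality together with the pointwise bound $|T|\leq (L_m/\lambda)\|x-X_{(k)}\|_2$. This is arguably cleaner and more general: it uses only honesty, symmetry, weights bounded by $1/k$, and that the kernel's support lies within radius $\|x-X_{(k)}\|_2$, so it would extend verbatim to other honest symmetric shrinking kernels, whereas the paper's argument exploits the combinatorial fact that deleting sample $1$ shifts the neighbor set by exactly one position; the paper's route, in exchange, gives a very explicit handle on the error terms. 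One small inconsistency to fix: when bounding $\E\bb{w(X_1)^2\|X_1-x\|_2}$ you invoke Corollary~\ref{cor:exp-kernel-shrink-kNN}, which requires the $(C,d)$-homogeneity of $\mu$, while the lemma's stated hypothesis is only $\epsilon_k(s,1/s^2)\rightarrow 0$; replace $\epsilon_k(s)$ there by the same high-probability-shrinkage-plus-diameter split you already use for $\E[T^2]$ (giving $\E\bb{\sum_i W_i\|X_i-x\|_2}\leq \epsilon_k(s,1/s^2)+\Delta_{\calX}/s^2=o(1)$), and the proof stays entirely within the lemma's assumptions, as the paper's does.
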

\begin{proof}
In this proof for simplicity we let $Y_i = \ldot{e_j}{M_0^{-1} \psi(Z_i;\theta(x))}$ and $\mu(X_i) = \E[Y_i] = \ldot{e_j}{M_0^{-1}m(X_i;\theta(x))}$. Let $Z_{(i)}$ denote the random variable of the $i$-th closest sample to $x$. For the case of $k$-NN we have that:
    \begin{align*}
        k\,\Phi_1(Z_1) = \E\bb{\sum_{i=1}^k Y_{(i)} \mid Z_1}\,.
    \end{align*}
    Let $S_1 = \mathbf{1}\{\text{sample 1 is among $k$ nearest neighbors}\}$. Then we have:
    \begin{align*}
        k\,\Phi_1(Z_1) = \E\bb{S_1 \sum_{i=1}^k Y_{(i)} \mid Z_1} + \E\bb{(1-S_1)\sum_{i=1}^k Y_{(i)} \mid Z_1}\,.
    \end{align*}
    Let $\tilde{Y}_{(i)}$ denote the label of the $i$-th closest point to $x$, excluding sample $1$. Then:
    \begin{align*}
        k\, \Phi_1(Z_1) =~& \E\bb{S_1 \sum_{i=1}^k Y_{(i)} \mid Z_1} + \E\bb{(1-S_1)\sum_{i=1}^k \tilde{Y}_{(i)} \mid Z_1}\\
        =~&\E\bb{S_1 \sum_{i=1}^k \bp{Y_{(i)} - \tilde{Y}_{(i)}} \mid Z_1} + \E\bb{\sum_{i=1}^k \tilde{Y}_{(i)}\mid Z_1}\,.
    \end{align*}
    Observe that $\tilde{Y}_{(i)}$ are all independent of $Z_1$. Hence:
    \begin{align*}
        k\,\Phi_1(Z_1) =~& \E\bb{S_1 \sum_{i=1}^k \bp{Y_{(i)} - \tilde{Y}_{(i)}} \mid Z_1} + \E\bb{\sum_{i=1}^k \tilde{Y}_{(i)}}\,.
    \end{align*}
    Therefore the variance of $\Phi(Z_1)$ is equal to the variance of the first term on the right hand side. 
    Hence:
    \begin{align*}
        k^2\,\Var\bb{\Phi_1(Z_1)} =~& \E\bb{\E\bb{S_1 \sum_{i=1}^k \bp{Y_{(i)} - \tilde{Y}_{(i)}} \mid Z_1}^2} - \E\bb{S_1 \sum_{i=1}^k \bp{Y_{(i)} - \tilde{Y}_{(i)}}}^2\\
        =~& \E\bb{\E\bb{S_1 \sum_{i=1}^k \bp{Y_{(i)} - \tilde{Y}_{(i)}} \mid Z_1}^2} + o(1/s)\,.
    \end{align*}
    Where we used the fact that:
    \begin{equation}
        \left|\E\bb{S_1 \sum_{i=1}^k \bp{Y_{(i)} - \tilde{Y}_{(i)}}}\right| \leq \E\bb{S_1} 2k\psi_{\max} = \frac{2k^2\psi_{\max}}{s}\,.
    \end{equation}
    Moreover, observe that under the event that $S_1=1$, we know that the difference between the closest $k$ values and the closest $k$ values excluding $1$ is equal to the difference between the $Y_1$ and $Y_{(k+1)}$. Hence:
    \begin{align*}
        \E\bb{S_1 \sum_{i=1}^k \bp{Y_{(i)} - \tilde{Y}_{(i)}}\mid Z_1} = \E\bb{S_1 \bp{Y_{1} - Y_{(k+1)}} \mid Z_1} = \E\bb{S_1 \bp{Y_{1} - \mu(X_{(k+1)})} \mid Z_1}\,.
    \end{align*}
    where the last equation holds from the fact that for any $j\neq 1$, conditional on $X_{j}$, the random variable $Y_{j}$ is independent of $Z_1$ and is equal to $\mu(X_j)$ in expectation. Under the event $S_1=1$, we know that the $(k+1)$-th closest point is different from sample $1$. 
    We now argue that up to lower order terms, we can replace $\mu(X_{(k+1)})$ with $\mu(X_{1})$ in the last equality:
    \begin{align*}
        \E\bb{S_1 \bp{Y_{1} - \mu(X_{(k+1)})} \mid Z_1} =~& \underbrace{\E\bb{S_1 \bp{Y_{1} - \mu(X_{1})} \mid Z_1}}_{A} + \underbrace{\E\bb{S_1 \bp{\mu(X_{1}) - \mu(X_{(k+1)})} \mid Z_1}}_{\rho}\,.
    \end{align*}
    Observe that:
    \begin{align*}
        \E\bb{\E\bb{S_1 \bp{Y_{1} - \mu(X_{(k+1)})} \mid Z_1}^2} = \E[A^2] + \E[\rho^2] + 2\E[A\rho]\,.
    \end{align*}
    Moreover, by Jensen's inequality, Lipschitzness of the first moments and kernel shrinkage:
    \begin{align*}
    \left|\E[\rho^2]\right| =~& \E\bb{\E\bb{S_1 \bp{\mu(X_{1}) - \mu(X_{(k+1)})} \mid Z_1}^2}
    \leq \E\bb{S_1 \bp{\mu(X_{1}) - \mu(X_{(k+1)})}^2}\\
    \leq~&
    4L_m^2\epsilon_{k+1}(s,\delta)^2 \E[\E[S_1|X_1]] + 4 \delta \psi_{\max}^2
    \leq
    4L_m^2 \epsilon_{k+1}(s,\delta)^2 \frac{k}{s} + 4 \delta \psi_{\max}^2\,.
    \end{align*}
    Hence, for $\delta=1/s^2$, the latter is $o(1/s)$. Similarly:
    \begin{align*}
    |\E[A\rho]|\leq \E[|A|\, |\rho|] \leq~& \psi_{\max} \E\bb{\E\bb{S_1 \left|\mu(X_{1}) - \mu(X_{(k+1)}\right|\mid Z_1}} = \psi_{\max} \E\bb{S_1 \left|\mu(X_{1}) - \mu(X_{(k+1)}\right|}\\
    \leq~& \psi_{\max} \E[S_1]\epsilon_{k+1}(s,\delta) + 2\delta \psi_{\max} = \psi_{\max} \epsilon_{k+1}(s,\delta) \frac{k}{s} + 2\delta \psi_{\max} \,.
    \end{align*} 
    which for $\delta=1/s^2$ is also of order $o(1/s)$.
    Combining all the above we thus have:
    \begin{align*}
        k^2\,\Var\bb{\Phi_1(Z_1)} =~& \E\bb{\E\bb{S_1 \bp{Y_{1} - \mu(X_{1})} \mid Z_1}^2} + o(1/s)\\
        =~& \E\bb{\E\bb{S_1\mid X_1}^2 \bp{Y_{1} - \mu(X_{1})}^2} + o(1/s)\,.
    \end{align*}
    We now work with the first term on the right hand side. By the tower law of expectations:
    \begin{align*}
        \E\bb{\E[S_1 \mid X_1]^2 \bp{Y_1 - \mu(X_1)}^2} =~& \E\bb{\E[S_1 \mid X_1]^2 \E\bb{Y_1 - \mu(X_1)^2\mid X_1}} = \E\bb{\E[S_1 \mid X_1]^2 \sigma_j^2(X_1)}\\
        =~& \E\bb{\E[S_1 \mid X_1]^2 \sigma_j^2(x)} + \E\bb{\E[S_1 \mid X_1]^2 \bp{\sigma_j^2(X_1) - \sigma_j^2(x)}}\,.
    \end{align*}
    By Lipschitzness of the second moments, we know that the second part is upper bounded as:
    \begin{align*}
        \left|\E\bb{\E[S_1 \mid X_1]^2 \bp{\sigma_j^2(X_1) - \sigma_j^2(x)}}\right| \leq~& \left|\E\bb{\E[S_1 \mid X_1] \bp{\sigma_j^2(X_1) - \sigma_j^2(x)}}\right|\\
        \leq~& \left|\E\bb{S_1 \bp{\sigma_j^2(X_1) - \sigma_j^2(x)}}\right|\\
        =~& \left|\E\bb{S_1 \bp{\sigma_j^2(X_{(k)}) - \sigma_j^2(x)}}\right|\\
        \leq~& L_{mm}\E\bb{S_1} \epsilon_k(s,\delta) + \delta \psi_{\max}^2\\
        =~& \frac{L_{mm}\, \epsilon_k(s, \delta)\, k}{s} + \delta \psi_{\max}^2\,.
    \end{align*}
    For $\delta=1/s^2$ it is of $o(1/s)$. Thus:
    \begin{align*}
        k^2\, \Var\bb{\Phi_1(Z_1)} = \E\bb{\E[S_1 \mid X_1]^2} \sigma_j^2(x) + o(1/s)\,.
    \end{align*}
    Note that Lemma \ref{lem:kNN-inc} provides an expression for $\E \bb{\E[S_1 \mid X_1]^2}$ which finishes the proof.
    \end{proof}
    
For finishing proof of Theorem \ref{thm:knn_var_range} we need to prove that $\sum_{t=0}^{2k-2} \frac{a_t}{b_t}$ is equal to $\zeta_k$ plus lower order terms. This is proved in the following lemma.

\begin{lemma}
\label{lem:a-b_to_zeta}
Suppose that $s \rightarrow \infty$ and $k$ is fixed. Then
\begin{equation*}
    \sum_{t=0}^{2k-2} \frac{a_t}{b_t} = \zeta_k + O(1/s)\,.
\end{equation*}
\begin{proof}
Note that for any $0 \leq t \leq k-1$ we have $a_t = b_t$ according to Remark \ref{rem:a-b-seq}. For any $k \leq t \leq 2k-2$ we have
\begin{align*}
    \frac{a_t}{b_t} 
    &= \sum_{i=t-k+1}^{k-1} \frac{\binom{s-1}{i} \binom{s-1}{t-i}}{\binom{2s-2}{t}}
    = \sum_{i=t-k+1}^{k-1} \frac{\frac{(s-1)(s-2)\ldots(s-i)}{i\,!} \frac{(s-1)(s-2)\ldots(s-t+i)}{(t-i)\,!}}{\frac{(2s-2)(2s-3)\ldots(2s-1-t)}{t\,!}} \\
    &= \sum_{i=t-k+1}^{k-1} \binom{t}{i} \frac{(s-1)(s-2)\ldots(s-i)\, \, (s-1)(s-2)\ldots(s-t+i)}{(2s-2)(2s-3)\ldots(2s-1-t)} \\
    &= \sum_{i=t-k+1}^{k-1} \binom{t}{i} \frac{s-1}{2s-2} \frac{s-2}{2s-3} \cdots \frac{s-i}{2s-1-i}\,\frac{s-1}{2s-i}\frac{s-2}{2s-i-1} \cdots \frac{s-t+i}{2s-1-t} \\
    &= \sum_{i=t-k+1}^{k-1} 2^{-t}  \binom{t}{i} \bp{1-\frac{1}{2s-3}} \cdots \bp{1-\frac{i-1}{2s-1-i}} \, \bp{1+\frac{i-2}{2s-i}} \cdots \bp{1+\frac{i-(i-t+1)}{2s-1-t}} \\
    &= 2^{-t} \sum_{i=t-k+1}^{k-1}  \binom{t}{i} (1+O(1/s)) \\
    &= 2^{-t} \sum_{i=t-k+1}^{k-1} \binom{t}{i} + O(1/s)\,,
\end{align*}
where we used the fact that $t$ and $i$ are both bounded above by $2k-2$ which is a constant. Hence, 
\begin{equation*}
    \sum_{t=0}^{2k-2} \frac{a_t}{b_t} = k + \sum_{t=k}^{2k-2} 2^{-t} \sum_{i=t-k+1}^{k-1} \binom{t}{i} + O(1/s) = \zeta_k + O(1/s)\,,
\end{equation*}
as desired.
\end{proof}
\end{lemma}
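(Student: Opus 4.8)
The plan is to split the sum $\sum_{t=0}^{2k-2} a_t/b_t$ at $t=k-1$, using throughout that $k$ is held fixed, so that every index appearing ($t\le 2k-2$, $i\le k-1$) is bounded by an absolute constant.

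First, for $0\le t\le k-1$ the clipping in the definition of $a_t$ is vacuous: $\min\{t,k-1\}=t$ and $\max\{0,t-(k-1)\}=0$, so the sum defining $a_t$ ranges over exactly $i=0,\dots,t$, which is the same range as in $b_t$. Hence $a_t=b_t$, each of these $k$ terms contributes $1$, and together they produce the leading constant $k$ of $\zeta_k$; this is the observation already recorded in Remark~\ref{rem:a-b-seq}.

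Next, for $k\le t\le 2k-2$ I would estimate $a_t/b_t$ term by term. By Vandermonde's identity $b_t=\binom{2s-2}{t}$, and writing each binomial coefficient as a quotient of factorials and cancelling $t!$ gives
\[
\frac{\binom{s-1}{i}\binom{s-1}{t-i}}{\binom{2s-2}{t}} = \binom{t}{i}\,\frac{(s-1)\cdots(s-i)\,\cdot\,(s-1)\cdots(s-t+i)}{(2s-2)(2s-3)\cdots(2s-1-t)}\,.
\]
The fraction on the right is a product of exactly $t$ factors, each of the form $\frac{s-a}{2s-1-b}$ with $0\le a,b\le 2k-2$; since $a$ and $b$ are bounded by the constant $2k-2$, each factor equals $\tfrac12(1+O(1/s))$ uniformly, and multiplying $t=O(1)$ of them yields $2^{-t}(1+O(1/s))$. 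Summing over the at most $k$ admissible values $i\in[t-k+1,\,k-1]$, for each of which $\binom{t}{i}=O(1)$, gives $a_t/b_t = 2^{-t}\sum_{i=t-k+1}^{k-1}\binom{t}{i} + O(1/s)$.

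Finally I would add the two pieces. Since only $O(1)$ values of $t$ occur, the accumulated error is still $O(1/s)$, and
\[
\sum_{t=0}^{2k-2}\frac{a_t}{b_t} = k + \sum_{t=k}^{2k-2} 2^{-t}\sum_{i=t-k+1}^{k-1}\binom{t}{i} + O(1/s) = \zeta_k + O(1/s)\,,
\]
which is the claim. The only step needing genuine care is the middle one --- checking that the quotient of the two length-$t$ products is $2^{-t}(1+O(1/s))$ uniformly over the relevant $i$ and $t$ --- and even that is routine precisely because $k$ is fixed, so no bound must be uniform in $k$.
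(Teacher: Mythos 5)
Your proof is correct and follows essentially the same route as the paper's: splitting at $t=k-1$, using $a_t=b_t$ there, and for $k\le t\le 2k-2$ writing $a_t/b_t$ via $b_t=\binom{2s-2}{t}$ as $\binom{t}{i}$ times a product of $t$ factors each equal to $\tfrac12(1+O(1/s))$, with all errors absorbed into $O(1/s)$ because $k$ is fixed. No substantive differences to note.
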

\subsection{Proof of Theorem \ref{thm:knn-normality}}
The goal is to apply Theorem \ref{thm:normality}. Note that $k$-NN kernel is both honest and symmetric. According to Lemma \ref{lem:pb-kernel-shrink-kNN}, we have that $\epsilon_k(s,\delta) = O\bp{ (\log(1/\delta)/s)^{1/d}}$ for $\exp(-Cs) \leq \delta \leq D$, where $C$ and $D$ are constants. Corollary \ref{cor:exp-kernel-shrink-kNN} also implies that $\epsilon_k(s) = O((1/s)^{1/d})$. Furthermore, according to Lemma \ref{lem:kNN-inc}, the incrementality $\eta_k(s)$ is $\Theta(1/s)$. Therefore, as $s$ goes to $\infty$ we have $\epsilon_k(s,\eta_k(s)) = O\bp{(\log(s)/s)^{1/d}} \rightarrow 0$. Moreover, as $\eta_k(s) = \Theta(1/s)$, we also get that $n \eta_k(s) = O(n/s) \rightarrow \infty$. We only need to ensure that Equation \eqref{eqn:rate-condition-normality} is satisfied. Note that $\sigma_{n,j}(x) = \Theta(\sqrt{s/n})$. Therefore, by dividing terms in Equation \eqref{eqn:rate-condition-normality} it suffices that
\begin{equation*}
    \max \bp{s^{-1/d} \bp{\frac{n}{s}}^{1/2}, s^{-1/4d} \bp{\log\log(n/s) }^{1/2}, \bp{\frac{n}{s}}^{-1/8} \, \bp{\log\log(n/s)}^{5/8}} = o(1)\,.
\end{equation*}
Note that due to our Assumption $n/s \rightarrow \infty$, the last term obviously goes to zero. Also, because of the assumption made in the statement of theorem, the first term also goes to zero. We claim that if the first term goes to zero, the same also holds for the second term. Note that we can write
\begin{equation*}
   s^{-1/4d} \bp{\log\log(n/s) }^{1/2} =  \bp{s^{-1/d} \bp{\frac{n}{s}}^{1/2}}^{1/4} \cdot \bb{\bp{\frac{n}{s}}^{-1/8} \, \bp{\log\log(n/s)}^{1/2}}\,,
\end{equation*}
and since $n/s \rightarrow \infty$, our claim follows. Therefore, all the conditions of Theorem \ref{thm:normality} are satisfied and the result follows.

The second part of result is implied by the first part since if $s = n^{\beta}$ and $\beta \in (d/(d+2),1)$ then $s^{-1/d} \sqrt{\frac{n}{s}} \rightarrow 0$.

\subsection{Proof of Proposition \ref{thm:est-adapt}}
For proving this lemma, we need two following auxiliary results. Before that we state the Hoeffding's inequality for $U$-statistics \cite{hoeffding1994probability}.
\begin{proposition}\label{prop:hoeff-U}
Suppose that $\bX = \bp{X_1,X_2,\ldots,X_n}$ are i.i.d. and $q$ is a function that has range $[0,1]$.  Define $U_s = \binom{n}{s}^{-1} \sum_{i_1 < i_2 < \ldots < i_s} q(X_{i_1},X_{i_2},\ldots, X_{i_s})$. Then, for any $\epsilon > 0$
\begin{equation*}
    \Pr \bb{|U_s - \E[U_s]| \geq \epsilon} \leq 2 \exp \bp{-\lfloor{n/s \rfloor} \epsilon^2}\,.
\end{equation*}
Furthermore, for any $\delta>0$, w.p. $1-\delta$ we have
\begin{equation*}
    |U_s - \E[U_s]| \leq \sqrt{\frac{1}{\lfloor{n/s \rfloor}} \log(2/\delta)}\,.
\end{equation*}
\end{proposition}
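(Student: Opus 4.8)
The plan is to prove Proposition~\ref{prop:hoeff-U} by Hoeffding's classical argument, which reduces concentration of a $U$-statistic to concentration of an average of i.i.d.\ bounded random variables. The key device is the combinatorial identity expressing $U_s$ as a permutation average of a \emph{disjoint-block} decomposition. Set $m=\lfloor n/s\rfloor$ and define, for $(x_1,\dots,x_n)$,
\[
W(x_1,\dots,x_n) \;=\; \frac1m\sum_{j=1}^m q\bp{x_{(j-1)s+1},\dots,x_{js}}\,,
\]
so $W$ is an average of $m$ values of $q$ evaluated on $m$ \emph{non-overlapping} blocks of $s$ coordinates. The first step is to verify the identity
\[
U_s \;=\; \frac1{n!}\sum_{\pi\in S_n} W\bp{X_{\pi(1)},\dots,X_{\pi(n)}}\,,
\]
which is a counting argument: averaging the block decomposition over all permutations of the indices, every $s$-subset of $\{1,\dots,n\}$ shows up as a block the same number of times, so the permutation average is exactly the unweighted $U$-statistic.

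Next I would exploit convexity. For any convex $\phi:\reals\to\reals$, Jensen's inequality applied to the permutation average gives
\[
\phi\bp{U_s-\E[U_s]}\;\le\;\frac1{n!}\sum_{\pi\in S_n}\phi\bp{W(X_{\pi(1)},\dots,X_{\pi(n)})-\E[U_s]}\,,
\]
and since $\E\bb{W(X_{\pi(1)},\dots,X_{\pi(n)})}=\E[q(X_1,\dots,X_s)]=\E[U_s]$ for every $\pi$, taking expectations yields
\[
\E\bb{\phi\bp{U_s-\E[U_s]}}\;\le\;\E\bb{\phi\bp{W(X_1,\dots,X_n)-\E[U_s]}}\,.
\]
Now $W(X_1,\dots,X_n)-\E[U_s]$ is a centered average of $m$ \emph{independent} random variables (the blocks use disjoint indices and the $X_i$ are i.i.d.), each in $[0,1]$. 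Choosing $\phi(t)=e^{\lambda t}$ and applying Hoeffding's lemma block by block gives the moment-generating-function bound $\E\bb{e^{\lambda(W-\E[U_s])}}\le e^{\lambda^2/(8m)}$; Markov's inequality and optimizing over $\lambda$ then give $\Pr[U_s-\E[U_s]\ge\epsilon]\le e^{-2m\epsilon^2}\le e^{-m\epsilon^2}$. Repeating the argument with $-W$ and taking a union bound yields the two-sided bound $\Pr[|U_s-\E[U_s]|\ge\epsilon]\le 2e^{-\lfloor n/s\rfloor\,\epsilon^2}$, and solving $2e^{-\lfloor n/s\rfloor\epsilon^2}=\delta$ for $\epsilon$ produces the stated high-probability inequality $|U_s-\E[U_s]|\le\sqrt{\log(2/\delta)/\lfloor n/s\rfloor}$.

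The only genuinely delicate point is the Hoeffding representation identity: one must check that the permutation average of the disjoint-block decomposition reconstructs $U_s$ exactly, which reduces to the observation that $\lfloor n/s\rfloor$ non-overlapping $s$-blocks fit inside $\{1,\dots,n\}$ and each $s$-subset is equally likely to occur as a block under a uniformly random permutation. Once this identity is in hand the rest is the textbook Jensen-plus-Chernoff derivation of Hoeffding's inequality and introduces no new ideas; the appearance of the floor $\lfloor n/s\rfloor$ rather than $n/s$, and of the loose constant $1$ in the exponent instead of $2$, are precisely the cost of working with the disjoint-block construction.
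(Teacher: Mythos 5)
Your proof is correct: it is exactly Hoeffding's classical blocking argument (disjoint-block decomposition, permutation-average identity, Jensen applied to the moment generating function, then the standard Chernoff optimization), and the constants work out as you say, with the exponent $2\lfloor n/s\rfloor\epsilon^2$ weakened to the stated $\lfloor n/s\rfloor\epsilon^2$. The paper does not prove this proposition at all — it is stated as a known result with a citation to Hoeffding — so yours is the canonical proof it implicitly relies on. One small caveat: the permutation-average identity $U_s=\frac{1}{n!}\sum_{\pi}W(X_{\pi(1)},\dots,X_{\pi(n)})$ requires $q$ to be symmetric in its arguments (otherwise the permutation average produces the symmetrized kernel, which differs pointwise from the sorted-index $U$-statistic); this holds in the paper's application, where $q$ is the $k$-th nearest-neighbor distance on the subsample, but is worth stating explicitly.
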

\begin{lemma}\label{lem:eps-G}
Consider the function $H(s)$ defined in Section \ref{ssec:adapt} and $G_\delta(s) = \Delta \sqrt{2ps/n \log(2np/\delta)}$. Then, w.p. $1-\delta$, for all values of $k \leq s \leq n$ we have
\begin{equation*}
    |H(s) - \epsilon_k(s)| \leq G_\delta(s)\,.
\end{equation*}
\begin{proof}
Note that $H(s)$ is the complete $U$-statistic estimator for $\epsilon_k(s)$. For each subset $S$ of size $s$ from $[n]$ we have
\begin{equation*}
    \E[\max_{X_i \in H_k(x,S)} \|x-X_i\|_2] = \epsilon_k(s)\,.
\end{equation*}
Further, $\|x-x'\|_2 \leq \Delta_{\calX} \leq \Delta$ holds for any $x' \in \calX$. Therefore, using Hoeffding's inequality for $U$-statistics stated in Proposition \ref{prop:hoeff-U}, for any fixed $s$, w.p. $1-\delta$ we have
\begin{equation*}
    |H(s) - \epsilon_k(s)| \leq \Delta \sqrt{\frac{1}{\lfloor{n/s \rfloor}} \log(2/\delta)}\,.
\end{equation*}
Note that $\lfloor{z \rfloor} \geq z/2$ for $z \geq 1$ and therefore the above translates to 
\begin{equation*}
    |H(s) - \epsilon_k(s)| \leq \Delta \sqrt{\frac{2s}{n} \log(2/\delta)}\,.
\end{equation*}
Taking a union bound over $s=k,k+1,\ldots,n$, replacing $\delta = \delta/n$, and using $p \geq 1$, implies the result.
\end{proof}
\end{lemma}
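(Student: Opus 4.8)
\textbf{Proof proposal for Lemma~\ref{lem:eps-G}.}
The plan is to recognize $H(s)$ as a complete $U$-statistic with a bounded, permutation-symmetric kernel, apply the Hoeffding bound of Proposition~\ref{prop:hoeff-U} for each fixed $s$, and then union-bound over the at most $n$ admissible values $s \in \{k,k+1,\dots,n\}$.

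First I would set up the $U$-statistic structure. For a subset $S \subseteq [n]$ with $|S|=s$, define $q(\{X_i\}_{i\in S}) = \frac{1}{\Delta}\max_{X_i \in H_k(x,S)} \|x - X_i\|_2$. Since $\calX$ is contained in a ball of diameter $\Delta_{\calX} \le \Delta$, this takes values in $[0,1]$, and it is clearly invariant under permutations of its $s$ arguments, so Proposition~\ref{prop:hoeff-U} applies. By construction $H(s)/\Delta = \binom{n}{s}^{-1}\sum_{S:|S|=s} q(\{X_i\}_{i\in S})$ is exactly this complete $U$-statistic. Its mean follows from exchangeability of $X_1,\dots,X_n$: for every $S$ of size $s$, $\E\!\left[\max_{X_i\in H_k(x,S)}\|x-X_i\|_2\right]$ equals the expected distance from $x$ to its $k$-th nearest neighbor among $s$ i.i.d.\ draws, which is precisely $\epsilon_k(s)$ as used in Corollary~\ref{cor:exp-kernel-shrink-kNN}. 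Hence $\E[H(s)] = \epsilon_k(s)$.

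Next I would invoke Proposition~\ref{prop:hoeff-U} applied to $H(s)/\Delta$: for any fixed $s$ and any $\delta'>0$, with probability at least $1-\delta'$,
\[
|H(s) - \epsilon_k(s)| \le \Delta\sqrt{\frac{1}{\lfloor n/s\rfloor}\log(2/\delta')}.
\]
Because $k \le s \le n$ gives $n/s \ge 1$, we have $\lfloor n/s\rfloor \ge \tfrac12(n/s) = n/(2s)$, so the right-hand side is at most $\Delta\sqrt{(2s/n)\log(2/\delta')}$. Finally, taking a union bound over the (at most $n$) values $s \in \{k,\dots,n\}$ with $\delta' = \delta/n$ yields: with probability at least $1-\delta$, simultaneously for all such $s$,
\[
|H(s) - \epsilon_k(s)| \le \Delta\sqrt{\frac{2s}{n}\log\frac{2n}{\delta}} \le \Delta\sqrt{\frac{2ps}{n}\log\frac{2np}{\delta}} = G_\delta(s),
\]
where the middle inequality uses $p \ge 1$. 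This is the claimed bound.

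The argument is essentially mechanical once the $U$-statistic viewpoint is adopted; the only points requiring care are checking that the kernel $q$ is bounded in $[0,1]$ and symmetric (so that Proposition~\ref{prop:hoeff-U} is applicable and $\E[H(s)]$ does not depend on the particular $S$), and the bound $\lfloor n/s\rfloor \ge n/(2s)$, which is the source of the factor $2$ appearing inside $G_\delta$. I do not anticipate any substantive obstacle.
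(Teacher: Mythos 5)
Your proposal is correct and follows essentially the same route as the paper: identify $H(s)$ as a complete $U$-statistic with mean $\epsilon_k(s)$ and kernel bounded by $\Delta$, apply Proposition~\ref{prop:hoeff-U} for each fixed $s$, use $\lfloor n/s\rfloor \ge n/(2s)$, and union bound over $s\in\{k,\dots,n\}$ with $\delta/n$, absorbing the slack via $p\ge 1$. The explicit normalization of the kernel by $\Delta$ to land in $[0,1]$ is a minor presentational refinement of what the paper does implicitly.
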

\begin{lemma}\label{lem:star-rel}
Consider the selection process mentioned in Section \ref{ssec:adapt} and let $s_1$ be the output of this process. Then, w.p. $1-\delta$ we have
\begin{equation*}
    \frac{s^*-1}{9} \leq s_1 \leq s^*\,.
\end{equation*}
\end{lemma}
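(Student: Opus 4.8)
The plan is to work on the event $\mathcal{E}$ of Lemma~\ref{lem:eps-G} — on which $|H(s)-\epsilon_k(s)|\le G_\delta(s)$ holds simultaneously for all $k\le s\le n$, and which has probability at least $1-\delta$ — and to argue deterministically from there. I would first record three monotonicity facts. (i) $\epsilon_k(s)$ is non-increasing in $s$ (already noted in Section~\ref{ssec:adapt}). (ii) $G_\delta(s)=\Delta\sqrt{C_{n,p,\delta}\,p\,s/n}$ is strictly increasing in $s$ and, crucially, scales exactly as $G_\delta(9s)=3\,G_\delta(s)$. (iii) $H(s)$ is non-increasing in $s$: via the subset coupling in which a uniformly random size-$(s{+}1)$ subset $S'$ of $[n]$ together with a uniformly random $j\in S'$ gives $S=S'\setminus\{j\}$ uniform among size-$s$ subsets, while $\max_{i\in H_k(x,S')}\|x-X_i\|_2\le \max_{i\in H_k(x,S)}\|x-X_i\|_2$ pointwise (adding points only brings the $k$-th neighbour of $x$ closer); taking expectations gives $H(s{+}1)\le H(s)$. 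Together with the fact that the selection rule iterates $s$ downward from $n$ and stops at the first $s$ with $H(s)>2G_\delta(s)$ — so, since $s\mapsto H(s)-2G_\delta(s)$ is strictly decreasing, $s_2=\max\{s:H(s)>2G_\delta(s)\}$ and hence $s_1=s_2+1=\min\{s:H(s)\le 2G_\delta(s)\}$ — this shows that both $\{s:H(s)\le 2G_\delta(s)\}$ and $\{s:\epsilon_k(s)\le G_\delta(s)\}$ are upward closed in $\{k,\dots,n\}$, with minima $s_1$ and $s^*$ respectively.

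For the bound $s_1\le s^*$ it then suffices to check $H(s^*)\le 2G_\delta(s^*)$, i.e.\ that $s^*$ lies in the set defining $s_1$. On $\mathcal{E}$ we have $H(s^*)\le \epsilon_k(s^*)+G_\delta(s^*)$, and by definition of $s^*$, $\epsilon_k(s^*)\le G_\delta(s^*)$; adding these gives $H(s^*)\le 2G_\delta(s^*)$.

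For the bound $(s^*-1)/9\le s_1$ — equivalently $s^*\le 9s_1+1$ — it suffices to show $\epsilon_k(9s_1)\le G_\delta(9s_1)$, since then $9s_1$ lies in the set defining $s^*$ and hence $s^*\le 9s_1\le 9s_1+1$. By definition $H(s_1)\le 2G_\delta(s_1)$, so on $\mathcal{E}$, $\epsilon_k(s_1)\le H(s_1)+G_\delta(s_1)\le 3\,G_\delta(s_1)$, and then, chaining the monotonicity of $\epsilon_k$ with the scaling of $G_\delta$,
\[
\epsilon_k(9s_1)\ \le\ \epsilon_k(s_1)\ \le\ 3\,G_\delta(s_1)\ =\ G_\delta(9s_1),
\]
which is exactly the desired inequality. (If $9s_1>n$ then $s^*\le n\le 9s_1+1$ trivially.)

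I do not expect a genuinely hard step here — this is a sandwich/monotonicity argument — but the points that need care are: establishing that $H$ is monotone non-increasing (the coupling above); reading the data-driven rule correctly, namely that $s_1$ is the \emph{smallest} $s$ with $H(s)\le 2G_\delta(s)$, and checking it is well defined (for $n$ large, $H(n)=\|x-X_{(k)}\|_2\to 0$ while $G_\delta(n)$ does not, so a crossing exists and $s_1\in[k,n]$); and using the exact scaling $G_\delta(9s)=3\,G_\delta(s)$, which is what pins down the numeric constant $9=3^2$ in the statement. Degenerate boundary situations (such as $s^*<k$, an empty crossing set, or $9s_1>n$) are handled trivially, as indicated.
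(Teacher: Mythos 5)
Your proposal is correct and follows essentially the same route as the paper: work on the event of Lemma~\ref{lem:eps-G}, combine $H(s^*)\le \epsilon_k(s^*)+G_\delta(s^*)\le 2G_\delta(s^*)$ for the upper bound with $\epsilon_k(s_1)\le H(s_1)+G_\delta(s_1)\le 3G_\delta(s_1)$ for the lower bound, and extract the constant $9$ from the $\sqrt{s}$ scaling of $G_\delta$ (the paper phrases this as $G_\delta(s^*-1)/G_\delta(s_1)\le 3$ via a short case analysis on $s_2$ rather than your $G_\delta(9s_1)=3G_\delta(s_1)$ rewriting). The only substantive addition on your side is the coupling proof that $H$ is non-increasing, which lets you identify $s_1$ as $\min\{s:H(s)\le 2G_\delta(s)\}$; the paper avoids needing this by arguing directly from the defining properties $H(s_2)>2G_\delta(s_2)$ and $H(s_1)\le 2G_\delta(s_1)$.
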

\begin{proof}
Note that using Lemma \ref{lem:eps-G}, w.p. $1-\delta$, for all values of $s$ we have $|H(s)-\epsilon_k(s)| \leq G_\delta(s)$. Now consider three different cases:
\begin{itemize}
    \item $s_1 \geq s_2 \geq s^*:$ Note that based on the choice of $s_1,s_2$, we have $H(s_2) > 2G_\delta(s_2)$. However, $H(s_2) \leq \epsilon_k(s_2) + G_\delta(s_2)$. Hence, $\epsilon_k(s_2) > G_\delta(s_2)$ which contradicts with the assumption that $s_2 \geq s^*$. Note that this is true since $\epsilon_k(s) - G_\delta(s)$ is non-positive for $s \geq s^*$.
    \item $s_1 = s^*, s_2 = s^*-1:$ Obviously $s_1 \leq s^*$.
    \item $s_2 \leq s_1\leq s^*-1:$ Note that we have
    \begin{equation*}
        \epsilon_k(s_1) - G_\delta(s_1) \leq H(s_1) \leq 2G_\delta(s_1)\,.
    \end{equation*}
    Hence, $G_\delta(s^*-1) < \epsilon_k(s^*-1) \leq \epsilon_k(s_1) \leq 3G_\delta(s_1)$. This means that $G_\delta(s^*-1)/G_\delta(s_1) \leq 3$ which implies $\sqrt{(s^*-1)/s_1} \leq 3$. Therefore, $s_1 \geq (s^*-1)/9$.
\end{itemize}
This completes the proof.
\end{proof}
Now we are ready to finalize the proof of Theorem \ref{thm:est-adapt}. Note that using the result of Lemma \ref{lem:star-rel}, w.p. $1-\delta$, we have
\begin{equation*}
    \frac{s^*-1}{9} \leq s_1 \leq s^* \,.
\end{equation*}
This basically means that if $s_* = 9s_1+1$, then $s_*$ belongs to $[s^*, 10s^*]$. Hence, we have $\epsilon_k(s_*) \leq \epsilon_k(s^*) \leq G_\delta(s^*)$ and $G_\delta(s_*) \leq G_\delta(10s^*) = \sqrt{10} G_\delta(s^*)$. Now using Theorem \ref{thm:mse_rate}, for $B \geq n/s_*$ w.p. $1-\delta$ we have
\begin{equation*}
        \|\htheta - \theta(x)\|_2 \leq \frac{2}{\lambda}\bp{L_m \epsilon(s_*) + O\bp{\psi_{\max}\sqrt{\frac{p\, s_*}{n} \left(\log\log(n /s_*) + \log(p/\delta)\right)}}}\,.
\end{equation*}
Note that $G_\delta(s_*) = \Delta \sqrt{\frac{2ps_*}{n}\log(2pn/\delta)}$. Therefore,
\begin{equation*}
    \sqrt{\frac{p\, s_*}{n} \left(\log\log(n /s_*) + \log(p/\delta)\right)} \leq G_\delta(s_*) \leq \sqrt{10} G_\delta(s^*)\,.
\end{equation*}
Replacing this in above equation together with a union bound implies that w.p. at least $1-2 \delta$ we have
\begin{equation*}
    \|\htheta - \theta(x)\|_2 = O(G_\delta(s^*))\,,
\end{equation*}
which finishes the first part of the proof. For the second part, note that according to Corollary \ref{cor:exp-kernel-shrink-kNN}, for the $k$-NN kernel $\epsilon(s) \leq C s^{-1/d}$, for a constant $C$. Note that at $s=s^*-1$ we have
\begin{equation*}
    \Delta \sqrt{\frac{2ps}{n} \log(2np/\delta)} = \epsilon_k(s) \leq Cs^{-1/d}\,,
\end{equation*}
for a constant $C$. The above implies that
\begin{equation*}
    s^* \leq 1 + \bp{\frac{C}{\Delta}}^{2d/(d+2)} \bp{\frac{n}{2p\log(2np/\delta)}}^{d/(d+2)} \leq 2 \bp{\frac{C}{\Delta}}^{2d/(d+2)} \bp{\frac{n}{2p\log(2np/\delta)}}^{d/(d+2)}\,.
\end{equation*}
Hence,
\begin{equation*}
    G_\delta(s^*) \leq \sqrt{2} \Delta^{2/(d+2)} C^{d/(d+2)} \bp{\frac{n}{2p\log(2np/\delta)}}^{-1/(d+2)}\,.
\end{equation*}

\begin{remark}\label{rem:H}
Note that although computation of $H(s)$ may look complex as it involves calculation of distance to $k$-nearest neighbor of $x$ on all $\binom{n}{s}$ subsets, there is a closed form representation for $H(s)$ according to its representation based on $L$-statistic. In fact, by sorting samples $(X_1,X_2,\ldots,X_n)$ based on their distance to $x$, i.e, $\|x-X_{(1)}\|_2 \leq \|x-X_{(2)}\|_2 \leq \ldots \leq \|x-X_{(n)}\|_2$, we have
\begin{equation*}
    H(s) = \binom{n}{s}^{-1} \sum_{i=k}^{n-s+k} \binom{i-1}{k-1}\binom{n-i}{s-k} \|x-X_{(i)}\|_2\,.
\end{equation*}
Therefore, after sorting training samples, we can compute values of $H(s)$ very efficient and fast.
\end{remark}

\subsection{Proof of Proposition \ref{thm:normal-adapt}}
Note that according to Lemma \ref{lem:star-rel}, w.p. $1-1/n$, the output of process, $s_1$ satisfies
\begin{equation*}
    \frac{s^*-1}{9} \leq s_1 \leq s^*\,,
\end{equation*}
where $s^*$ is the point for which we have $\epsilon_k(s^*) = G_{1/n}(s^*)$. This basically means that $s_* = 9s_1+1 \geq s^*$. Note that for the $k$-NN kernel we have $\eta_k(s) = \Theta(1/s)$. As $s_\zeta \geq n^{\zeta}$, this also implies that $\epsilon_k(s_\zeta,\eta_k(s_\zeta)) = O((\log(s_\zeta)/s_\zeta)^{1/d}) \rightarrow 0$. Also, according to the inequality $\zeta < \frac{\log(n) - \log(s_*)-\log\log^2(n)}{\log(n)}$ we have $1-\zeta > (\log(s_*)+\log\log^2(n))/\log(n)$ and therefore
\begin{equation*}
    n^{1-\zeta} \geq s_\zeta \log^2(n) \rightarrow \frac{s_\zeta}{n} \leq \frac{1}{\log^2(n)}\,,
\end{equation*}
and hence $n \eta_k(s_\zeta) \rightarrow 0$. Finally, note that $\sigma_{n,j}(x) = \Theta(\sqrt{s/n})$ and according to Theorem \ref{thm:normality} it suffices that
\begin{equation*}
    \max \bp{\epsilon_k(s_\zeta) \bp{\frac{s_\zeta}{n}}^{-1/2}, \epsilon_k(s_\zeta)^{1/4} \bp{\log\log(n/s_\zeta)}^{1/2}, \bp{\frac{s_\zeta}{n}}^{1/8} \bp{\log\log(n/s_\zeta)}^{5/8}} = o(1)\,.
\end{equation*}
Note that for any $\zeta>0, s_\zeta \geq s^*$ and therefore $\epsilon_k(s_\zeta) \leq \epsilon_k(s^*) = G_{1/n}(s^*)$. For the first term,
\begin{align*}
     \epsilon_k(s_\zeta) \bp{\frac{s_\zeta}{n}}^{-1/2} 
     &\leq G_{1/n}(s^*) \bp{\frac{s_\zeta}{n}}^{-1/2} \\
     &= \Delta \sqrt{\frac{2p \,s^*}{n} \log(2n^2/p)} \, \,\bp{\frac{s_\zeta}{n}}^{-1/2} \\
     &=O \bp{\sqrt{\frac{s^*}{s_\zeta}} \log(n)}.
\end{align*}
Now note that $s_\zeta = s_* n^\zeta \geq s^* n^{\zeta}$ and hence $\sqrt{s^*/s_\zeta} \log(n) = O(n^{-\zeta/2} \log(n)) \rightarrow 0$. For the second term, note that again $s_\zeta \geq s^*$ and therefore $\epsilon_k(s_\zeta) \leq \epsilon_k(s^*) = G_{1/n}(s^*) \leq G_{1/n}(s_\zeta)$. Now note that since $s_\zeta/n \leq 1/\log^2(n)$ hence
\begin{equation*}
    \epsilon_k(s_\zeta)^{1/4} \log \log(n/s_\zeta)^{1/2} \leq G_{1/n}(s_\zeta) \log\log(n) = O \bp{ \bp{\frac{\log(n)}{\log^2(n)}}^{1/8} \log\log(n)} \rightarrow 0\,.
\end{equation*}
Finally, for the last term we have $s_\zeta/n \leq 1/\log^2(n)$ and hence
\begin{equation*}
    \bp{\frac{s_\zeta}{n}}^{1/8} \bp{\log\log(n/s_\zeta)}^{5/8} \leq \bp{\frac{1}{\log(n)}}^{1/4} \log\log(n) \rightarrow 0.
\end{equation*}
This basically means w.p. $1-1/n$, $s_\zeta$ belongs to the interval for which the asymptotic normality result in Theorem \ref{thm:normality} holds. As $n \rightarrow \infty$, the conclusion follows.

\section{Stochastic Equicontinuity of $U$-statistics via Bracketing}\label{app:stoch-eq}

We define here some standard terminology on bracketing numbers in empirical process theory. Consider an arbitrary function space $\calF$ of functions from a data space $\calZ$ to $\reals$, equipped with some norm $\|\cdot\|$. A \emph{bracket} $[a, b]\subseteq \calF$, where $a,b:\calZ\rightarrow \reals$ consists of all functions $f\in \calF$, such that $a\leq f \leq b$. An \emph{$\epsilon$-bracket} is a bracket $[a, b]$ such that $\|a - b\|\leq \epsilon$. The \emph{bracketing number} $N_{[]}(\epsilon, \calF, \|\cdot\|)$ is the minimum number of $\epsilon$-brackets needed to cover $\calF$. The functions $[a,b]$ used in the definition of the brackets need not belong to $\calF$ but satisfy the same norm constraints as functions in $\calF$. Finally, for an arbitrary measure $P$ on $\calZ$, let 
\begin{align}
    \|f\|_{P,2} =~& \sqrt{\E_{Z\sim P}[f(Z)^2]} & \|f\|_{P,\infty} = \sup_{z~\in \text{ support}(P)} |f(z)|
\end{align}

\begin{lemma}[Stochastic Equicontinuity for $U$-statistics via Bracketing]\label{lem:stoch_eq_general}
Consider a function space ${\cal F}$ of symmetric functions from some data space $\calZ^s$ to $\reals$ and consider the $U$-statistic of order $s$, with kernel $f$ over $n$ samples:
\begin{equation}
    \Psi_s(f, z_{1:n}) = \binom{n}{s}^{-1} \sum_{1\leq i_1\leq \ldots \leq i_s \leq n} f(z_{i_1},\ldots, z_{i_s})
\end{equation}
Suppose $\sup_{f\in \calF} \|f\|_{P,2} \leq \eta$, $\sup_{f \in \calF} \|f\|_{P,\infty}\leq G$ and let $\kappa=n/s$. Then for $\kappa \geq \frac{G^2}{\log N_{[]}(1/2,\calF, \|\cdot\|_{P,2})}$, w.p. $1-\delta$:
\begin{multline*}
    \sup_{f \in \calF} |\Psi_s(f, Z_{1:n}) - \E[f(Z_{1:s})]| \\
    = O\left( \inf_{\rho>0} \frac{1}{\sqrt{\kappa}}\int_{\rho}^{2\eta}\sqrt{\log(N_{[]}(\epsilon, \calF, \|\cdot\|_{P,2})} + \eta \sqrt{\frac{\log(1/\delta) + \log\log(\eta/\rho)}{\kappa}} + \rho\right)
\end{multline*}
\end{lemma}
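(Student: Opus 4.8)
The plan is the standard two-step recipe for uniform laws over function classes, adapted from the empirical-process setting to $U$-processes: (i) a sharp \emph{pointwise} tail bound for $\Psi_s(f,Z_{1:n})$ around $\E[f(Z_{1:s})]$ for each fixed kernel $f\in\calF$, at the rate of an average of $\kappa=n/s$ i.i.d.\ blocks and with a Bernstein-type dependence on the variance proxy $\|f\|_{P,2}^2$ rather than on $G^2$; and (ii) a bracketing/chaining argument that glues this pointwise control into a supremum bound governed by $N_{[]}(\epsilon,\calF,\|\cdot\|_{P,2})$. The reduction that makes step (i) possible is Hoeffding's averaging representation of a complete $U$-statistic.

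\textbf{Step 1: pointwise Bernstein bound for $U$-statistics.} Fix $f\in\calF$. Because $f$ is symmetric, the complete $U$-statistic equals an average over permutations of averages of non-overlapping blocks,
\begin{equation*}
\Psi_s(f, z_{1:n}) = \frac{1}{n!}\sum_{\pi\in S_n} W_f(z_{\pi(1)},\ldots,z_{\pi(n)}), \qquad W_f(z_{1:n}) = \frac{1}{\lfloor n/s\rfloor}\sum_{r=1}^{\lfloor n/s\rfloor} f\big(z_{(r-1)s+1},\ldots,z_{rs}\big).
\end{equation*}
Taking a moment generating function, applying Jensen's inequality in the permutation average, and then using exchangeability of the $Z_i$, one gets $\E\big[e^{\lambda(\Psi_s(f,Z_{1:n})-\E f)}\big]\le \E\big[e^{\lambda(W_f(Z_{1:n})-\E f)}\big]$; since $W_f(Z_{1:n})-\E f$ is an average of $\lfloor n/s\rfloor\ge\kappa/2$ i.i.d.\ mean-zero terms bounded by $2G$ with variance at most $\|f\|_{P,2}^2\le\eta^2$, the scalar Bernstein estimate (Proposition~\ref{prop:bernstein}) yields, for a universal constant $c>0$,
\begin{equation*}
\Pr\big[|\Psi_s(f,Z_{1:n}) - \E f|\ge t\big] \le 2\exp\!\left(-\frac{c\,\kappa\, t^2}{\|f\|_{P,2}^2 + G\,t}\right).
\end{equation*}
This is exactly the upgrade of Hoeffding's $U$-statistic inequality (Proposition~\ref{prop:hoeff-U}, \cite{hoeffding1994probability}) from a sub-Gaussian to a Bernstein tail, and it applies verbatim to any kernel of the form $|g-f|$ or to a bracket width $b-a$, with the variance proxy being the corresponding squared $L_2$ width.

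\textbf{Step 2: bracketing and chaining.} For $\ell=0,1,\ldots,L$ put $\epsilon_\ell=2^{-\ell}\cdot 2\eta$, with $L$ the smallest index such that $\epsilon_L\le\rho$, and fix minimal sets of $\epsilon_\ell$-brackets of cardinality $N_\ell:=N_{[]}(\epsilon_\ell,\calF,\|\cdot\|_{P,2})$ covering $\calF$. Following the usual bracketing-chaining argument (as in van der Vaart--Wellner or Pollard), each $f$ is assigned a chain of brackets across scales; telescoping the pointwise differences and using the monotonicity $|\Psi_s(g)-\Psi_s(f)|\le\Psi_s(|g-f|)$ reduces $\Psi_s(f)-\E f$ to a sum over $\ell$ of $\Psi_s$ applied to finitely many link kernels (the differences of consecutive-scale bracketing functions and the scale-$\ell$ bracket widths, at most $O(N_\ell^2)$ of them), each of those means being at most $\epsilon_\ell$, plus a final residual that is bounded \emph{deterministically} by $\epsilon_L\le\rho$. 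Applying Step~1 to every link kernel and union bounding over scale $\ell$ with failure probability budget $\delta\,2^{-\ell}$, the link at scale $\ell$ contributes a deviation $O\!\big(\epsilon_\ell\sqrt{(\log N_\ell+\log(2^\ell/\delta))/\kappa}\big)$; summing $\sum_\ell \epsilon_\ell\sqrt{\log N_\ell/\kappa}$ is, up to a constant, the entropy integral $\kappa^{-1/2}\int_\rho^{2\eta}\sqrt{\log N_{[]}(\epsilon,\calF,\|\cdot\|_{P,2})}\,d\epsilon$, while $\sum_\ell \epsilon_\ell\sqrt{\log(2^\ell/\delta)/\kappa}$ is $O\!\big(\eta\sqrt{(\log(1/\delta)+\log\log(\eta/\rho))/\kappa}\big)$ since there are only $L=O(\log(\eta/\rho))$ scales. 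Adding the deterministic $O(\rho)$ residual and taking the infimum over $\rho>0$ produces the stated bound; the hypothesis $\kappa\ge G^2/\log N_{[]}(1/2,\calF,\|\cdot\|_{P,2})$ enters precisely here, guaranteeing that the linear-in-$t$ (sub-exponential) part of the Bernstein tail never dominates the squared bracket width $\epsilon_\ell^2$ at the scales that matter, so that each link behaves like a sub-Gaussian increment, and the truncation of the integral at $2\eta$ reflects that one never needs scales coarser than the $L_2$ radius of $\calF$.

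\textbf{Main obstacle.} The crux is obtaining the variance-sensitive pointwise tail of Step~1: treating $\Psi_s$ naively as a function of the $n$ coordinates $Z_1,\ldots,Z_n$ and applying bounded differences loses a spurious factor $\sqrt{s}$, and only the Hoeffding averaging representation exposes $\Psi_s(f)$ as effectively an average of $\kappa$ independent blocks, which is what yields the $\eta\sqrt{\log(1/\delta)/\kappa}$-type fluctuation. A secondary subtlety is that this representation furnishes only an \emph{upper} bound on the moment generating function, so one cannot invoke a Talagrand-type concentration inequality for the supremum $\sup_{f\in\calF}|\Psi_s(f)-\E f|$ directly; the chaining must therefore be carried out by hand through explicit union bounds over brackets, with the $\kappa$-condition and the $2\eta$ cutoff doing the bookkeeping that keeps every link of the chain in the Gaussian regime.
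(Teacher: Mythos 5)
Your overall architecture matches the paper's: a Bernstein-type tail for the $U$-statistic at effective sample size $\kappa=n/s$ (the paper imports this from \cite{peel2010empirical}; your derivation via Hoeffding's block-averaging representation and Jensen on the moment generating function is a legitimate self-contained substitute), followed by dyadic bracketing chaining with union bounds, an entropy integral, a $\log\log(\eta/\rho)$ term from the $O(\log(\eta/\rho))$ scales, and an additive $\rho$ from stopping the chain. Step 1 is sound.

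However, Step 2 has a genuine gap: you chain by plain union bounds over brackets and assert that (i) the final residual is bounded \emph{deterministically} by $\epsilon_L\leq\rho$ and (ii) the hypothesis $\kappa\geq G^2/\log N_{[]}(1/2,\calF,\|\cdot\|_{P,2})$ keeps every link in the sub-Gaussian regime. Neither claim holds. The residual $|\Psi_s(f)-\Psi_s(\pi_L f)|\leq \Psi_s(\Delta_L f)$ is a random average of a bracket-width function whose $L_2$ norm is at most $\epsilon_L$ but whose sup norm can be as large as $2G$; only its \emph{expectation} is $\leq\epsilon_L$, so controlling it requires a further probabilistic argument. More importantly, every link of the chain has sup norm up to $2G$, so Bernstein at scale $\ell$ contributes a linear term of order $G\log(N_\ell 2^\ell/\delta)/\kappa$; summed over scales this gives roughly $G\log N_{[]}(\rho,\calF,\|\cdot\|_{P,2})/\kappa$, which is not dominated by the entropy integral (e.g.\ with $\log N_{[]}(\epsilon)\asymp 1/\epsilon$, as in the paper's application, it is of order $G/(\rho\kappa)$ versus $\sqrt{\eta/\kappa}$, and these are incomparable for fixed $G$). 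The stated hypothesis on $\kappa$ only controls the \emph{coarsest} scale; it says nothing about fine scales where $\epsilon_\ell\to\rho$ while $G$ stays fixed. This is precisely why the paper uses the van der Vaart--Wellner chaining-with-truncation device: nested partitions, the events $A_q f=\{\Delta_0 f\leq\alpha_0,\ldots,\Delta_q f\leq\alpha_q\}$ and $B_q f$, adaptive truncation levels $\alpha_q=\epsilon_q\sqrt{\kappa}/\sqrt{\log(2N_{q+1}M/\delta)}$ that replace $G$ in the Bernstein linear term (making it match the sub-Gaussian term), a Markov-type bound $\calP\,\Delta_q f\,B_q f\leq\epsilon_q^2/\alpha_q$ for the large-width events, and the condition $\kappa\geq G^2/\log N_{[]}(1/2,\calF,\|\cdot\|_{P,2})$ used only to ensure $\alpha_0>G$ so the $q=0$ edge term vanishes. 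Without this truncation bookkeeping your argument yields a strictly weaker bound carrying an extra $G$-dependent term, so the proposal as written does not establish the lemma.
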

\begin{proof}
Let $\kappa=n/s$. Moreover, wlog we will assume that $\calF$ contains the zero function, as we can always augment $\calF$ with the zero function without changing the order of its bracketing number. For $q=1,\ldots,M$, let $\calF_q=\cup_{i=1}^{N_q}\calF_{qi}$ be a partition of $\calF$ into brackets of diameter at most $\epsilon_q = 2\eta/2^{q}$, with $\calF_0$ containing a single partition of all the functions. Moreover, we assume that $\calF_q$ are nested partitions. We can achieve the latter as follows: i) consider a minimal bracketing cover of $\calF$ of diameter $\epsilon_q$, ii) assign each $f\in \calF$ to one of the brackets that it is contained arbitrarily and define the partition $\bar{\calF}_q$ of size $\bar{N}_q= N_{[]}(\epsilon_q, \calF, \|\cdot\|_{P,2})$, by taking $\calF_{qi}$ to be the functions assigned to bracket $i$, iii) let $\calF_q$ be the common refinement of all partitions $\bar{\calF}_0, \ldots, \bar{\calF}_q$. The latter will have size at most $N_q \leq \prod_{q=0}^M \bar{N}_q$. Moreover, assign a representative function $f_{qi}$ to each partition $\calF_{qi}$, with the representative for the single partition at level $q=0$ is the zero function. 

\paragraph{Chaining definitions.} Consider the following random variables, where the dependence on the random input $Z$ is hidden:
\begin{align*}
    \pi_q f =~& f_{qi}, \quad \text{if $f\in \calF_{qi}$}\\
    \Delta_q f =~& \sup_{g, h\in \calF_{qi}} |g - h|, \quad \text{if $f\in \calF_{qi}$}\\
    B_q f =~& \{\Delta_0 f\leq \alpha_0, \ldots, \Delta_{q-1} f\leq \alpha_{q-1}, \Delta_q f>\alpha_q\}\\
    A_q f =~& \{\Delta_0 f\leq \alpha_0, \ldots, \Delta_q f \leq \alpha_q\}\,,
\end{align*}
for some sequence of numbers $\alpha_0, \ldots, \alpha_M$, to be chosen later. By noting that $A_{q-1} f = A_{q} f + B_{q} f$ and continuously expanding terms by adding and subtracting finer approximations to $f$, we can write the telescoping sum:
\begin{align*}
   f - \pi_0 f =~& (f - \pi_0 f)\, B_0 f + (f-\pi_0 f) A_0 f\\
    =~& (f - \pi_0 f)\, B_0 f + (f - \pi_1 f)A_0 f + (\pi_1 f - \pi_0 f) A_0 f\\
    =~& (f - \pi_0 f)\, B_0 f + (f - \pi_1 f)B_1 f + (f - \pi_1 f) A_1f +  (\pi_1 f - \pi_0 f) A_0 f\\
    \ldots\\
    =~& \sum_{q=0}^M (f-\pi_q f) B_q f + \sum_{q=1}^{M} (\pi_{q} f - \pi_{q-1} f) A_{q-1} f + (f - \pi_M f) A_M f\,.
\end{align*}

For simplicity let $\calP_{s,n} f = \Psi(f,Z_{1:n})$, $\calP f = E[f(Z_{1:s})]$ and $\calG_{s,n}$ denote the $U$-process:
\begin{equation}
    \calG_{s,n} f = \calP_{s,n} f - \calP f\,.
\end{equation}
Our goal is to bound $\|\calP_{s,n} f\|_{\calF}=\sup_{f\in \calF} |\calP_{s,n} f|$, with high probability. 
Observe that since $\calF_0$ contains only the zero function, then $\calG_{s,n} f_0 = 0$. Moreover, the operator $\calG_{s,n}$ is linear. Thus:
\begin{align*}
    \calG_{s,n} f = \calG_{s,n} (f - \pi_0 f) = \sum_{q=0}^M \calG_{s,n}(f-\pi_q f) B_q f + \sum_{q=1}^{M} \calG_{s,n}(\pi_{q} f - \pi_{q-1} f) A_{q-1} f + \calG_{s,n} (f - \pi_M f) A_M f\,.
\end{align*}
Moreover, by triangle inequality:
\begin{align*}
    \|\calG_{s,n} f\|_{\calF} \leq \sum_{q=0}^M \|\calG_{s,n}(f-\pi_q f) B_q f\|_{\calF} + \sum_{q=1}^{M} \|\calG_{s,n} (\pi_{q} f - \pi_{q-1} f) A_{q-1} f\|_{\calF} + \|\calG_{s,n} (f - \pi_M f) A_M f\|_{\calF}\,.
\end{align*}
We will bound each term in each summand separately. 

\paragraph{Edge cases.} The final term we will simply bound it by $2\alpha_M$, since $|(f - \pi_M f) A_M f|\leq \alpha_M$, almost surely. Moreover, the summand in the first term for $q=0$, we bound as follows. Observe that $B_0 f = 1\{\sup_{f} |f| > \alpha_0\}$. But we know that $\sup_{f} |f| \leq G$, hence: $B_0 f \leq 1\{ G > \alpha_0 \}$.
\begin{align*}
    \calG_{s,n}(f-\pi_0 f) B_0 f = \calG_{s,n} f B_0 f \leq |\calP_{s,n} f B_0 f| + |\calP f B_0 f| \leq 2 G\, 1\{G > \alpha_0\}\,.
\end{align*}
Hence, if we assume that $\alpha_0$ is large enough such that $\alpha_0 > G$, then the latter term is zero. By the setting of $\alpha_0$ that we will describe at the end, the latter would be satisfied if $\kappa \geq \frac{G^2}{\log N_{[]}(1/2,\calF, \|\cdot\|_{P,2})}$.

\paragraph{$B_q$ terms.} For the terms in the first summand we have by triangle inequality:
\begin{align*}
    |\calG_{s,n}(f-\pi_q f) B_q f| \leq~&  \calP_{s,n} |f-\pi_q f| B_q f + \calP |f-\pi_q f| B_q f\\
    \leq~& \calP_{s,n} \Delta_q f B_q f + \calP \Delta_q f B_q f\\
    \leq~& \calG_{s,n} \Delta_q f B_q f + 2\calP \Delta_q f B_q f \,.
\end{align*}
Moreover, observe that:
\begin{align*}
    \calP \Delta_q f B_q f \leq \calP \Delta_q f 1\{\Delta_q f > \alpha_q\} \leq \frac{1}{\alpha_q} \calP (\Delta_q f)^2 1\{\Delta_q f > \alpha_q\} \leq \frac{1}{\alpha_q} \calP (\Delta_q f)^2 = \frac{1}{\alpha_q} \|\Delta_q f\|_{P, 2}^2 \leq \frac{\epsilon_q^2}{\alpha_q}\,,
\end{align*}
where we used the fact that the partitions in $\calF_q$, have diameter at most $\epsilon_q$, with respect to the $\|\cdot\|_{P,2}$ norm.
Now observe that because the partitions $\calF_q$ are nested, $\Delta_q f \leq \Delta_{q-1} f$. Therefore, $\Delta_q f B_q f\leq \Delta_{q-1}f B_q f \leq \alpha_{q-1}$, almost surely. Moreover, $\|\Delta_q f B_q f\|_{P,2}\leq \|\Delta_q f\|_{P,2}\leq \epsilon_q$.
By Bernstein's inequality for $U$ statistics (see e.g. \cite{peel2010empirical}) for any fixed $f$, w.p. $1-\delta$:
\begin{align*}
|\calG_{s,n} \Delta_q f B_q f| \leq~& \epsilon_q \sqrt{\frac{2\log(2/\delta)}{\kappa}} + \alpha_{q-1}\frac{2\log(2/\delta)}{3\kappa}\,.
\end{align*}
Taking a union bound over the $N_q$ members of the partition, and combining with the bound on $\calP \Delta_q f B_q f$, we have w.p. $1-\delta$:
\begin{equation}
    \|\calG_{s,n}(f-\pi_q f) B_q f\|_{\calF} \leq \epsilon_q \sqrt{\frac{2\log(2N_q /\delta)}{\kappa}} + \alpha_{q-1}\frac{2\log(2N_q/\delta)}{3\kappa} + \frac{2 \epsilon_q^2}{\alpha_q}\,.
\end{equation}

\paragraph{$A_q$ terms.} For the terms in the second summand, we have that since the partitions are nested, $|(\pi_{q} f - \pi_{q-1} f) A_{q-1} f|\leq \Delta_{q-1} f A_{q-1} f\leq \alpha_{q-1}$. Moreover, $\|(\pi_{q} f - \pi_{q-1} f) A_{q-1} f\|_{P,2} \leq \|\Delta_{q-1} f\|_{P,2} \leq \epsilon_{q-1}\leq 2\epsilon_q$. Thus, by similar application of Bernstein's inequality for $U$-statistics, we have for a fixed $f$, w.p. $1-\delta$:
\begin{align*}
    |\calG_{s,n}(\pi_{q} f - \pi_{q-1} f) A_{q-1} f| \leq \epsilon_{q} \sqrt{\frac{8\log(2/\delta)}{\kappa}} + \alpha_{q-1}\frac{2\log(2/\delta)}{3\kappa}\,.
\end{align*}
As $f$ ranges there are at most $N_{q-1}\, N_{q}\leq N_{q}^2$ different functions $(\pi_{q} f - \pi_{q-1} f) A_{q-1} f$. Thus taking a union bound, we have that w.p. $1-\delta$:
\begin{align*}
    \|\calG_{s,n}(\pi_{q} f - \pi_{q-1} f) A_{q-1} f\|_{\calF}\leq \epsilon_q \sqrt{\frac{16\log(2N_{q}/\delta)}{\kappa}} + \alpha_{q-1}\frac{4\log(2N_{q}/\delta)}{3\kappa}\,.
\end{align*}

Taking also a union bound over the $2M$ summands and combining all the above inequalities, we have that w.p. $1-\delta$:
\begin{align*}
    \|\calG_{s,n} f\|_{\calF} \leq \sum_{q=1}^M \epsilon_q \sqrt{\frac{32\log(2N_{q} M /\delta)}{\kappa}} +  \alpha_{q-1}\frac{6\log(2N_{q} M /\delta)}{3\kappa} + \frac{2 \epsilon_q^2}{\alpha_q}\,.
\end{align*}
Choosing $\alpha_q = \epsilon_q \sqrt{\kappa} / \sqrt{\log(2N_{q+1} M /\delta)}$ for $q<M$ and $\alpha_M=\epsilon_M$, we have for some constant $C$:
\begin{align*}
    \|\calG_{s,n} f\|_{\calF} \leq~& C \sum_{q=1}^M \epsilon_q \sqrt{\frac{\log(2N_{q} M /\delta)}{\kappa}} + 3\epsilon_M\\
    \leq~& C \sum_{q=1}^M \epsilon_q \sqrt{\frac{\log(N_{q})}{\kappa}} + C\sum_{q=1}^M \epsilon_q \sqrt{\frac{\log(2M/\delta)}{\kappa}} + 3\epsilon_M\\
    \leq~& C \sum_{q=1}^M \epsilon_q \sqrt{\frac{\log(N_{q})}{\kappa}} + 2C\eta \sqrt{\frac{\log(2M/\delta)}{\kappa}} + 3\epsilon_M\,.
\end{align*}
Moreover, since $\log(N_q) \leq \sum_{t=0}^q \log(N_{[]}(\epsilon_q, \calF, \|\cdot\|_{P,2})$, we have:
\begin{align*}
    \sum_{q=1}^M \epsilon_q \sqrt{\log(N_{q})} \leq~& \sum_{q=1}^M \epsilon_q \sum_{t=0}^q \sqrt{\log(N_{[]}(\epsilon_t, \calF, \|\cdot\|_{P,2})} =  \sum_{t=0}^M \sqrt{\log(N_{[]}(\epsilon_t, \calF, \|\cdot\|_{P,2})} \sum_{q=t}^M \epsilon_q\\
    \leq~& 2\sum_{t=0}^M \epsilon_t \sqrt{\log(N_{[]}(\epsilon_t, \calF, \|\cdot\|_{P,2})}\\
    \leq~& 4\sum_{t=0}^M (\epsilon_t-\epsilon_{t+1}) \sqrt{\log(N_{[]}(\epsilon_t, \calF, \|\cdot\|_{P,2})}\\
    \leq~& 4 \int_{\epsilon_M}^{\epsilon_0}\sqrt{\log(N_{[]}(\epsilon, \calF, \|\cdot\|_{P,2})}\,.
\end{align*}
Combining all the above yields the result.
\end{proof}

\begin{corollary}\label{cor:bracketing}
Consider a function space ${\cal F}$ of symmetric functions. Suppose that $\sup_{f\in \calF} \|f\|_{P,2} \leq \eta$ and $\log(N_{[]}(\epsilon, \calF, \|\cdot\|_{P,2})=O(1/\epsilon)$. Then for $\kappa \geq O(G^2)$, w.p. $1-\delta$:
\begin{equation}
    \sup_{f \in \calF} |\Psi_s(f, Z_{1:n}) - \E[f(Z)]| = O\left( \sqrt{\frac{\eta}{\kappa}} + \eta \sqrt{\frac{\log(1/\delta) + \log\log(\kappa/\eta)}{\kappa}}\right)\,.
\end{equation}
\end{corollary}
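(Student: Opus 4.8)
The plan is to derive Corollary~\ref{cor:bracketing} directly from Lemma~\ref{lem:stoch_eq_general} by specializing to the entropy rate $\log N_{[]}(\epsilon,\calF,\|\cdot\|_{P,2}) = O(1/\epsilon)$ and performing the resulting elementary integral. First I would check the side condition of Lemma~\ref{lem:stoch_eq_general}, namely $\kappa \geq G^2/\log N_{[]}(1/2,\calF,\|\cdot\|_{P,2})$: under the assumed rate, $\log N_{[]}(1/2,\calF,\|\cdot\|_{P,2})$ is a positive constant (it is what makes $\alpha_0>G$ in the proof of the lemma), so this condition is subsumed by the hypothesis $\kappa \geq O(G^2)$ of the corollary, and Lemma~\ref{lem:stoch_eq_general} applies.

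Applying it, w.p. $1-\delta$,
\begin{equation*}
\sup_{f\in\calF} |\Psi_s(f,Z_{1:n}) - \E[f(Z_{1:s})]| = O\!\left( \inf_{\rho>0}\left[ \frac{1}{\sqrt{\kappa}}\int_{\rho}^{2\eta}\sqrt{\log N_{[]}(\epsilon,\calF,\|\cdot\|_{P,2})}\,d\epsilon + \eta\sqrt{\frac{\log(1/\delta)+\log\log(\eta/\rho)}{\kappa}} + \rho \right]\right).
\end{equation*}
Next I would bound the entropy integral: substituting $\log N_{[]}(\epsilon,\calF,\|\cdot\|_{P,2}) = O(1/\epsilon)$,
\begin{equation*}
\frac{1}{\sqrt{\kappa}}\int_{\rho}^{2\eta}\sqrt{\log N_{[]}(\epsilon,\calF,\|\cdot\|_{P,2})}\,d\epsilon = O\!\left(\frac{1}{\sqrt{\kappa}}\int_{\rho}^{2\eta}\epsilon^{-1/2}\,d\epsilon\right) = O\!\left(\frac{\sqrt{2\eta}-\sqrt{\rho}}{\sqrt{\kappa}}\right) = O\!\left(\sqrt{\frac{\eta}{\kappa}}\right),
\end{equation*}
uniformly over $\rho\in(0,2\eta)$; so this piece is already of the claimed order and imposes no constraint on $\rho$.

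It remains to pick $\rho$, whose only remaining roles are the additive $+\rho$ term and the $\log\log(\eta/\rho)$ factor. Taking $\rho = \eta^2/\kappa$ (which lies in $(0,2\eta)$ for $\kappa$ in the regime of interest) gives $\eta/\rho=\kappa/\eta$, hence $\log\log(\eta/\rho)=\log\log(\kappa/\eta)$, matching the statement; and the standalone term $\rho=\eta^2/\kappa$ is negligible compared to $\sqrt{\eta/\kappa}$, so it is absorbed. (Any $\rho$ that is polynomially related to $\eta/\kappa$ and $o(\sqrt{\eta/\kappa})$ would serve equally well, since the double logarithm is insensitive to polynomial factors.) Combining with the integral bound yields
\begin{equation*}
\sup_{f\in\calF} |\Psi_s(f,Z_{1:n}) - \E[f(Z)]| = O\!\left( \sqrt{\frac{\eta}{\kappa}} + \eta\sqrt{\frac{\log(1/\delta)+\log\log(\kappa/\eta)}{\kappa}}\right),
\end{equation*}
which is the claim. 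There is no substantive obstacle here beyond bookkeeping; the single point that needs a little care is choosing $\rho$ to be simultaneously negligible next to $\sqrt{\eta/\kappa}$ and polynomially controlled so that $\log\log(\eta/\rho)=O(\log\log(\kappa/\eta))$, together with the trivial observation that $\log N_{[]}(1/2,\cdot)=O(1)$ collapses the side condition of Lemma~\ref{lem:stoch_eq_general} to $\kappa\geq O(G^2)$.
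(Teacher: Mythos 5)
Your proposal is correct and follows essentially the same route as the paper: apply Lemma~\ref{lem:stoch_eq_general}, evaluate the entropy integral under $\log N_{[]}(\epsilon,\calF,\|\cdot\|_{P,2})=O(1/\epsilon)$ to get the $\sqrt{\eta/\kappa}$ term, and then optimize over $\rho$. The only difference is the choice $\rho=\eta^2/\kappa$ versus the paper's $\rho=\sqrt{\eta/\kappa}$, which is immaterial since both make the additive $\rho$ term negligible and affect only the $\log\log$ factor.
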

\begin{proof}
Applying Lemma~\ref{lem:stoch_eq_general}, we get for every $\rho>0$, the desired quantity is upper bounded by:
\begin{align*}
    O\left(\frac{1}{\sqrt{\kappa}}\int_{\rho}^{\eta}\frac{1}{\sqrt{\epsilon}} + \eta \sqrt{\frac{\log(1/\delta) + \log\log(\eta/\rho)}{\kappa}} + \rho\right)\\
    ~~~~~= O\left(\frac{\sqrt{\eta} - \sqrt{\rho}}{\sqrt{\kappa}} + \eta \sqrt{\frac{\log(1/\delta) + \log\log(\eta/\rho)}{\kappa}} + \rho\right)\,.
\end{align*}
Choosing $\rho=\sqrt{\eta}/\sqrt{\kappa}$, yields the desired bound.
\end{proof}

\section{Proof of Lemma~\ref{lem:normality_of_U}}\label{app:hajek}

We will argue asymptotic normality of the $U$-statistic defined as:
\begin{equation*}
    \Psi_{0,\beta}(x; \theta(x)) = \binom{n}{s}^{-1} \sum_{b\subset [n]: |b|=s} \E_{\omega_b}\bb{\sum_{i\in S_b}\alpha_{S_b,\omega_b}(X_i) \psi_\beta(Z_i; \theta(x))}
\end{equation*}
under the assumption that for any subset of indices $S_b$ of size $s$: $\E\bb{\E[\alpha_{S_b,\omega_b}(X_1) | X_1]^2}=\eta(s)$ and that the kernel satisfies shrinkage in probability with rate $\epsilon(s, \delta)$ such that $\epsilon(s, \eta(s)^2)\rightarrow 0$ and $n \eta(s)\rightarrow \infty$. 
For simplicity of notation we let:
\begin{equation}
    Y_i = \psi_\beta(Z_i; \theta(x))
\end{equation}
and we then denote:
\begin{equation}
    \Phi(Z_1, \ldots, Z_s) = \E_{\omega}\bb{\sum_{i=1}^s K_{\omega}(x, X_i, \{Z_j\}_{j=1}^s)\, Y_i}\,.
\end{equation}
Observe that we can then re-write our $U$-statistic as:
\begin{equation*}
    \Psi_{0,\beta}(x; \theta(x)) = \binom{n}{s}^{-1}\sum_{1\leq i_1\leq \ldots \leq i_s\leq n} \Phi(Z_{i_1}, \ldots, Z_{i_s})\,.
\end{equation*}
Moreover, observe that by the definition of $Y_i$, $\E[Y_i \mid X_i]=0$ and also
\begin{align*}
    |Y_i| \leq \|\beta\|_2 \|M_0^{-1} (\psi(Z_i; \theta(x))-m(X_i;\theta(x))\|_2 2\leq \frac{R}{\lambda} \|\psi(Z_i; \theta(x))\|_2 \leq 2\frac{R\sqrt{p}}{\lambda} \psi_{\max} \triangleq y_{\max}\,.
\end{align*}
Invoking Lemma~\ref{lem:hajek}, it suffices to show that: $\Var\bb{\Phi_1(Z_1)}=\Omega(\eta(s))$, where $\Phi_1(z_1) = \E[\Phi(z_1, Z_2, \ldots, Z_s)]$. The following lemma shows that under our conditions on the kernel, the latter property holds.

\begin{lemma}\label{lem:ustat-norm}
Suppose that the kernel $K$ is symmetric (Assumption \ref{ass:sym}), has been built in an honest manner (Assumption \ref{ass:honest}) and satisfies:
\begin{align*}
    \E\bb{\E\bb{K(x, X_1, \{Z_j\}_{j=1}^s) \mid X_1}^2} &= \eta(s) \leq 1 & \text{and} & &
    \epsilon(s,\eta(s)^2) &\rightarrow 0 \,.
\end{align*}
Then, the following holds
\begin{equation*}
    \Var \bb{\Phi_1(Z_1)} \geq \Var(Y \mid X=x)\, \eta(s) + o(\eta(s)) = \Omega \bp{\eta(s)}\,.
\end{equation*}
\end{lemma}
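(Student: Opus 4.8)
The plan is to start by computing a closed form for the one-dimensional H\'ajek projection $\Phi_1(z_1)=\E[\Phi(z_1,Z_2,\ldots,Z_s)]$. Writing $\Phi(z_1,Z_2,\ldots,Z_s)=\E_\omega[\sum_{i=1}^s K_\omega(x,X_i,\{z_1,Z_2,\ldots,Z_s\})Y_i]$ and using symmetry (Assumption~\ref{ass:sym}) together with honesty (Assumption~\ref{ass:honest}) and independence of $\omega$ from the data, I would argue that all terms with $i\neq 1$ vanish: conditioning on the covariates $X_2,\ldots,X_s$ and on $\omega$, the weight $K_\omega(x,X_i,\cdot)$ is determined and independent of $Y_i$, while $\E[Y_i\mid X_i]=\ldot{\beta}{M_0^{-1}\big(\E[\psi(Z_i;\theta(x))\mid X_i]-m(X_i;\theta(x))\big)}=0$ by the definitions of $\psi_\beta$ and $m$. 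For the $i=1$ term, honesty makes $K_\omega(x,X_1,\cdot)$ conditionally independent of $Y_1$ given $X_1$, and independence of $(X_2,\ldots,X_s)$ from $Z_1$ lets me integrate those out, giving exactly $\Phi_1(Z_1)=Y_1\,W(X_1)$ where $W(u)\triangleq\E[K(x,X_1,\{Z_j\}_{j=1}^s)\mid X_1=u]$ (the internal randomness $\omega$ folded into the expectation, as elsewhere in the paper).

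Given this identity, since $\E[Y_1\mid X_1]=0$ we get $\E[\Phi_1(Z_1)]=\E[W(X_1)\,\E[Y_1\mid X_1]]=0$, hence $\Var[\Phi_1(Z_1)]=\E[Y_1^2 W(X_1)^2]=\E[\sigma^2(X_1)\,W(X_1)^2]$ by the tower rule, where $\sigma^2(u)=\E[Y_1^2\mid X_1=u]=\Var(Y\mid X=u)$. The next step is the decomposition $\E[\sigma^2(X_1)W(X_1)^2]=\sigma^2(x)\,\E[W(X_1)^2]+\E[(\sigma^2(X_1)-\sigma^2(x))W(X_1)^2]=\sigma^2(x)\,\eta(s)+\mathrm{Err}$, using $\eta(s)=\E[W(X_1)^2]$; it then remains to show $\mathrm{Err}=o(\eta(s))$.

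To control $\mathrm{Err}$ I would split on whether $\|X_1-x\|_2\le\epsilon(s,\delta)$ for a $\delta$ to be chosen. On the near event, Lipschitzness of the conditional variance (Assumption~\ref{ass:normality}(3), which after contracting with $\beta$ and $M_0^{-1}$ costs only a factor $R^2/\lambda^2$) gives $|\sigma^2(X_1)-\sigma^2(x)|\le L\,\epsilon(s,\delta)$, so this contributes at most $L\,\epsilon(s,\delta)\,\E[W(X_1)^2]\le L\,\epsilon(s,\delta)\,\eta(s)$. On the far event, $|\sigma^2(X_1)-\sigma^2(x)|\le 2y_{\max}^2$ and $W(X_1)^2\le W(X_1)$ (since $W\le 1$), while kernel shrinkage in probability yields $\E[W(X_1)\,\mathbf{1}\{\|X_1-x\|_2>\epsilon(s,\delta)\}]=\E[K(x,X_1,\{Z_j\}_{j=1}^s)\,\mathbf{1}\{\|X_1-x\|_2>\epsilon(s,\delta)\}]\le\Pr[K(x,X_1,\{Z_j\}_{j=1}^s)>0,\ \|X_1-x\|_2>\epsilon(s,\delta)]\le\delta$, so this contributes at most $2y_{\max}^2\,\delta$. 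Thus $|\mathrm{Err}|\le L\,\epsilon(s,\delta)\,\eta(s)+2y_{\max}^2\,\delta$. Taking $\delta=\eta(s)^2$ and using $\eta(s)\le\E[W(X_1)]=1/s\to0$ together with the hypothesis $\epsilon(s,\eta(s)^2)\to0$, both terms are $o(\eta(s))$, so $\Var[\Phi_1(Z_1)]=\sigma^2(x)\,\eta(s)+o(\eta(s))$, which is $\Omega(\eta(s))$ since $\Var(Y\mid X=x)>0$. The main obstacle I anticipate is the first step: carefully justifying that the $i\neq 1$ terms vanish exactly and that $\Phi_1(Z_1)=Y_1 W(X_1)$, which hinges on combining honesty with the i.i.d.\ structure in the presence of internal randomness; the truncation estimate for $\mathrm{Err}$ is then routine.
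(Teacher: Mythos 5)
Your proposal is correct and follows essentially the same argument as the paper: the identification of the relevant term as $\E[W_1\mid X_1]\,(Y_1-\E[Y_1\mid X_1])$ via honesty and i.i.d.\ sampling, the decomposition $\E[\sigma^2(X_1)W(X_1)^2]=\sigma^2(x)\,\eta(s)+\mathrm{Err}$, and the truncation at radius $\epsilon(s,\delta)$ with $W^2\le W$, Lipschitzness of the conditional variance, and the choice $\delta=\eta(s)^2$ are exactly the paper's steps. The only (harmless) difference is at the start: you exploit the centering $\E[Y\mid X]=0$ of $\psi_\beta$ to get the exact identity $\Phi_1(Z_1)=Y_1\,W(X_1)$, whereas the paper, without using centering, writes $\Phi_1=A+B$ with $A=\E[\Phi\mid X_1]$ and $B=\E[\Phi\mid X_1,Y_1]-\E[\Phi\mid X_1]$ uncorrelated and lower bounds $\Var[\Phi_1]\ge\Var[B]$, which reduces to the same quantity.
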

\begin{proof}
    Note we can write
    \begin{equation*}
        \Phi_1(Z_1) =  \underbrace{\E\bb{\Phi(Z_1, \ldots, Z_s) \mid X_1}}_{A} + \underbrace{\E\bb{\Phi(Z_1, \ldots, Z_s) \mid X_1,Y_1} - \E\bb{\Phi(Z_1, \ldots, Z_s)\mid X_1}}_B.
    \end{equation*}
    Here, $B$ is zero mean conditional on $X_1$ and also $A$ and $B$ are uncorrelated, i.e., $\E[AB]=\E[A]\E[B]=0$. Therefore:
    \begin{equation*}
        \Var \bb{\Phi_1(Z_1)} \geq \Var\bb{B} = \Var \bb{\sum_{i=1}^s \bp{\E\bb{K(x, X_i, \{Z_j\}_{j=1}^s) Y_i \mid X_1,Y_1} - \E[K(x, X_i, \{Z_j\}_{j=1}^s) Y_i \mid X_1]}}\,.
    \end{equation*}
    For simplicity of notation let $W_i = K(x, X_1, \{Z_j\}_{j=1}^s)$ denote the random variable which corresponds to the weight of sample $i$. Note that thanks to the honesty of kernel defined in Assumption \ref{ass:honest}, $W_i$ is independent of $Y_1$ conditional on $X_1$, for $i \geq 2$. Hence all the corresponding terms in the summation are zero. Therefore, the expression inside the variance above simplifies to
    \begin{equation*}
        \E[W_1 Y_1 \mid X_1,Y_1]-\E[W_1 Y_1 \mid X_1]\,.
    \end{equation*}
    Moreover, by honesty $W_1$ is independent of $Y_1$ conditional on $X_1$. Thus, the above further simplifies to:
    \begin{equation*}
        \E[W_1\mid X_1] \, \bp{Y_1 - \E[Y_1 \mid X_1]}\,.
    \end{equation*}
     Using $\Var(G)=\E[G^2]-\E[G]^2$, this can be further rewritten as
    \begin{equation*}
        \Var\bb{\Phi_1(Z_1)} \geq \E \bb{\E[W_1 \mid X_1]^2 \bp{Y_1 - \E[Y_1\mid X_1]}^2} - \E \bb{\E[W_1 \mid X_1] (Y_1-\E[Y_1 \mid X_1])}^2\,.
    \end{equation*}
    Note that $Y_1 - \E[Y_1 \mid X_1]$ is uniformly upper bounded by some $\psi_{\max}$. Furthermore, by the symmetry of the kernel we have $\E \bb{\E[W_1 \mid X_1]} = \E[W_1] = 1/s$.\footnote{Since $\E[W_i]$ are all equal to the same value $\kappa$ and $\sum_i\E[W_i] = 1$, we get $\kappa=1/s$.} Thus the second term in the latter is of order $1/s^2$. Hence:
    \begin{equation*}
        \Var \bb{\Phi(Z_1)} \geq \E \bb{\E[\alpha_b(X_1) \mid X_1]^2 \bp{Y_1 - \E[Y_1\mid X_1]}^2} + o(1/s)\,.
    \end{equation*}
    Focusing at the first term and letting $\sigma^2(x) = \Var(Y | X=x)$, we have:
    \begin{align*}
        \E\bb{\E[W_1 \mid X_1]^2 \bp{Y_1 - \E[Y_1\mid X_1]}^2} =~& \E \bb{\E[W_1 \mid X_1]^2 \sigma^2(X_1)} \\
        =~& \E \bb{\E[W_1 \mid X_1]^2} \sigma^2(x) + \E \bb{\E[W_1 \mid X_1]^2 \bp{\sigma^2(X_1)-\sigma^2(x)}}\,.
    \end{align*}
    The goal is to prove that the second term is $o(1/s)$. For ease of notation let $V_1=\E\bb{W_1 \mid X_1}$. Then we can bound the second term as:
    \begin{align*}
        \left|\E \bb{V_1^2 \bp{\sigma^2(X_1)-\sigma^2(x)}}\right|
        \leq~& L_{mm} \epsilon(s,\delta)\, \E \bb{V_1^2\, \,1\bc{\|x-X_1\|_2 \leq \epsilon(s,\delta)}} \\
        &~~~~~~~~~~~~~~~~~+ 2 y_{\max}^2 \E \bb{V_1^2\,1 \bc{\|x-X_1\|_2 > \epsilon(s,\delta)}}\\
        \leq~& L_{mm} \epsilon(s,\delta)\, \E \bb{V_1^2} +  2 y_{\max}^2 \E \bb{V_1^2\,1 \bc{\|x-X_1\|_2 > \epsilon(s,\delta)}}\\
        \leq~& L_{mm} \epsilon(s,\delta) \eta(s) +  2 y_{\max}^2 \E \bb{V_1\,1 \bc{\|x-X_1\|_2 > \epsilon(s,\delta)}}\\
        \leq~& L_{mm} \epsilon(s,\delta) \eta(s) +  2 y_{\max}^2 \E \bb{W_1\,1 \bc{\|x-X_1\|_2 > \epsilon(s,\delta)}}\,,
    \end{align*}
    where we used the fact that $V_1 \leq 1$, the assumption that $\sigma^2(\cdot)$ is $L_{mm}$-Lipschitz, the tower rule and the definition of $\eta(s)$.
    Furthermore,
    \begin{align*}
        \E \bb{W_1 \,1 \bc{\|x-X_1\|_2 > \epsilon(s,\delta)}} 
        &\leq \Pr \bb{\|x-X_1\|_2 \geq \epsilon(s,\delta){~\text{and}~} W_1 > 0} \\
        &\leq \Pr \bb{\sup_i \bc{\|x-X_i\|_2 : W_i > 0} \geq \epsilon(s,\delta)}\,,
    \end{align*}
    which by definition is at most $\delta$. By putting $\delta = \eta(s)^2$ we obtain
    \begin{align*}
        \left|\E \bb{\E[W_1 \mid X_1]^2 \bp{\sigma^2(X_1)-\sigma^2(x)}}\right| \leq  L_{mm} \epsilon(s,\eta(s)^2) \eta(s) +  2 y_{\max}^2 \eta(s)^2 = o(\eta(s))\,,
    \end{align*}
    where we invoked our assumption that $\epsilon(s,\eta(s)^2) \rightarrow 0$. Thus we have obtained that:
    \begin{equation*}
       \Var\bb{\Phi_1(Z_1)} \geq \E \bb{\E[W_1 \mid X_1]^2} \sigma^2(x) + o(\eta(s)) \,,
    \end{equation*}
    which is exactly the form of the lower bound claimed in the statement of the lemma. This concludes the proof.
\end{proof}

\subsection{H\'{a}jek Projection Lemma for Infinite Order $U$-statistics}

The following is a small adaptation of Theorem 2 of \cite{fan2018dnn}, which we present here for completeness.
\begin{lemma}[\cite{fan2018dnn}]\label{lem:hajek}
Consider a $U$-statistic defined via a symmetric kernel $\Phi$:
\begin{equation}
    U(Z_1,\ldots, Z_n) = \binom{n}{s}^{-1}\sum_{1\leq i_1\leq \ldots \leq i_s\leq n} \Phi(Z_{i_1}, \ldots, Z_{i_s})\,,
\end{equation}
where $Z_i$ are i.i.d. random vectors and $s$ can be a function of $n$. Let $\Phi_1(z_1) = \E[\Phi(z_1, Z_2, \ldots, Z_s)]$ and $\eta_1(s)=\Var_{z_1}\bb{\Phi_1(z_1)}$. Suppose that $\Var{\Phi}$ is bounded, $n\, \eta_1(s) \rightarrow \infty$. Then:
\begin{equation}
    \frac{U(Z_1,\ldots, Z_n) - \E\bb{U}}{\sigma_n} \rightarrow_d \normal(0,1)\,,
\end{equation}
where $\sigma_n^2 = \frac{s^2}{n}\eta_1(s)$.
\end{lemma}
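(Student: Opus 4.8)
The plan is the classical H\'{a}jek-projection argument for $U$-statistics (as in \cite{fan2018dnn}), executed so that the order $s$ is permitted to grow with $n$. Write $U_n = U(Z_1,\dots,Z_n)$ and $\theta_\Phi = \E[\Phi(Z_1,\dots,Z_s)] = \E[U_n]$, and recall $\Phi_1(z_1) = \E[\Phi(z_1,Z_2,\dots,Z_s)]$, so $\eta_1(s) = \Var(\Phi_1(Z_1))$. First I would form the Hoeffding decomposition $U_n = \theta_\Phi + \sum_{c=1}^s \binom{s}{c}\, H_n^{(c)}$, where $H_n^{(c)} = \binom{n}{c}^{-1}\sum_{|T|=c} g_c(Z_T)$ for the completely degenerate canonical kernels $g_c$, and $\zeta_c = \Var(g_c(Z_1,\dots,Z_c))$. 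The $H_n^{(c)}$ are mean-zero and mutually uncorrelated, with $\Var(H_n^{(c)}) = \binom{n}{c}^{-1}\zeta_c$; moreover $g_1(z_1) = \Phi_1(z_1)-\theta_\Phi$, hence $\zeta_1 = \eta_1(s)$, and Hoeffding's variance identity $\sum_{c=1}^s \binom{s}{c}\zeta_c = \Var(\Phi)$ together with the boundedness hypothesis gives $\binom{s}{c}\zeta_c \le \Var(\Phi) \le \sigma_\Phi^2$ for each $c$. The H\'{a}jek projection $\sum_{i=1}^n\E[U_n - \theta_\Phi \mid Z_i]$ equals precisely $\widehat U_n := s\,H_n^{(1)} = \tfrac{s}{n}\sum_{i=1}^n(\Phi_1(Z_i)-\theta_\Phi)$, a normalized sum of i.i.d.\ mean-zero variables with $\Var(\widehat U_n) = s^2\binom{n}{1}^{-1}\zeta_1 = \tfrac{s^2}{n}\eta_1(s) = \sigma_n^2$.

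The crux is to show that the projection already carries the full asymptotic variance, i.e.\ $\Var(U_n - \theta_\Phi - \widehat U_n) = o(\sigma_n^2)$. Since $U_n - \theta_\Phi - \widehat U_n = \sum_{c=2}^s \binom{s}{c} H_n^{(c)}$ and the components are orthogonal, $\Var(U_n - \theta_\Phi - \widehat U_n) = \sum_{c=2}^s \binom{s}{c}^2\binom{n}{c}^{-1}\zeta_c$. Bounding $\zeta_c \le \sigma_\Phi^2/\binom{s}{c}$ and $\binom{s}{c}/\binom{n}{c}\le (s/n)^c$ (the latter valid since $s\le n$), this is at most $\sigma_\Phi^2\sum_{c\ge 2}(s/n)^c = O\!\left((s/n)^2\right)$, where we use that $s/n$ is bounded away from $1$ (true in every application of this lemma, since there $n/s\to\infty$). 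Dividing by $\sigma_n^2 = \tfrac{s^2}{n}\eta_1(s)$ gives $\Var(U_n - \theta_\Phi - \widehat U_n)/\sigma_n^2 = O\!\left(1/(n\,\eta_1(s))\right)\to 0$ by hypothesis, so Chebyshev's inequality yields $(U_n - \theta_\Phi - \widehat U_n)/\sigma_n \rightarrow_p 0$.

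It then remains to prove a central limit theorem for $\widehat U_n/\sigma_n = \tfrac{s}{n\sigma_n}\sum_{i=1}^n\xi_i$, with $\xi_i = \Phi_1(Z_i)-\theta_\Phi$ i.i.d.\ of mean $0$ and variance $\eta_1(s)$; this is a triangular array, since $\Phi$ (hence $\xi_i$) depends on $n$ through $s$. Here I would verify Lyapunov's condition: in all our applications $\Phi$ is a weighted average of the uniformly bounded scores, so $|\xi_i|\le M$ for an absolute constant, and then $\sum_{i=1}^n\E|\tfrac{s}{n}\xi_i|^3 / \sigma_n^3 = \E|\xi_1|^3/\bigl(n^{1/2}\eta_1(s)^{3/2}\bigr)\le M/\sqrt{n\,\eta_1(s)}\to 0$, whence $\widehat U_n/\sigma_n\rightarrow_d\normal(0,1)$; combining with the previous step by Slutsky's theorem gives $(U_n - \E[U_n])/\sigma_n\rightarrow_d\normal(0,1)$, as claimed. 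The main obstacle is the middle step — controlling the higher-order Hoeffding terms — and the observation that makes the two stated hypotheses exactly sufficient is to pair the variance identity $\binom{s}{c}\zeta_c\le\Var(\Phi)$ with the elementary bound $\binom{s}{c}/\binom{n}{c}\le (s/n)^c$, after which $n\,\eta_1(s)\to\infty$ is precisely what is needed.
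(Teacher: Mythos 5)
Your proposal is correct and takes essentially the same route as the paper's own proof: Hoeffding decomposition into canonical terms, identification of the H\'{a}jek projection with variance $\sigma_n^2=\frac{s^2}{n}\eta_1(s)$, an orthogonality bound showing the higher-order terms have variance $o(\sigma_n^2)$ under $n\,\eta_1(s)\to\infty$, and a CLT for the projection followed by Slutsky. The only differences are minor: the paper bounds the remainder by keeping the variance identity summed and pulling out $\binom{s}{r}/\binom{n}{r}\leq \frac{s(s-1)}{n(n-1)}$ for $r\geq 2$, which yields $\frac{s^2}{n^2}\Var(\Phi)$ directly and avoids your geometric-series step (and hence your extra, application-supplied requirement that $s/n$ stay bounded away from $1$), while your explicit Lyapunov verification for the triangular array is if anything more careful than the paper's bare appeal to the Lindeberg--L\'{e}vy CLT, though it too leans on boundedness of the kernel, which comes from the applications rather than the lemma's stated hypotheses.
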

\begin{proof}
The proof follows identical steps as the one in \cite{fan2018dnn}. We argue about the asymptotic normality of a $U$-statistic:
\begin{equation}
    U(Z_1,\ldots, Z_n) = \binom{n}{s}^{-1}\sum_{1\leq i_1\leq \ldots \leq i_s\leq n} \Phi(Z_{i_1}, \ldots, Z_{i_s})\,.
\end{equation}
Consider the following projection functions:
\begin{align*}
    \Phi_1(z_1) =~& \E[\Phi(z_1, Z_2, \ldots, Z_s)]\,, & \tilde{\Phi}_1(z_1) =~& \Phi_1(z_1) - \E\bb{\Phi}\,, \\
    \Phi_2(z_1, z_2) =~& \E[\Phi(z_1, z_2, Z_3, \ldots, Z_s) \,, & \tilde{\Phi}_2(z_1, z_2) =~& \Phi_2(z_1, z_2) - \E\bb{\Phi} \,, \\
    \vdots\\
    \Phi_s(z_1, z_2, \ldots, z_s) =~& \E[\Phi(z_1, z_2, Z_3, \ldots, Z_s) \,, &
    \tilde{\Phi}_s(z_1, z_2, \ldots, z_s) =~& \Phi_s(z_1, z_2, \ldots, z_s) - \E\bb{\Phi}\,,
\end{align*}
where $\E\bb{\Phi}=\E\bb{\Phi(Z_1,\ldots, Z_s)}$. Then we define the canonical terms of Hoeffding's $U$-statistic decomposition as:
\begin{align*}
    g_1(z_1) =~& \tilde{\Phi}_1(z_1) \,, \\
    g_2(z_1, z_2) =~& \tilde{\Phi}_2(z_1, z_2) - g_1(z_1) - g_2(z_2)\,, \\
    g_3(z_1, z_2, z_3) =~& \tilde{\Phi}_2(z_1, z_2, Z_3) - \sum_{i=1}^3 g_1(z_i) - \sum_{1\leq i<j\leq 3} g_2(z_i, z_j) \,, \\
    \vdots\\
    g_s(z_1, z_2, \ldots, z_s) =~& \tilde{\Phi}_s(z_1, z_2, \ldots, z_s) - \sum_{i=1}^s g_1(z_i) - \sum_{1\leq i < j \leq s} g_2(z_i, z_j) - \ldots\\
    &~~~~~~~~~~~~~~~~~~~~...- \sum_{1\leq i_1 < i_2 < \ldots < i_{s-1} \leq s} g_{s-1}(z_{i_1}, z_{i_2}, \ldots, z_{i_{s-1}}) \,.
\end{align*}
Subsequently the kernel of the $U$-statistic can be re-written as a function of the canonical terms:
\begin{equation}
   \tilde{\Phi}(z_1, \ldots, z_s) = \Phi(z_1, \ldots, z_s) - \E\bb{\Phi} = \sum_{i=1}^s g_1(z_i) + \sum_{1\leq i < j \leq s} g_2(z_i, z_j) + \ldots + g_s(z_1, \ldots, z_s)\,.
\end{equation}
Moreover, observe that all the canonical terms in the latter expression are un-correlated. Hence, we have:
\begin{equation}
    \Var \bb{\Phi(Z_1, \ldots, Z_n)} = \binom{s}{1} \E\bb{g_1^2} + \binom{s}{2} \E\bb{g_2^2} + \ldots + \binom{s}{s} \E\bb{g_s^2}\,.
\end{equation}
We can now re-write the $U$ statistic also as a function of canonical terms:
\begin{align*}
   U(Z_1,\ldots, Z_n) - \E\bb{U} =~& \binom{n}{s}^{-1} \sum_{1\leq i_1 < i_2 < \ldots < i_s \leq n} \tilde{\Phi}(Z_{i_1}, \ldots, Z_{i_s})\\
   =~& \binom{n}{s}^{-1} \bigg(\binom{n-1}{s-1}\sum_{i=1}^n g_1(Z_i) + \binom{n-2}{s-2} \sum_{1\leq i < j \leq n} g_2(Z_i, Z_j) + \ldots\\
    & ~~~~~~~~~~~~~~~~~~ + \binom{n-s}{s-s} \sum_{1\leq i_1 < i_2 < \ldots < i_s \leq n} g_s(Z_{i_1}, \ldots, Z_{i_s})\bigg)\,.
\end{align*}
Now we define the H\'{a}jek projection to be the leading term in the latter decomposition:
\begin{equation}
    \hat{U}(Z_1, \ldots, Z_n) = \binom{n}{s}^{-1} \binom{n-1}{s-1}\sum_{i=1}^n g_1(Z_i)\,.
\end{equation}
The variance of the Hajek projection is:
\begin{equation}
    \sigma_n^2 = \Var\bb{\hat{U}(Z_1, \ldots, Z_n)} = \frac{s^2}{n} \Var\bb{\Phi_1(z_1)} = \frac{s^2}{n}\eta_1(s)\,.
\end{equation}
The H\'{a}jek projection is the sum of independent and identically distributed terms and hence by the Lindeberg-Levy Central Limit Theorem (see e.g., \cite {billingsley2008probability,borovkov2013probability}):
\begin{equation}
    \frac{\hat{U}(Z_1,\ldots, Z_n)}{\sigma_n} \rightarrow_d \normal(0,1)\,.
\end{equation}

We now argue that the remainder term: $\frac{U - \E\bb{U} - \hat{U}}{\sigma_n}$ vanishes to zero in probability. The latter then implies that $\frac{U - \E\bb{U}}{\sigma_n} \rightarrow_d \normal(0,1)$ as desired. We will show the sufficient condition of convergence in mean square: $\frac{\E\bb{\bp{U - \E\bb{U} - \hat{U}}^2}}{\sigma_n^2}\rightarrow 0$. From an inequality due to \cite{wager2017estimation}:
\begin{align*}
    \E\bb{\bp{U - \E\bb{U} - \hat{U}}^2} =~& \binom{n}{s}^{-2} \bc{\binom{n-2}{s-2}^2 \binom{n}{2} \E[g_2^2] + \ldots + \binom{n-s}{s-s}^2 \binom{n}{s} \E[g_s^2]}\\
    =~& \sum_{r=2}^s \bc{\binom{n}{s}^{-2} \binom{n-r}{s-r}^2 \binom{n}{r} \E[g_r^2]}\\
    =~& \sum_{r=2}^s \bc{\frac{s! (n-r)!}{n! (s-r)!} \binom{s}{r}\E[g_r^2]}\\
    \leq~& \frac{s(s-1)}{n(n-1)} \sum_{r=2}^s \binom{s}{r} \E[g_r^2]\\
    \leq~& \frac{s^2}{n^2} \Var\bb{\Phi(Z_1, \ldots, Z_s)}\,.
\end{align*}
Since $\Var\bb{\Phi(Z_1, \ldots, Z_n)}$ is bounded by a constant $V^*$ and $n\,\eta_1(s) \rightarrow \infty$, by our assumption, we have:
\begin{equation}
    \frac{\E\bb{\bp{U - \E\bb{U} - \hat{U}}^2}}{\sigma_n^2}\leq \frac{\frac{s^2}{n^2} V^*}{\frac{s^2}{n}\eta_1} = \frac{V^*}{n\, \eta_1(s)} \rightarrow 0 \,.
\end{equation}
\end{proof}
\end{appendix}
\end{document}